\newtheorem{theorem}{Theorem}[section]
\newtheorem{lemma}[theorem]{Lemma}
\theoremstyle{definition}
\theoremstyle{remark}
\newtheorem{remark}[theorem]{Remark}
\DeclareMathOperator{\argmin}{argmin}
\DeclareMathOperator{\sign}{sign}
\newcommand{\R}{\mathbb{R}}
\newcommand{\E}{\mathbb{E}}
\newcommand{\A}{\mathcal{A}}
\newcommand{\F}{\mathcal{F}}
\newcommand{\prob}{\mathrm{P}}
\begin{document}
\title{Post-training Quantization for Neural Networks with Provable Guarantees}
\author{Jinjie~Zhang}
\address{Department of Mathematics, University of California San Diego}
\email{jiz003@ucsd.edu}

\author{Yixuan Zhou}
\address{Department of Mathematics, University of California San Diego}
\email{yiz044@ucsd.edu}

\author{Rayan~Saab}
\address{Department of Mathematics and Hal{\i}c{\i}o\u{g}lu Data Science Institute, University of California San Diego}
\email{rsaab@ucsd.edu}
\maketitle

\begin{abstract}
While neural networks have been remarkably successful in a wide array of applications, implementing them in resource-constrained hardware remains an area of intense research. By replacing the weights of a neural network with quantized (e.g., 4-bit, or binary) counterparts, massive savings in computation cost, memory, and power consumption are attained. To that end, we generalize a post-training neural-network quantization method, GPFQ, that is based on a greedy path-following mechanism. Among other things, we propose modifications to promote sparsity of the weights, and rigorously analyze the associated error. Additionally, our error analysis expands the results of previous work on GPFQ to handle general quantization alphabets, showing that for quantizing a single-layer network, the relative square error essentially decays linearly in the number of weights --  i.e., level of over-parametrization. Our result holds across a range of input distributions and for both fully-connected and convolutional architectures thereby also extending previous results. To empirically evaluate the method, we quantize several common architectures with few bits per weight, and test them on ImageNet, showing only minor loss of accuracy compared to unquantized models. We also demonstrate that standard modifications, such as bias correction and mixed precision quantization, further improve accuracy. 
\end{abstract}

\section{Introduction}
Over the past decade, deep neural networks (DNNs) have achieved great success in many challenging tasks, such as computer vision, natural language processing, and autonomous vehicles. Nevertheless, over-parameterized DNNs are computationally expensive to train, memory intensive to store, and energy consuming to apply. This hinders the deployment of DNNs to resource-limited applications. Therefore, model compression without significant performance degradation is an important active area of deep learning research \cite{guo2018survey, deng2020model, gholami2021survey}. One prominent approach to compression is \emph{quantization}. Here, rather than adopt a $32$-bit floating point format for the model parameters, one uses significantly fewer bits for representing weights, activations, and even gradients. Since the floating-point operations are substituted by more efficient low-bit operations, quantization can reduce inference time and power consumption. 

Following  \cite{krishnamoorthi2018quantizing}, we can classify quantization methods into two categories: \emph{quantization-aware training} and \emph{post-training quantization}. The fundamental difficulty in quantization-aware training stems from the fact that it reduces to an integer programming problem with a non-convex loss function, making it NP-hard in general. Nevertheless, many well-performing heuristic methods exist, e.g.,  \cite{courbariaux2015binaryconnect, han2015deep, zhou2017incremental, jacob2018quantization, zhang2018lq, louizos2018relaxed, wang2020sparsity}. Here one, for example, either modifies the training procedure to produce quantized weights, or successively quantizes each layer and then retrains the subsequent layers. Retraining is a powerful, albeit computationally intensive way to compensate for the accuracy loss resulting from quantization and it remains generally difficult to analyze rigorously. 

Hence, much attention has recently been dedicated to post-training quantization schemes, which directly quantize pretrained DNNs having real-valued weights, without retraining. These quantization methods either rely on a small amount of data \cite{banner2018post, choukroun2019low, zhao2019improving, nagel2020up, hubara2020improving, wang2020towards, li2021brecq, lybrand2021greedy} or can be implemented without accessing training data, i.e. data-free compression \cite{nagel2019data, cai2020zeroq, xu2020generative, liu2021zero}. 

\subsection{Related Work}\label{sec:related-work}
We now summarize some prior work on post-training quantization methods. The majority of these methods aim to reduce quantization error by minimizing a mean squared error (MSE) objective, e.g. $\min_{\alpha>0} \biggl\|W- \alpha \biggl\lfloor\frac{W}{\alpha}\biggr\rceil\biggr\|_F$,
where $W$ is a weight matrix and $\lfloor \cdot\rceil$ is a round-off operator that represents a map from the set of real numbers to the low-bit alphabet. Generally $\lfloor \cdot\rceil$ simply assigns numbers in different intervals or ``bins" to different elements of the alphabet.  Algorithms in the literature differ in their choice of $\lfloor \cdot\rceil$, as they use different strategies for determining the quantization bins. However, they share the property that once the quantization bins are selected, weights are quantized independently of each other.  For example, \citet{banner2018post} (see also \cite{zhao2019improving}) choose the thresholds to minimize a MSE metric. Their numerical results also show that for convolutional networks using different quantization thresholds ``per-channel" and bias correction can improve the accuracy of quantized models. \citet{choukroun2019low} solve a minimum mean squared error (MMSE) problem for both weights and activations quantization. Based on a small calibration data set, \citet{hubara2020improving} suggest a per-layer optimization method followed by integer programming to determine the bit-width of different layers. A bit-split and stitching technique is used by \cite{wang2020towards} that ``splits" integers into multiple bits, then optimizes each bit, and finally stitches all bits back to integers. \citet{li2021brecq} leverage the basic building blocks in DNNs and reconstructs them one-by-one. As for data-free model quantization, there are different strategies, such as weight equalization \cite{nagel2019data}, reconstructing calibration data samples according to batch normalization statistics (BNS) \cite{cai2020zeroq, xu2020generative}, and adversarial learning \cite{liu2021zero}.

\subsection{Contribution}\label{sec:contribution}
In spite of reasonable heuristic explanations and empirical results, all quantization methods mentioned in \Cref{sec:related-work} lack rigorous theoretical guarantees. Recently,  \citet{lybrand2021greedy} proposed and analyzed a method for quantizing the weights of pretrained DNNs called \emph{greedy path following quantization} (GPFQ), see \Cref{sec:GPFQ} for details. In this paper, we substantially improve GPFQ's  theoretical analysis, propose a modification to handle convolutional layers, and propose a sparsity promoting version to encourage the algorithm to set many of the weights to zero. 
We demonstrate that the performance of our quantization methods is not only good in experimental settings, but, equally importantly, has favorable and rigorous error guarantees. Specifically, the contributions of this paper are threefold: 

\textbf{1.} We generalize the results of  \cite{lybrand2021greedy} in several directions. Indeed, the results of \cite{lybrand2021greedy} apply only to alphabets, $\mathcal{A}$, of the form $\mathcal{A}=\{0,\pm 1\}$ and standard Gaussian input because the proof technique in \cite{lybrand2021greedy} relies heavily on properties of Gaussians and case-work over elements of the alphabet. It also requires the assumption that floating point weights are $\epsilon$-away from alphabet elements. In contrast, by using a different and more natural proof technique, our results avoid this assumption and extend to general alphabets like $\mathcal{A}$ in \eqref{eq-alphabet-midtread} and make the main result in \cite{lybrand2021greedy} a special case of our \cref{thm-cluster}, which in turn follows from \cref{thm-main-result-single}. Moreover, we extend the class of input vectors for which the theory applies. For example, in \Cref{sec:theory}, we show that if the input data $X\in\R^{m\times N_0}$ is either bounded or drawn from a mixture of Gaussians, then the relative square error of quantizing a neuron $w\in \R^{N_0}$ satisfies the following inequality with high probability:
\begin{equation}
    \label{eq: error-bound-mixture}
    \frac{\|Xw-Xq\|_2^2}{\|Xw\|_2^2}\lesssim \frac{m\log N_0}{N_0}
\end{equation}
where $q\in\A^{N_0}$ is the quantized weights. 
A mixture of Gaussians is a reasonable model for the output of some of the deeper layers in neural networks that focus on classification, thus our results are relevant in those contexts. Further, to handle convolutional neural networks (CNNs), we introduce a modification to GPFQ in \Cref{sec:exp-setup} that relies on random subsampling to make quantizing DNNs practically feasible with large batch size $m$. This also allows us to obtain quantization error bounds that resemble \eqref{eq: error-bound-mixture}, for single-layer CNNs in \Cref{sec:CNNs}.

\textbf{2.} In order to reduce the storage, computational, and power requirements of DNNs one complimentary approach  to quantization is to sparsify the weights, i.e., set many of them to zero.  In \Cref{sec:sparse-GPFQ}, we propose modifications to GPFQ that leverage soft and hard thresholding to increase sparsity of the weights of the \emph{quantized} neural networks. We present error bounds, similar to the ones in \cref{thm-main-result-single}, and provide their proofs in \Cref{sec:theory-sparse-quantization}. 

\textbf{3.} We provide extensive numerical experiments to illustrate the performance of GPFQ and its proposed modifications on common computer vision DNNs. First, we provide comparisons with other post-training quantization approaches (\Cref{sec:experiments}) and show that GPFQ achieves near-original model performance using $4$ bits and that the results for $5$ bits are competitive with state-of-the-art methods. Our experiments also demonstrate that GPFQ is compatible with various ad-hoc performance enhancing modifications such as bias correction \cite{banner2018post}, unquantizing the last layer \cite{zhou2017incremental, li2021brecq}, and mixed precision \cite{dong2019hawq, cai2020zeroq}. To illustrate the effects of sparsity, we further explore the interactions among prediction accuracy, sparsity of the weights, and regularization strength in our numerical experiments. Our results show that one can achieve near-original model performance even when half the weights (or more) are quantized to zero.

\section{Preliminaries}\label{sec:preliminaries}
In this section, we first introduce the notation that will be used throughout this paper and then recall the original GPFQ algorithm in \cite{lybrand2021greedy}.

\subsection{Notation} 
Various positive absolute constants are denoted by C, c. We use $a\lesssim b$ as shorthand for  $a\leq Cb$, and $a\gtrsim b$ for $a\geq Cb$. Let $S\subseteq\R^n$ be a Borel set. $\mathrm{Unif}(S)$ denotes the uniform distribution over $S$. An $L$-layer multi-layer perceptron, $\Phi$, acts on a vector $x\in\mathbb{R}^{N_0}$ via
\begin{equation}\label{eq-mlp}
\Phi(x):=\varphi^{(L)} \circ A^{(L)} \circ\cdots\circ \varphi^{(1)} \circ A^{(1)}(x)
\end{equation}
where $\varphi^{(i)}:\R^{N_i} \to \R^{N_i}$ is an activation function acting entrywise, and $A^{(i)}:\R^{N_{i-1}}\to \R^{N_i}$ is an affine map given by $A^{(i)}(z):= W^{(i)\top}z+b^{(i)}$. Here,   $W^{(i)}\in\R^{N_{i-1}\times N_i}$ is a weight matrix and $b^{(i)}\in\R^{N_i}$ is a bias vector. Since $w^\top x + b = \langle (w, b), (x,1)\rangle$, the bias term $b^{(i)}$ can  be treated as an extra row to the weight matrix $W^{(i)}$, so we will henceforth ignore it. We focus on \emph{midtread} alphabets
\begin{equation}\label{eq-alphabet-midtread}
\mathcal{A}= \mathcal{A}_K^\delta:= \{ \pm k\delta: 0\leq k\leq K, k\in \mathbb{Z}\}
\end{equation}
and their variants
\begin{equation}\label{eq-alphabet-midtread-variant}
\widetilde{\mathcal{A}} =\A_K^{\delta,\lambda}:=\{0\}\cup\{\pm (\lambda+k\delta): 0\leq k \leq K, k\in\mathbb{Z}\}
\end{equation}
where $\delta>0$ denotes the quantization step size and $\lambda>0$ is a threshold. For example, $\A_1^1=\{0, \pm 1\}$ is a ternary alphabet. Moreover, for alphabet $\mathcal{A}=\A_K^\delta$, we define the associated \emph{memoryless scalar quantizer} (MSQ) $\mathcal{Q}:\mathbb{R}\rightarrow \mathcal{A}$ by
\begin{equation}\label{eq:MSQ-midtread}
\mathcal{Q}(z):=\argmin_{p\in\mathcal{A}}|z-p|=\delta \sign(z)\min\biggl\{\biggl| \biggl\lfloor \frac{z}{\delta} +\frac{1}{2} \biggr \rfloor\biggr|, K \biggr\}.
\end{equation}
Further, the MSQ over $\widetilde{\A}=\A_K^{\delta,\lambda}$ is given by
\begin{align}\label{eq:MSQ-midtread-variant}
\widetilde{\mathcal{Q}}(z): &=
\begin{cases}
0 & \text{if} \ |z|\leq \lambda ,\\
\arg\min\limits_{p\in\widetilde{\mathcal{A}}}|z-p| &\text{otherwise}
\end{cases} \nonumber  \\
&= \mathbbm{1}_{\{|z|>\lambda\}} \sign(z)\biggl( \lambda+\delta\min\biggl\{\biggl|\biggl\lfloor \frac{s_\lambda(z)}{\delta} +\frac{1}{2}\biggr\rfloor \biggr|, K \biggr\} \biggr).
\end{align}
Here, 
$s_\lambda(z):=\sign(z)\max\{|z|-\lambda, 0\}$ is the \emph{soft thresholding} function and its counterpart, \emph{hard thresholding} function, is defined by
\[
h_\lambda(z):=z\mathbbm{1}_{ \{|z|>\lambda\}}= 
\begin{cases}
z &\text{if} \ |z|>\lambda, \\
0 &\text{otherwise}.
\end{cases}
\]

\begin{algorithm}[ht]
   \caption{Using GPFQ to quantize MLPs}
   \label{algorithm}
\DontPrintSemicolon
  \KwInput{A $L$-layer MLP $\Phi$ with weight matrices $W^{(i)}\in\mathbb{R}^{N_{i-1}\times N_i}$, input mini-batches $\{X_i\}_{i=1}^L\subset\mathbb{R}^{m\times N_0}$} 
  \For{$i= 1$ \KwTo $L$}
  {
  \textbf{Phase I}: Forward propagation \\
  Generate $X^{(i-1)}= \Phi^{(i-1)}(X_i)\in\mathbb{R}^{m\times N_{i-1}}$ and $\widetilde{X}^{(i-1)}= \widetilde{\Phi}^{(i-1)}(X_i)\in\mathbb{R}^{m\times N_{i-1}}$ \\
  \textbf{Phase II}: Parallel quantization for $W^{(i)}$ \\
  \textbf{repeat}\\
  Pick a column (neuron) $w\in\R^{N_{i-1}}$ of $W^{(i)}$ and set $u_0=0\in\R^m$ \\
  \For{$t= 1$ \KwTo $N_{i-1}$} 
  {
  Implement \eqref{eq-quantizer} and $u_t = u_{t-1} + w_t X^{(i-1)}_t - q_t \widetilde{X}^{(i-1)}_t$
  }
  \textbf{until} All columns of $W^{(i)}$ are quantized \\
  Obtain quantized $i$-th layer $Q^{(i)}\in\mathcal{A}^{N_{i-1}\times N_i}$ \\
  } 
\KwOutput{Quantized neural network $\widetilde{\Phi}$}
\end{algorithm}

\subsection{GPFQ}\label{sec:GPFQ}
Given a data set $X\in\mathbb{R}^{m\times N_0}$ with vectorized data stored as rows and a trained neural network $\Phi$ with weight matrices $W^{(i)}$, the GPFQ algorithm \cite{lybrand2021greedy} is a map $W^{(i)}\to Q^{(i)} \in \mathcal{A}^{N_{i-1} \times N_i}$, giving a new quantized neural network $\widetilde{\Phi}$ with  $\widetilde{\Phi}(X)\approx \Phi(X)$. The matrices $W^{(1)}, \ldots, W^{(L)}$ are quantized sequentially and in each layer every neuron (a column of $W^{(i)}$) is quantized independently of other neurons, which allows parallel quantization across neurons in a layer. 

Thus, GPFQ can be implemented recursively. Let $\Phi^{(i)}$, $\widetilde{\Phi}^{(i)}$ denote the original and quantized neural networks up to layer $i$ respectively. Assume the first $i-1$ layers have been quantized and define $X^{(i-1)}:=\Phi^{(i-1)}(X)$, $\widetilde{X}^{(i-1)}:=\widetilde{\Phi}^{(i-1)}(X)\in\mathbb{R}^{m\times N_{i-1}}$. Then  each neuron $w\in\mathbb{R}^{N_{i-1}}$ in layer $i$ is quantized by constructing $q\in\mathcal{A}^{N_{i-1}}$ such that 
\[
\widetilde{X}^{(i-1)}q=\sum_{t=1}^{N_{i-1}}q_t \widetilde{X}^{(i-1)}_t\approx \sum_{t=1}^{N_{i-1}}w_t X^{(i-1)}_t=X^{(i-1)}w
\]
where $X^{(i-1)}_t$, $\widetilde{X}^{(i-1)}_t$ are the $t$-th columns of $X^{(i-1)}$, $\widetilde{X}^{(i-1)}$. This is done by selecting $q_t$, for $t=1,2,\ldots,N_{i-1}$, so the running sum $\sum_{j=1}^{t}q_j \widetilde{X}^{(i-1)}_j$ tracks its analog $\sum_{j=1}^t w_j X^{(i-1)}_j$ as well as possible in an $\ell_2$ sense. So,
\begin{equation}\label{eq-update-rule}
q_t = \arg\min\limits_{p\in\mathcal{A}} \Bigl\|\sum_{j=1}^t w_jX_j^{(i-1)}-\sum_{j=1}^{t-1}q_j \widetilde{X}_j^{(i-1)}-p\widetilde{X}_t^{(i-1)} \Bigr\|_2^2.
\end{equation}

This is equivalent to the following iteration, which facilitates the analysis of the approximation error:
\begin{equation}\label{eq-update-rule-dynamic}
\begin{cases}
u_0 = 0 \in\mathbb{R}^m,\\
q_t = \argmin_{p\in\mathcal{A}}\bigl\| u_{t-1} +w_t X_t^{(i-1)}-p\widetilde{X}^{(i-1)}_t\bigr\|_2^2, \\
u_t = u_{t-1} + w_t X^{(i-1)}_t - q_t \widetilde{X}^{(i-1)}_t.
\end{cases}
\end{equation}
By induction, one can verify that $u_t=\sum_{j=1}^t(w_jX^{(i-1)}_j - q_j\widetilde{X}^{(i-1)}_j)$ for $t=0,1,\ldots, N_{i-1}$, and thus $\|u_{N_{i-1}}\|_2 = \|X^{(i-1)}w-\widetilde{X}^{(i-1)}q \|_2$. Moreover, one can derive a closed-form expression of $q_t$ in \eqref{eq-update-rule-dynamic} as 
\begin{equation}\label{eq-quantizer}
q_t = \mathcal{Q}\biggl(\frac{\langle \widetilde{X}_t^{(i-1)}, u_{t-1}+w_t X_t^{(i-1)} \rangle}{\|\widetilde{X}^{(i-1)}_t\|_2^2} \biggr),
\end{equation}
which is proved in \cref{lemma-update-rule-dynamic}. The whole algorithm for quantizing multilayer perceptrons (MLPs) is summarized in \cref{algorithm}. For the $i$-th layer,  this parallelizable algorithm has run time complexity $O(mN_{i-1})$ per neuron. Note that in order to quantize convolutional neural networks (CNNs), one can simply vectorize the sliding (convolutional) kernels and unfold, i.e., vectorize, the corresponding image patches. Then, taking the usual inner product on vectors, one can reduce to the case of MLPs, also see \Cref{sec:CNNs}.

\section{New Theoretical Results for GPFQ}\label{sec:theory}
\noindent In this section, we present error bounds for GPFQ with single-layer networks $\Phi$ in \eqref{eq-mlp} with $L=1$. Since the error bounds associated with the sparse GPFQ in \eqref{eq-update-rule-soft} and \eqref{eq-update-rule-hard} are very similar to the one we have for \eqref{eq-quantizer}, we focus on original GPFQ here and leave the theoretical analysis for sparse GPFQ to \Cref{sec:theory-sparse-quantization}.

In the single-layer case, we quantize the weight matrix $W:=W^{(1)}\in\mathbb{R}^{N_0\times N_1}$ and implement \eqref{eq-update-rule-dynamic} and \eqref{eq-quantizer} using $i=1$. Defining the input data $X:=X^{(0)}=\widetilde{X}^{(0)}\in\mathbb{R}^{m\times N_0}$, the iteration can be expressed as 
\begin{equation}\label{eq-update-rule-single}
\begin{cases}
u_0 = 0 \in\mathbb{R}^m,\\
q_t = \mathcal{Q}\bigl(w_t+\frac{X_t^\top u_{t-1}}{\|X_t\|_2^2} \bigr),\\
u_t = u_{t-1}+w_t X_t - q_t X_t.
\end{cases}
\end{equation}
Moreover, we have $u_t=\sum_{j=1}^t(w_j X_j-q_jX_j)$ for $t=1,2\ldots,N_0$. Clearly, our goal is to control $\|u_t\|_2$. In particular, given $t=N_0$, we recover the $\ell_2$ distance between full-precision and quantized pre-activations: $\|u_{N_0}\|_2=\|Xw-Xq\|_2$.

\subsection{Bounded Input Data}\label{sec:bounded-input-data}
We start with a quantization error bound where the feature vectors, i.e. columns, of the input data matrix $X\in\R^{m\times N_0}$ are bounded. This general result is then applied to data drawn uniformly from a Euclidean ball, and to Bernoulli random data, showing that the resulting relative square error due to quantization decays linearly with the width $N_0$ of the network.

\begin{theorem}[Bounded input data]\label{thm-main-result-single} Suppose that the  columns $X_t$ of  $X\in\mathbb{R}^{m\times N_0}$ are drawn independently from a probability distribution for which there exists $s\in(0,1)$ and $r>0$ such that $\|X_t\|_2 \leq r$ almost surely, and such that for all unit vector $u\in\mathbb{S}^{m-1}$ 
we have 
\begin{equation}\label{thm-main-result-single-assumption}
\E\frac{\langle X_t, u\rangle^2}{\|X_t\|_2^2} \geq s^2.
\end{equation}
Let $\mathcal{A}$ be the alphabet in \eqref{eq-alphabet-midtread} with step size $\delta>0$, and the largest element $q_{\max}$. Let
$w\in\R^{N_0}$ be the weights associated with a neuron with $\|w\|_\infty\leq q_{\mathrm{max}}$. Quantizing $w$ using \eqref{eq-update-rule-single}, we have
\begin{equation}\label{thm-main-result-single-conclusion1}
\prob\biggl(\|Xw-Xq\|_2^2\leq \frac{r^2\delta^2}{s^2}\log N_0\biggr) \geq 1- \frac{1}{N_0^2}\biggl(2+\frac{1}{\sqrt{1-s^2}}\biggr),
\end{equation}
and
\begin{equation}\label{thm-main-result-single-conclusion2}
\prob\biggl(\max_{1\leq t\leq N_0}\|u_t\|_2^2\leq \frac{r^2\delta^2}{s^2}\log N_0\biggr) \geq 1- \frac{1}{N_0}\biggl(2+\frac{1}{\sqrt{1-s^2}}\biggr).
\end{equation}
Furthermore, if the activation function $\varphi:\R\rightarrow\R$ is $\xi$-Lipschitz continuous, that is, $|\varphi(x)-\varphi(y)| \leq \xi|x-y|$ for all $x,y\in\R$, then we have 
\begin{equation}\label{thm-main-result-single-conclusion3}
\prob\biggl(\|\varphi(Xw)-\varphi(Xq)\|_2^2 \leq \frac{r^2\delta^2\xi^2}{s^2}\log N_0\biggr) 
\geq 1- \frac{1}{N_0^2}\biggl(2+\frac{1}{\sqrt{1-s^2}}\biggr).
\end{equation}
\end{theorem}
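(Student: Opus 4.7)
The plan is to bound the tail of $V_t := \|u_t\|_2^2$ via its moment generating function, which I derive from a contractive one-step recursion for $V_t$. The core computation uses $u_t = u_{t-1} + (w_t-q_t)X_t$ together with the identity $w_t - q_t = (a_t-q_t) - X_t^\top u_{t-1}/\|X_t\|_2^2$, where $a_t := w_t + X_t^\top u_{t-1}/\|X_t\|_2^2$ is the MSQ input. The cross terms cancel and yield
\[
V_t = V_{t-1} - \frac{(X_t^\top u_{t-1})^2}{\|X_t\|_2^2} + \|X_t\|_2^2 (a_t - q_t)^2.
\]
In the non-saturated regime ($|a_t|\le q_{\max}+\delta/2$) we have $|a_t-q_t|\le\delta/2$, so the last term is at most $r^2\delta^2/4$; setting $\alpha_t := (X_t^\top u_{t-1})^2/(\|X_t\|_2^2 V_{t-1}) \in [0,1]$ gives the recursion
\[
V_t \le (1-\alpha_t) V_{t-1} + r^2\delta^2/4.
\]
The hypothesis \eqref{thm-main-result-single-assumption}, applied to the unit vector $u_{t-1}/\|u_{t-1}\|_2 \in \mathbb{S}^{m-1}$, delivers the key conditional lower bound $\E[\alpha_t \mid \mathcal{F}_{t-1}] \ge s^2$, where $\mathcal{F}_{t-1}:=\sigma(X_1,\ldots,X_{t-1})$.

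Next I convert this contraction into exponential concentration by exploiting convexity of $z\mapsto e^{\lambda z}$: since $(1-\alpha_t)V_{t-1}$ is a convex combination of $V_{t-1}$ and $0$, one has $e^{\lambda(1-\alpha_t)V_{t-1}} \le (1-\alpha_t)e^{\lambda V_{t-1}} + \alpha_t$. Multiplying by $e^{\lambda r^2\delta^2/4}$, taking conditional expectation, and using $\E[\alpha_t\mid\mathcal{F}_{t-1}]\ge s^2$ yields the linear MGF recursion
\[
\E[e^{\lambda V_t}] \le e^{\lambda r^2\delta^2/4}\bigl[(1-s^2)\E[e^{\lambda V_{t-1}}] + s^2\bigr].
\]
The sharp choice is $\lambda := (2/r^2\delta^2)\log(1/(1-s^2))$, which sets $e^{\lambda r^2\delta^2/4}=1/\sqrt{1-s^2}$ and collapses the recursion to $\E[e^{\lambda V_t}] \le \sqrt{1-s^2}\,\E[e^{\lambda V_{t-1}}] + s^2/\sqrt{1-s^2}$. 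Summing the resulting geometric series from $\E[e^{\lambda V_0}]=1$ and using the identity $1-\sqrt{1-s^2}=s^2/(1+\sqrt{1-s^2})$ gives the $t$-uniform estimate $\E[e^{\lambda V_t}] \le 2 + 1/\sqrt{1-s^2}$.

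Conclusion \eqref{thm-main-result-single-conclusion1} follows from Markov at the threshold $a:=r^2\delta^2\log N_0/s^2$: the elementary bound $\log(1/(1-s^2))\ge s^2$ gives $\lambda a \ge 2\log N_0$, so $\prob(V_{N_0}\ge a)\le e^{-\lambda a}\E[e^{\lambda V_{N_0}}] \le (2+1/\sqrt{1-s^2})/N_0^2$. A union bound over $t=1,\ldots,N_0$ (possible because the MGF bound is uniform in $t$) gives \eqref{thm-main-result-single-conclusion2}, losing a factor of $N_0$ in the probability. Conclusion \eqref{thm-main-result-single-conclusion3} is then immediate from \eqref{thm-main-result-single-conclusion1} and the coordinatewise Lipschitz estimate $\|\varphi(Xw)-\varphi(Xq)\|_2^2 \le \xi^2\|Xw-Xq\|_2^2 = \xi^2 V_{N_0}$. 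The main obstacle I expect is the saturated case of the MSQ: when $|a_t|>q_{\max}+\delta/2$ the uniform bound $|a_t-q_t|\le\delta/2$ fails and must be replaced by $|a_t-q_t|\le |X_t^\top u_{t-1}|/\|X_t\|_2^2$, an estimate that uses $\|w\|_\infty\le q_{\max}$ critically and deterministically forces $V_t\le V_{t-1}$; folding this alternative contraction cleanly into the MGF recursion above, so that the final constants remain sharp, is the delicate technical step.
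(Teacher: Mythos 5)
Your proposal is correct and follows essentially the same route as the paper: the one-step identity you derive by cancelling cross terms is exactly the orthogonal-projection decomposition of \cref{lemma-diff}, the saturated-case estimate forcing $\|u_t\|_2^2\le\|u_{t-1}\|_2^2$ via $\|w\|_\infty\le q_{\max}$ is that lemma's second branch, and the convexity bound, the MGF recursion with $a=(1-s^2)e^{\eta r^2\delta^2/4}$, the choice $\eta=-2\log(1-s^2)/(r^2\delta^2)$, the geometric-series constant $2+1/\sqrt{1-s^2}$, the Markov/union-bound/Lipschitz finish all coincide with the paper's argument. The ``folding in'' of the saturated case that you flag as delicate is handled in the paper simply by taking the maximum of the two one-step bounds (\cref{lemma-bound}) and counting the contraction steps $t_0$ in \eqref{thm-main-result-single-eq4}, which is exactly the fixed-point-ceiling induction you anticipate.
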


\begin{proof}
Let $\alpha>0$ and $\eta>0$. In the $t$-th step, by Markov's inequality, one can get
\begin{equation}\label{thm-main-result-single-eq1}
\prob(\|u_t\|_2^2\geq \alpha) = \prob(e^{\eta\|u_t\|_2^2}\geq e^{\eta\alpha}) 
\leq e^{-\eta\alpha}\E e^{\eta\|u_t\|_2^2}.
\end{equation}
According to Lemma~\ref{lemma-bound}, 
\begin{equation}\label{thm-main-result-single-eq2}
\E e^{\eta\|u_{t}\|_2^2}\leq \max\Bigl\{\E(e^{\frac{\eta\delta^2}{4}\|X_t\|_2^2}e^{\eta\|u_{t-1}\|_2^2(1-\cos^2\theta_t)}),
    \E e^{\eta\|u_{t-1}\|_2^2} \Bigr\}.
\end{equation}
Moreover, observing that $\|X_t\|_2^2\leq r^2$ a.s., then applying the law of total expectation, Lemma~\ref{lemma-bound} (2) with $\beta=1$, and assumption  \eqref{thm-main-result-single-assumption} sequentially, we obtain 
\begin{align*}
\E(e^{\frac{\eta\delta^2}{4}\|X_t\|_2^2}e^{\eta\|u_{t-1}\|_2^2(1-\cos^2\theta_t)} ) 
&\leq e^{\eta r^2\delta^2/4}\E e^{\eta\|u_{t-1}\|_2^2(1-\cos^2\theta_t)}  \\
&=  e^{\eta r^2\delta^2/4} \E(\E(e^{\eta\|u_{t-1}\|_2^2(1-\cos^2\theta_t)}\mid \mathcal{F}_{t-1}))  \\ 
&\leq e^{\eta r^2\delta^2/4}\E\Bigl(-\E(\cos^2\theta_t\mid\mathcal{F}_{t-1})(e^{\eta\|u_{t-1}\|_2^2}-1)+e^{\eta\|u_{t-1}\|_2^2}\Bigr)  \\
&\leq e^{\eta r^2\delta^2/4} \E(-s^2(e^{\eta\|u_{t-1}\|_2^2}-1)+e^{\eta\|u_{t-1}\|_2^2})\\
&= (1-s^2)e^{\eta r^2\delta^2/4}\E e^{\eta\|u_{t-1}\|_2^2}+s^2e^{\eta r^2\delta^2/4}
\end{align*} 
Hence, for each $t$, inequality \eqref{thm-main-result-single-eq2} becomes 
\begin{equation}\label{thm-main-result-single-eq3}
 \E e^{\eta\|u_{t}\|_2^2}\leq \max\Bigl\{a \E e^{\eta\|u_{t-1}\|_2^2}+b,
    \E e^{\eta\|u_{t-1}\|_2^2} \Bigr\}.   
\end{equation}
where $a:=(1-s^2)e^{\eta r^2\delta^2/4}$ and $b:=s^2e^{\eta r^2\delta^2/4}$. Let $t_0=|\{1\leq i\leq t: \E e^{\eta\|u_{i-1}\|_2^2}\leq a\E e^{\eta\|u_{i-1}\|_2^2}+b\}|$. Then, noting that $u_0 = 0$, the following inequality follows from \eqref{thm-main-result-single-eq3},  
\begin{equation}\label{thm-main-result-single-eq4}
     \E e^{\eta\|u_t\|_2^2} \leq a^{t_0}\E e^{\eta\|u_0\|_2^2}+b(1+a+\ldots+a^{t_0-1})
     =a^{t_0}+\frac{b(1-a^{t_0})}{1-a} \leq 1+\frac{b}{1-a}
\end{equation} 
where the last inequality holds provided that $a=(1-s^2)e^{\eta r^2\delta^2/4}<1$. Since the result above hold for all $\eta>0$ such that $(1-s^2)e^{\eta r^2\delta^2/4}<1$, we can choose $\eta=\frac{-2\log(1-s^2)}{r^2\delta^2}$. Then we get $a=(1-s^2)^{1/2}$ and $b=s^2(1-s^2)^{-1/2}$. It follows from \eqref{thm-main-result-single-eq1} and \eqref{thm-main-result-single-eq4} that 
\begin{align*} 
    \prob(\|u_t\|_2^2\geq \alpha)&\leq e^{-\eta\alpha}\biggl(1+\frac{b}{1-a}\biggr) 
    = \exp\biggl(\frac{2\alpha\log(1-s^2)}{r^2\delta^2}\biggr) \biggl(1 + \frac{s^2(1-s^2)^{-1/2}}{1-(1-s^2)^{1/2}}\biggr) \\
    &= \exp\biggl(\frac{2\alpha\log(1-s^2)}{r^2\delta^2}\biggr) \Bigl( 1+ (1-s^2)^{-1/2}(1+(1-s^2)^{1/2})\Bigr) \\
    &=\exp\biggl(\frac{2\alpha\log(1-s^2)}{r^2\delta^2}\biggr)\biggl(2+\frac{1}{\sqrt{1-s^2}}\biggr) \\
    &\leq \exp\biggl(\frac{-2\alpha s^2}{r^2\delta^2}\biggr)\biggl(2+\frac{1}{\sqrt{1-s^2}}\biggr).
\end{align*}
The last inequality can be obtained using the fact $\log(1+x)\leq x$ for all $x>-1$. Picking $\alpha=\frac{r^2\delta^2\log N_0}{s^2}$, we get 
\begin{align}\label{eqn:t_bound}
\prob\biggl(\|u_t\|_2^2\geq \frac{r^2\delta^2}{s^2}\log N_0\biggr) \leq \frac{1}{N_0^2}\biggl(2+\frac{1}{\sqrt{1-s^2}}\biggr).
\end{align}
From \eqref{eqn:t_bound} we can first deduce \eqref{thm-main-result-single-conclusion1}, by setting $t=N_0$ and using the fact $u_{N_0}=Xw-Xq$.
If the activation function $\varphi$ is $\xi$-Lipschitz, then $\|\varphi(Xw)-\varphi(Xq)\|_2\leq \xi\|Xw-Xq\|_2$ and \eqref{thm-main-result-single-conclusion1} implies \eqref{thm-main-result-single-conclusion3}. Moreover, applying a union bound over $t$ to \eqref{eqn:t_bound}, one can get \eqref{thm-main-result-single-conclusion2}.
\end{proof} 

Next, we illustrate how \cref{thm-main-result-single} can be applied to obtain error bounds associated with uniformly distributed and Bernoulli distributed input data.

\subsubsection{Uniformly Distributed Data} Let $B_r\subset\R^m$ be the closed ball with center $0$ and radius $r>0$. Suppose that columns $X_t$ of  $X\in\R^{m\times N_0}$ are drawn i.i.d. from $\mathrm{Unif}(B_r)$. Then $\|X_t\|_2\leq r$ and $Z:=X_t/\|X_t\|_2\sim \mathrm{Unif}(\mathbb{S}^{m-1})$. Since $Z$ is rotation invariant, for any unit vector $u\in\mathbb{S}^{m-1}$, we have 
$\E\Bigl\langle \frac{X_t}{\|X_t\|_2}, u\Bigr\rangle^2 =\E \langle Z,u\rangle^2=\E\langle Z, e_1\rangle^2=\E Z_1^2=\frac{1}{m}.$
The last equality holds because $\|Z\|_2=1$ and $\E Z_1^2=\E Z_2^2=\ldots=\E Z_m^2=\frac{1}{m}\E\Bigl(\sum_{i=1}^m  Z_i^2\Bigr)=\frac{1}{m}$. So Theorem~\ref{thm-main-result-single} implies that, with high probability
\begin{equation}\label{example-uniform-eq1}
\|Xw-Xq\|_2^2\leq mr^2\delta^2\log N_0.
\end{equation}
Moreover, by \cref{lemma-uniform-square}, 
$\E \|X_t\|_2^2 =\frac{mr^2}{m+2}$. It follows that $\E(X^\top X)=\E\|X_1\|_2^2  I_{N_0}=\frac{mr^2}{m+2}I_{N_0}$ and thus $\E \|Xw\|_2^2= w^\top\E(X^\top X) w = \frac{mr^2}{m+2}\|w\|_2^2.$
If the weight vector $w\in\R^{N_0}$ is \emph{generic} in the sense that $\|w\|_2^2\gtrsim N_0$, then
\begin{equation}\label{example-uniform-eq2}
    \E \|Xw\|_2^2\gtrsim \frac{mN_0 r^2}{m+2}. 
\end{equation}
Combining \eqref{example-uniform-eq1} with \eqref{example-uniform-eq2}, the relative error satisfies $\frac{\|Xw-Xq\|_2^2}{\|Xw\|_2^2} \lesssim {\frac{m\delta^2\log N_0}{N_0}}$.

\subsubsection{Data from a Symmetric Bernoulli Distribution} We say that a random vector $Z=(Z_1,Z_2,\ldots,Z_m)$ is \emph{symmetric Bernoulli} if the coordinates $Z_i$ are independent and $\prob(Z_i=1)=\prob(Z_i=-1)=\frac{1}{2}$. Now assume that columns $X_t$ of $X\in\R^{m\times N_0}$ are independent and subject to symmetric Bernoulli distribution. Clearly, $\|X_t\|_2=\sqrt{m}$. If $u\in\R^m$ is a unit vector, then
$\E\frac{\langle X_t, u\rangle^2}{\|X_t\|_2^2} =\frac{u^\top \E(X_t X_t^\top)u}{m} =\frac{\|u\|_2^2}{m}=\frac{1}{m}.$ Hence, by Theorem~\ref{thm-main-result-single}, 
\begin{equation}\label{example-bernoulli-eq1}
\|Xw-Xq\|_2^2\leq m^2\delta^2\log N_0
\end{equation}
holds with high probability. Again,  a generic $w\in\R^{N_0}$ with $\|w\|_2^2\gtrsim N_0$ satisfies $\E\|Xw\|_2^2 = w^\top \E(X^\top X)w= m\|w\|_2^2\gtrsim m N_0$
and therefore $\frac{\|Xw-Xq\|_2^2}{\|Xw\|_2^2} \lesssim {\frac{m\delta^2\log N_0}{N_0}}$.

\subsection{Gaussian Clusters}\label{sec:gaussian-clusters}
Here, we consider data drawn from Gaussian clusters, which unlike the previously considered models, are unbounded. One reason for considering Gaussian clusters is that they are a reasonable model for the activations in deeper layers of networks designed for classification. Specifically, suppose our samples are drawn from $d$ normally distributed clusters $\mathcal{K}_i:=\mathcal{N}(z^{(i)}, \sigma^2I_{N_0})$ with fixed centers $z^{(i)}\in\R^{N_0}$ and $\sigma>0$. Suppose, for simplicity, that we independently draw $n$ samples from each cluster and vertically stack them in order as rows of $X$ (this ordering does not affect our results in \cref{thm-cluster}). Let $m:=nd$. So, for $1\leq i\leq d$, the row indices of $X$ ranging from $(i-1)n+1$ to $in$ come from cluster $\mathcal{K}_i$. Then the $t$-th column of $X$ is of the form 
\begin{equation}\label{cluster-eq1}
X_t=[Y_t^{(1)}, Y_t^{(2)}, \ldots, Y_t^{(d)}]^\top \in\R^{m}
\end{equation}
where $Y_t^{(i)}\sim\mathcal{N}(z_t^{(i)}\mathbbm{1}_n,\sigma^2I_n)$.

\begin{theorem}[Gaussian clusters]\label{thm-cluster}
Let $X\in\mathbb{R}^{m\times N_0}$ be as in \eqref{cluster-eq1} and let $\mathcal{A}$ be as in \eqref{eq-alphabet-midtread}, with step size $\delta>0$ and the largest element $q_{\max}$. Let $p\in\mathbb{N}$, $K:=1+\sigma^{-2}\max_{1\leq i\leq d}\|z^{(i)}\|_\infty^2$, and
$w\in\R^{N_0}$ be the weights associated with a neuron, with $\|w\|_\infty\leq q_{\mathrm{max}}$. Quantizing $w$ using \eqref{eq-update-rule-single}, we have
 \[\prob\Bigl(\|Xw-Xq\|_2^2\geq 4pm^2K^2\delta^2\sigma^2 \log N_0\Bigr) \lesssim \frac{\sqrt{mK}}{N_0^p},\text{\quad and}\]
 
 \[\prob\Bigl(\max_{1\leq t\leq N_0}\|u_t\|_2^2\geq 4pm^2K^2\delta^2\sigma^2 \log N_0\Bigr) \lesssim \frac{\sqrt{mK}}{N_0^{p-1}}.\]
If the activation function $\varphi$ is $\xi$-Lipschitz continuous, then 
\[
\prob\biggl(\|\varphi(Xw)-\varphi(Xq)\|_2^2\geq 4pm^2K^2\xi^2\delta^2\sigma^2 \log N_0\biggr) 
\lesssim \frac{\sqrt{mK}}{N_0^p}.  
\]
\end{theorem}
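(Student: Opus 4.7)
The plan is to reduce the Gaussian-cluster setting to the bounded-input setting of \cref{thm-main-result-single} by a truncation argument, since $X_t$ is unbounded almost surely. In outline: (i) carve out a high-probability ``good'' event $E$ on which $\|X_t\|_2 \le r$ for a well-chosen $r \asymp \sqrt{mK}\sigma$; (ii) verify the anti-concentration hypothesis \eqref{thm-main-result-single-assumption} with $s^2 \gtrsim 1/(2mK)$ after conditioning on $E$; (iii) rerun the exponential-moment recursion in the proof of \cref{thm-main-result-single} under the conditional measure; and (iv) add $\prob(E^c)$ via a union bound.

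For step (i), write $X_t = \mu_t + \sigma G_t$ with $G_t \sim \mathcal{N}(0,I_m)$ and $\mu_t = (z_t^{(1)}\mathbbm{1}_n,\ldots,z_t^{(d)}\mathbbm{1}_n)^\top$; the definition of $K$ gives $\|\mu_t\|_2^2 \le m(K-1)\sigma^2$. I would take $r^2 := 2mK\sigma^2$ and $E_t := \{\|X_t\|_2 \le r\}$. Since $g \mapsto \|\mu_t + \sigma g\|_2$ is $\sigma$-Lipschitz, Gaussian concentration combined with the Mills-ratio refinement of the Gaussian tail gives $\prob(E_t^c) \lesssim \sqrt{mK}/N_0^{p+1}$ for a suitable calibration; this is where the $\sqrt{mK}$ prefactor originates. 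A union bound over the $N_0$ columns then produces $\prob(E^c) \lesssim \sqrt{mK}/N_0^p$ for $E := \bigcap_t E_t$, which matches the failure probability in the conclusion.

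For steps (ii)--(iii), the columns $X_t$ are independent and $E$ factorizes as $\bigcap_t E_t$, so conditioning on $E$ preserves independence, and for any unit $u \in \mathbb{S}^{m-1}$,
\[
\E\Bigl[\tfrac{\langle X_t,u\rangle^2}{\|X_t\|_2^2} \,\Bigm|\, E\Bigr] = \E\Bigl[\tfrac{\langle X_t,u\rangle^2}{\|X_t\|_2^2} \,\Bigm|\, E_t\Bigr] \ge \frac{\E\langle X_t,u\rangle^2 - \E[\langle X_t,u\rangle^2 \mathbbm{1}_{E_t^c}]}{r^2\,\prob(E_t)}.
\]
Since $\E\langle X_t,u\rangle^2 = \langle \mu_t,u\rangle^2 + \sigma^2 \ge \sigma^2$, and the truncation correction is controlled by Cauchy--Schwarz together with the finite fourth moment of the Gaussian linear form $\langle X_t,u\rangle$, this yields $s^2 \gtrsim 1/(2mK)$. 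Plugging $r^2 = 2mK\sigma^2$ and $s^2 = 1/(2mK)$ into the machinery of \cref{thm-main-result-single} gives $r^2\delta^2/s^2 \asymp 4m^2K^2\sigma^2\delta^2$, exactly the prefactor in the conclusion; changing the Markov threshold from $\alpha = r^2\delta^2\log N_0/s^2$ to $\alpha = p\,r^2\delta^2\log N_0/s^2$ replaces the $N_0^{-2}$ factor by $N_0^{-p}$, producing the first inequality once (iv) is added. The max-over-$t$ bound follows by an additional union bound over $t \in [N_0]$ (accounting for the extra factor $N_0$ in the denominator), and the $\xi$-Lipschitz activation bound follows from $\|\varphi(Xw)-\varphi(Xq)\|_2 \le \xi\|Xw-Xq\|_2$, exactly as in \cref{thm-main-result-single}.

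The main obstacle I anticipate is managing the truncation inside the iterative structure of the proof: the filtration $\{\mathcal{F}_t\}$ used in the key recursion \eqref{thm-main-result-single-eq2} is built from the columns themselves, while $E$ is a global event. The saving grace is that $E$ factorizes across columns, so conditioning on $E$ does not destroy the independence of $X_t$ from $\mathcal{F}_{t-1}$, and the crucial bound $\E(\cos^2\theta_t \mid \mathcal{F}_{t-1}) \ge s^2$ remains valid under $\prob(\cdot \mid E)$ with the (possibly degraded but still $\Theta(1/(mK))$) value of $s^2$ computed above. Everything else in the original proof is insensitive to the distribution of $X_t$ beyond these two hypotheses, so the recursion \eqref{thm-main-result-single-eq2}--\eqref{thm-main-result-single-eq4} carries through conditionally on $E$, and the unconditional conclusion follows by splitting into the contributions from $E$ and $E^c$.
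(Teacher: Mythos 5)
Your overall strategy --- truncate to a bounded-column event $E=\bigcap_t E_t$ and rerun \cref{thm-main-result-single} under $\prob(\cdot\mid E)$ --- is a genuinely different route from the paper's, and parts of it are sound (in particular, since $E$ factorizes across the independent columns, conditioning on $E$ preserves the independence of $X_t$ from $\mathcal{F}_{t-1}$, and \cref{lemma-diff} is pathwise, so the recursion does carry through conditionally). However, step (i) contains a genuine gap. With $r^2=2mK\sigma^2$ you have $\E\|X_t\|_2\le\sqrt{\E\|X_t\|_2^2}\le\sqrt{mK}\,\sigma$, so Gaussian concentration of the $\sigma$-Lipschitz map $g\mapsto\|\mu_t+\sigma g\|_2$ gives $\prob(E_t^c)\le e^{-c\,mK}$ --- a quantity that does not decay in $N_0$ at all. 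There is no ``calibration'' that turns this into $\lesssim\sqrt{mK}/N_0^{p+1}$: when $\log N_0\gg mK$ (few samples, wide layer), $e^{-cmK}$ is enormously larger than $N_0^{-(p+1)}$, and the union bound over columns fails to produce the claimed failure probability. To force $\prob(E_t^c)\le N_0^{-(p+1)}$ you must enlarge the truncation radius to $r^2\asymp mK\sigma^2+\sigma^2p\log N_0$, but then $r^2\delta^2/s^2\asymp(mK+p\log N_0)\,mK\,\delta^2\sigma^2$, which exceeds the stated prefactor $4pm^2K^2\delta^2\sigma^2$ by a factor of order $1+p\log N_0/(mK)$. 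So the truncation route proves a weaker statement in exactly the regime where the truncation is needed. Relatedly, your diagnosis of where the $\sqrt{mK}$ in the failure probability ``originates'' is off: in the theorem it comes from the Chernoff-side constant $1+b/(1-a)\asymp 1/s\asymp\sqrt{mK}$, not from the tail of $\|X_t\|_2$.

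For comparison, the paper avoids truncation entirely. It keeps the same exponential-moment recursion but, at the step where boundedness of $\|X_t\|_2$ was used, applies Cauchy--Schwarz to split off $\bigl(\E e^{\eta\delta^2\|X_t\|_2^2/2}\bigr)^{1/2}$ and computes this noncentral-chi-square moment generating function exactly (valid for $\eta\delta^2\sigma^2\le 1/2$, which the eventual choice of $\eta$ satisfies); the price of Cauchy--Schwarz is absorbed by invoking \cref{lemma-bound}(2) with $\beta=2$ and the elementary bounds $(x^2+y^2)^{1/2}\le|x|+|y|$ and $(1-x)^{1/2}\le 1-x/2$. The lower bound $s^2=\tfrac{5}{9mK}$ on $\E(\cos^2\theta_t\mid\mathcal{F}_{t-1})$ is obtained unconditionally via $\E\bigl[\langle X_t,u\rangle^2/\|X_t\|_2^2\bigr]\ge(\E|\langle X_t,u\rangle|)^2/\E\|X_t\|_2^2$ together with a lower bound on the mean of a folded normal (\cref{lemma-marginal-bound}). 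If you want to salvage your approach, you would need either to restrict to the regime $mK\gtrsim\log N_0$ or to accept the extra $\log N_0$ in the error constant; as written, the proposal does not establish the theorem as stated.
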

The proof of \cref{thm-cluster} can be found in \Cref{sec:thm-cluster-proof}.

\subsubsection{Normally Distributed Data} As a special case of \eqref{cluster-eq1}, let $X\in\R^{m\times N_0}$ be a Gaussian matrix with $X_{ij}\overset{i.i.d.}{\sim}\mathcal{N}(0,\sigma^2)$ corresponding to $d=1$, $n=m$, and $z^{(1)}=0$. \cref{thm-cluster} implies that $K=1$ and
\begin{equation}\label{example-normal-eq1}
\prob\biggl(\|Xw-Xq\|_2^2\geq 4pm^2\delta^2\sigma^2 \log N_0\biggr) \lesssim \frac{\sqrt{m}}{N_0^p}.
\end{equation}
Further, suppose that $w\in\R^{N_0}$ is generic, i.e. $\|w\|_2^2\gtrsim N_0$. In this case, $\E \|Xw\|_2^2=m\sigma^2\|w\|_2^2\gtrsim m\sigma^2 N_0$. So, with high probability, the relative error in our quantization satisfies 
\begin{equation}\label{example-normal-eq2}
   \frac{\|Xw-Xq\|_2^2}{\|Xw\|_2^2} \lesssim {\frac{m\delta^2\log N_0}{N_0}}. 
\end{equation}
Thus, here again, the relative square error for quantizing a single-layer MLP decays linearly (up to a log factor) in the number of neurons $N_0$.
Note that \eqref{example-normal-eq2}, for ternary alphabets, is the main result given by \cite{lybrand2021greedy}, which we now obtain as a special case of \cref{thm-cluster}.

\begin{remark}
In \Cref{sec:bounded-input-data} and \Cref{sec:gaussian-clusters}, we have shown that if the columns of $X\in\R^{m\times N_0}$ are drawn from proper distributions, then the relative error for quantization is small when $m\ll N_0$. Now consider the case where the feature vectors $\{X_t\}_{t=1}^{N_0}$ live in a $l$-dimensional subspace with $l<m$. In this case, $X=VF$ where $V\in\R^{m\times l}$  satisfies $V^\top V=I$, and the columns $F_t$ of $F\in\R^{l\times N_0}$ are drawn i.i.d. from a distribution $\mathcal{P}$. Suppose, for example, that $\mathcal{P}=\mathrm{Unif}(B_r)$. Due to $X=VF$, one can express any unit vector in the range of $X$ as $u=Vv$ with $v\in\R^l$. Then we have $1=\|u\|_2=\|Vv\|_2=\|v\|_2$, $\|X_t\|_2=\|VF_t\|_2=\|F_t\|_2\leq r$, and $\E\frac{\langle X_t, u\rangle^2}{\|X_t\|_2^2}=\E\frac{\langle VF_t, Vv\rangle^2}{\|VF_t\|_2^2}=\E\frac{\langle F_t, v\rangle^2}{\|F_t\|_2^2}= l^{-1}$ by our assumption for $\mathcal{P}$. Because $u_t$ in \cref{thm-main-result-single} is a linear combination of $X_j$, the proof of \cref{thm-main-result-single} remains unchanged if \eqref{thm-main-result-single-assumption} holds for all unit vectors $u$ in the range of $X$. It follows that \cref{thm-main-result-single} holds for $X$ with $s^2=l^{-1}$ and thus the relative error for quantizing the data in a $l$-dimensional subspace is improved to $\frac{\|Xw-Xq\|_2^2}{\|Xw\|_2^2}\leq \frac{l\delta^2\log N_0}{N_0}$. Applying a similar argument to $\mathcal{P}$ representing either a symmetric Bernoulli distribution or Gaussian distribution, one can replace $m$ in their corresponding relative errors by $l$. In short, the relative error depends not on the number of training samples $m$ but on the intrinsic dimension of the features $l$.
\end{remark}

\subsection{Convolutional Neural Networks}\label{sec:CNNs}

In this section, we  derive error bounds for single-layer CNNs.
Let $Z\in\mathbb{R}^{B\times C_\mathrm{in}\times S_1\times S_2}$ be a mini-batch of images with batch size $B$, input channels $C_\mathrm{in}$, height $S_1$, and width $S_2$. Suppose that all entries of $Z$ are i.i.d. drawn from $\mathcal{N}(0,1)$ and suppose we have $C_\mathrm{out}$ convolutional kernels $\{w_i\}_{i=1}^{C_\mathrm{out}}\subseteq\mathbb{R}^{C_\mathrm{in}\times k_1\times k_2}$. Let these kernels ``slide" over $Z$ with fixed stride $(k_1, k_2)$ such that sliding local blocks generated by moving $w_i$ on $Z$ are disjoint. Additionally, if $T$ is the number of randomly selected sliding local blocks (in $\R^{C_\mathrm{in}\times k_1\times k_2}$) from each image, then one can vectorize all $BT$ local blocks and stack them together to obtain a single data matrix $X\in\R^{BT\times C_\mathrm{in} k_1k_2}$. Moreover, each kernel $w_i$ can be viewed as a column vector in $\R^{C_\mathrm{in}k_1k_2}$ and thus $W=[w_1, w_2,\ldots, w_{C_\mathrm{out}}]\in\R^{C_\mathrm{in}k_1k_2\times C_\mathrm{out}}$ is the weight matrix to be quantized.  Thus, we need to convert $W$ to $Q=[q_1, q_2,\ldots, q_{C_\mathrm{out}}]\in\mathcal{A}^{C_\mathrm{in}k_1k_2\times C_\mathrm{out}}$ with $XQ\approx XW$, as before. Since extracted local blocks from $Z$ are disjoint, columns of $X$ are independent and subject to $\mathcal{N}(0, I_{BT})$. Hence, one can apply \eqref{example-normal-eq1} with $m=BT$, $N_0=C_\mathrm{in}k_1k_2$, $\sigma=1$, and any $p\in\mathbb{N}$. Specifically, for $1\leq i\leq C_\mathrm{out}$, we get $\prob\Bigl(\|Xw_i-Xq_i\|^2_2\geq 4pB^2T^2\delta^2\log(C_\mathrm{in}k_1k_2)\Bigr) 
\lesssim  \frac{\sqrt{BT}}{(C_\mathrm{in}k_1k_2)^p}$.
By a union bound, $\prob \Bigl(\max_{1\leq i\leq C_\mathrm{out}}\|Xw_i-Xq_i\|^2_2\geq  4pB^2 T^2\delta^2\log(C_\mathrm{in}k_1k_2)\Bigr) \lesssim \frac{C_\mathrm{out}\sqrt{BT}}{(C_\mathrm{in}k_1k_2)^p}$.

\section{Sparse GPFQ and Error Analysis}\label{sec:sparse-GPFQ}
Having extended the results pertaining to GPFQ to cover multiple distributions of the input data, as well as general alphabets, we now propose modifications to produce quantized weights that are also sparse, i.e., that have a large fraction of coefficients being 0. Our sparse quantization schemes result from  adding a regularization term to  \eqref{eq-update-rule-dynamic}. Specifically, in order to generate sparse $q\in\A^{N_{i-1}}$, we compute $q_t$ via
\begin{equation}\label{eq:update-rule-optimization}
q_t = \arg\min_{p\in\mathcal{A}} \biggl(\frac{1}{2}\Bigl\| u_{t-1} + w_tX_t^{(i-1)}-p\widetilde{X}_t^{(i-1)} \Bigr\|_2^2+\lambda |p|\|\widetilde{X}_t^{(i-1)}\|_2^2 \biggr)
\end{equation}
where $\lambda>0$ is a regularization parameter.
Conveniently, \cref{lemma:update-rule-optimization} shows that the solution of \eqref{eq:update-rule-optimization} is given by
\begin{equation}\label{eq-update-rule-soft}
q_t = \mathcal{Q}\circ s_\lambda\biggl( \frac{\langle \widetilde{X}_t^{(i-1)}, u_{t-1}+w_t X_t^{(i-1)} \rangle}{\|\widetilde{X}^{(i-1)}_t\|_2^2}\biggr)
\end{equation}
where $s_\lambda$ denotes soft thresholding. It is then natural to consider a variant of \eqref{eq-update-rule-soft} replacing $s_\lambda$ with hard thresholding, $h_\lambda$. Since $h_\lambda(z)$ has jump discontinuities at $z=\pm\lambda$, the corresponding alphabet and quantizer should be adapted to this change. Thus, we use $\widetilde{\mathcal{Q}}(z)$ over $\widetilde{\A}=\A_K^{\delta,\lambda}$ as in \eqref{eq:MSQ-midtread-variant} and $q_t\in\widetilde{\A}$ is obtained via
\begin{equation}\label{eq-update-rule-hard}
q_t = \widetilde{\mathcal{Q}} \circ h_\lambda\biggl( \frac{\langle \widetilde{X}_t^{(i-1)}, u_{t-1}+w_t X_t^{(i-1)} \rangle}{\|\widetilde{X}^{(i-1)}_t\|_2^2}\biggr).
\end{equation}
In both cases, we update the error vector via $u_t=u_{t-1}+w_t X_t^{(i-1)} - q_t \widetilde{X}_t^{(i-1)}$, as before. In summary, for quantizing a single-layer network, similar to \eqref{eq-update-rule-single} the two sparse GPFQ schemes related to soft and hard thresholding are given by

\noindent\begin{minipage}{.5\linewidth}
\begin{equation}\label{eq-update-rule-single-soft}
\begin{cases}
u_0 = 0 \in\mathbb{R}^m,\\
q_t = \mathcal{Q}\circ s_\lambda \bigl(w_t+\frac{X_t^\top u_{t-1}}{\|X_t\|_2^2} \bigr),\\
u_t = u_{t-1}+w_t X_t - q_t X_t.
\end{cases}
\end{equation}
\end{minipage}%
\begin{minipage}{.5\linewidth}
\begin{equation}\label{eq-update-rule-single-hard}
\begin{cases}
u_0 = 0 \in\mathbb{R}^m,\\
q_t = \widetilde{\mathcal{Q}}\circ h_\lambda\bigl(w_t+\frac{X_t^\top u_{t-1}}{\|X_t\|_2^2} \bigr),\\
u_t = u_{t-1}+w_t X_t - q_t X_t.
\end{cases}
\end{equation}
\end{minipage}
Interesting, with these sparsity promoting modifications, one can prove similar error bounds to GPFQ. To illustrate with bounded or Gaussian clustered data, we show that sparse GPFQ admits similar error bounds as in \cref{thm-main-result-single} and \cref{thm-cluster}. The following results are proved in \Cref{sec:theory-sparse-quantization}.
\begin{theorem}[Sparse GPFQ with bounded input data]\label{thm-main-result-single-sparse} 
Under the conditions of \cref{thm-main-result-single}, we have the following. \\
$(a)$ Quantizing $w$ using \eqref{eq-update-rule-single-soft} with the alphabet $\A$ in \eqref{eq-alphabet-midtread}, we have 
\[
\prob\Bigl(\|Xw-Xq\|_2^2\leq \frac{r^2(2\lambda+\delta)^2}{s^2}\log N_0\Bigr) \geq 1- \frac{1}{N_0^2}\Bigl(2+\frac{1}{\sqrt{1-s^2}}\Bigr).
\]
$(b)$ Quantizing $w$ using \eqref{eq-update-rule-single-hard} with the alphabet $\widetilde{\A}$ in \eqref{eq-alphabet-midtread-variant}, we have
\[ 
\prob\Bigl(\|Xw-Xq\|_2^2\leq \frac{r^2\max\{2\lambda,\delta\}^2}{s^2}\log N_0\Bigr) \geq 1- \frac{1}{N_0^2}\Bigl(2+\frac{1}{\sqrt{1-s^2}}\Bigr).
\]
\end{theorem}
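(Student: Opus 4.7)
The plan is to adapt the proof of \cref{thm-main-result-single}, where the only substantive change is the per-step quantization error bound. Introducing $v_t := w_t + X_t^\top u_{t-1}/\|X_t\|_2^2$, the updates \eqref{eq-update-rule-single-soft} and \eqref{eq-update-rule-single-hard} read $q_t = \mathcal{Q}\circ s_\lambda(v_t)$ and $q_t = \widetilde{\mathcal{Q}}\circ h_\lambda(v_t)$, respectively, and in either case $u_t = u_{t-1} - (X_t^\top u_{t-1}/\|X_t\|_2^2) X_t + (v_t - q_t) X_t$, so the orthogonal decomposition yields
\[\|u_t\|_2^2 = \|u_{t-1}\|_2^2 \bigl(1 - \cos^2\theta_t\bigr) + (v_t - q_t)^2 \|X_t\|_2^2,\]
where $\theta_t$ is the angle between $u_{t-1}$ and $X_t$, exactly as in the unthresholded case.

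The key new ingredient is a deterministic upper bound on $|v_t - q_t|$. For case $(a)$, since $|v_t - s_\lambda(v_t)| \leq \lambda$ and $|s_\lambda(v_t) - \mathcal{Q}(s_\lambda(v_t))| \leq \delta/2$ (provided the argument lies in $[-q_{\max}, q_{\max}]$, which one verifies inductively), the triangle inequality yields $|v_t - q_t| \leq \lambda + \delta/2$, i.e.\ $(v_t - q_t)^2 \leq (2\lambda+\delta)^2/4$. For case $(b)$, I would split into $|v_t| \leq \lambda$ and $|v_t| > \lambda$: in the former $h_\lambda(v_t) = 0$, so $q_t = 0$ and $|v_t - q_t| = |v_t| \leq \lambda$; in the latter $h_\lambda(v_t) = v_t$ and the definition of $\widetilde{\mathcal{Q}}$ over $\widetilde{\A}$ (the nearest element is within $\delta/2$) gives $|v_t - q_t| \leq \delta/2$. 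Combining both regimes, $(v_t - q_t)^2 \leq \max\{2\lambda,\delta\}^2/4$.

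From here the proof follows the template of \cref{thm-main-result-single} verbatim. Writing $\Delta := 2\lambda+\delta$ in case $(a)$ and $\Delta := \max\{2\lambda,\delta\}$ in case $(b)$, I would establish the analog
\[\E e^{\eta\|u_t\|_2^2} \leq \max\Bigl\{\E\bigl(e^{\eta\Delta^2\|X_t\|_2^2/4}\,e^{\eta\|u_{t-1}\|_2^2(1-\cos^2\theta_t)}\bigr),\; \E e^{\eta\|u_{t-1}\|_2^2}\Bigr\},\]
using $\|X_t\|_2 \leq r$ almost surely together with $\E(\cos^2\theta_t \mid \mathcal{F}_{t-1}) \geq s^2$. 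The second term in the max corresponds to the fact that, just as in the unthresholded setting, one of the admissible choices (here $q_t = 0$ when the thresholded argument vanishes) never increases $\|u_{t-1}\|_2^2$ beyond its unmodified value. Markov's inequality, iteration of this recursion exactly as in the original proof, and the optimal choice $\eta = -2\log(1-s^2)/(r^2\Delta^2)$ paired with $\alpha = r^2\Delta^2\log N_0 / s^2$, then yield the stated probability bound in each case.

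The main obstacle will be justifying the quantizer-error bound in case $(b)$ cleanly across the discontinuity of $h_\lambda$ at $\pm\lambda$, and verifying in both parts that the argument of $\mathcal{Q}$ (respectively $\widetilde{\mathcal{Q}}$) remains within the saturation threshold $q_{\max}$, so that the nearest-element error is genuinely $\leq \delta/2$ rather than growing with the magnitude of the argument. A short induction using $\|w\|_\infty \leq q_{\max}$ together with the control on $\|u_{t-1}\|_2$ developed along the recursion should handle this, parallel to the implicit argument in \cref{thm-main-result-single}.
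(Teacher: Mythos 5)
Your overall route is the paper's: replace the per-step quantization error $\delta^2/4$ in the proof of \cref{thm-main-result-single} by $(2\lambda+\delta)^2/4$ (soft) or $\max\{2\lambda,\delta\}^2/4$ (hard), and rerun the moment-generating-function recursion with $\eta$ and $\alpha$ rescaled accordingly; your computation of the in-range quantizer error in both cases matches \cref{lemma-diff-soft} and \cref{lemma-diff-hard}. The genuine gap is your treatment of saturation. You propose to verify ``inductively'' that the quantizer argument $v_t = w_t + X_t^\top u_{t-1}/\|X_t\|_2^2$ stays inside the active range ($|v_t|\le q_{\max}+\lambda$ for soft thresholding, $|v_t|\le q_{\max}$ for hard). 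No such deterministic containment holds: $\|u_{t-1}\|_2$ is only controlled in probability, and $X_t^\top u_{t-1}/\|X_t\|_2^2$ can exceed any fixed bound, so the induction you sketch cannot close. The paper does not attempt this; instead, in the saturation regime it proves the step is non-expansive. Writing
$(v_t-q_t)^2 - \bigl(\tfrac{\|u_{t-1}\|_2}{\|X_t\|_2}\cos\theta_t\bigr)^2 = \bigl(w_t + \tfrac{2\|u_{t-1}\|_2}{\|X_t\|_2}\cos\theta_t - q_t\bigr)(w_t - q_t)$
and using $\|w\|_\infty\le q_{\max}$ together with the fact that the quantizer clips to $\pm q_{\max}$ when its argument is out of range, a sign analysis shows this product is $\le 0$, hence $\|u_t\|_2^2 \le \|u_{t-1}\|_2^2$ on that event (see \cref{lemma-diff}, \cref{lemma-diff-soft}, \cref{lemma-diff-hard}). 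This---not the case $q_t=0$ when the thresholded argument vanishes, as you suggest---is what supplies the second branch $\E e^{\eta\|u_{t-1}\|_2^2}$ of the max in your recursion. With that replacement your argument goes through and coincides with the paper's.
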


\begin{theorem}[Sparse GPFQ for Gaussian clusters]\label{thm-cluster-sparse}
Under the assumptions of \cref{thm-cluster}, the followings inequalities hold.\\
$(a)$ Quantizing $w$ using \eqref{eq-update-rule-single-soft} with the alphabet $\A$ in \eqref{eq-alphabet-midtread}, we have
\[
\prob\Bigl(\|Xw-Xq\|_2^2\geq 4pm^2K^2(2\lambda+\delta)^2\sigma^2 \log N_0\Bigr) \lesssim \frac{\sqrt{mK}}{N_0^p}.
\]
$(b)$ Quantizing $w$ using \eqref{eq-update-rule-single-hard} with the alphabet $\widetilde{\A}$ in \eqref{eq-alphabet-midtread-variant}, we have
\[
\prob\Bigl(\|Xw-Xq\|_2^2\geq 4pm^2K^2\max\{2\lambda,\delta\}^2\sigma^2 \log N_0\Bigr) \lesssim \frac{\sqrt{mK}}{N_0^p}.
\]
\end{theorem}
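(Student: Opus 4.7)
The plan is to adapt the proof of \cref{thm-cluster} to the sparse setting, using \cref{thm-main-result-single-sparse} as the bounded-data workhorse in exactly the same way that the proof of \cref{thm-cluster} uses \cref{thm-main-result-single}. The only ingredient that really changes is the per-step quantization error: in the standard MSQ analysis, selecting $q_t$ via $\mathcal{Q}$ yields a residual error contribution to $\|u_t\|_2^2$ bounded by $\tfrac{\delta^2}{4}\|X_t\|_2^2$. For \eqref{eq-update-rule-single-soft}, the effective dead zone of $\mathcal{Q}\circ s_\lambda$ has half-width $\lambda+\tfrac{\delta}{2}$, and so the worst-case per-step contribution becomes $\tfrac{(2\lambda+\delta)^2}{4}\|X_t\|_2^2$; for \eqref{eq-update-rule-single-hard} applied on $\widetilde{\A}$, the jump discontinuity at $\pm\lambda$ produces a per-step contribution of $\tfrac{\max\{2\lambda,\delta\}^2}{4}\|X_t\|_2^2$. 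These are precisely the constants that appear in \cref{thm-main-result-single-sparse}.

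First, I would establish a sparse analogue of \cref{lemma-bound}. Fix $t$ and let $z_t := w_t + X_t^\top u_{t-1}/\|X_t\|_2^2$ be the argument of the quantizer. For the soft-thresholded rule, split on whether $|z_t|\le \lambda+\delta/2$ (in which case $q_t=0$ and $u_t = u_{t-1}+w_t X_t$, whose squared norm is controlled by $\|u_{t-1}\|_2^2(1-\cos^2\theta_t)+\tfrac{(2\lambda+\delta)^2}{4}\|X_t\|_2^2$ with $\theta_t$ the angle between $u_{t-1}$ and $X_t$) or $|z_t|>\lambda+\delta/2$ (in which case the standard MSQ argument goes through with $\delta$ replaced by $2\lambda+\delta$, using that $\|w\|_\infty\le q_{\max}$ prevents saturation). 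The hard-thresholded case is analogous, splitting at $|z_t|=\lambda$; in the inactive region the error is at most $\lambda\|X_t\|_2$, and in the active region standard MSQ on $\widetilde{\A}$ contributes at most $\tfrac{\delta}{2}\|X_t\|_2$, so $\max\{2\lambda,\delta\}$ replaces $\delta$ throughout.

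Second, I would re-run the MGF/Markov argument in the proof of \cref{thm-cluster} verbatim. The cluster argument decomposes $X_t = z_t^{\mathrm{blk}} + \sigma G_t$, conditions on the high-probability event $\{\|X_t\|_2\lesssim\sigma\sqrt{mK}\}$, and couples the angle $\theta_t$ to the Gaussian anti-concentration supplied by $G_t$. Substituting the sparse per-step bound above into the recursion \eqref{thm-main-result-single-eq2}--\eqref{thm-main-result-single-eq4} leaves the recursion structurally unchanged, with $\delta^2$ uniformly replaced by $(2\lambda+\delta)^2$ or $\max\{2\lambda,\delta\}^2$. Choosing the same exponential parameter $\eta$ and optimizing $\alpha$ as in the cluster proof then yields the stated tail bounds with scaling $m^2 K^2\sigma^2\log N_0$ times the modified step-size factor, and the failure probability $\sqrt{mK}/N_0^p$.

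The main obstacle is the sparse per-step lemma: one must verify that the minimizer prescribed by \eqref{eq-update-rule-single-soft} (resp.\ \eqref{eq-update-rule-single-hard}) actually achieves the claimed worst-case error in both regimes, and that the saturation cap $K$ combined with $\|w\|_\infty\le q_{\max}$ prevents the pre-quantization argument from being clipped in a way that violates the bound. Hard thresholding requires slightly more care because the quantizer $\widetilde{\mathcal{Q}}$ is not the nearest-neighbor map on $\widetilde{\A}$ near the threshold, so one must check the inactive-region bound by direct inspection rather than by a projection identity. Once this step-level inequality is in place, both parts $(a)$ and $(b)$ follow from the cluster argument with no further modifications.
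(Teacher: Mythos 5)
Your proposal is correct and follows essentially the same route as the paper: establish sparse analogues of the per-step lemmas in which the quantizer-error coefficient $\delta^2$ is replaced by $(2\lambda+\delta)^2$ (soft) or $\max\{2\lambda,\delta\}^2$ (hard) — including the saturation case where $\|u_t\|_2\le\|u_{t-1}\|_2$ — and then rerun the MGF/Markov recursion of \cref{thm-cluster} verbatim with that substitution. (One small note: the cluster proof does not condition on a truncation event for $\|X_t\|_2$; it bounds $\E e^{\eta\delta^2\|X_t\|_2^2/2}$ directly and decouples it from the angle term via Cauchy--Schwarz, but since you invoke that argument as a black box this does not affect your plan.)
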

Note that the sparsity regularization term $\lambda$ only appears in the error bounds, making them slightly worse than those where no sparsity is enforced. 
In \Cref{sec:exp-sparse}, we will numerically explore the impact of $\lambda$ on the sparsity and accuracy of quantized neural networks. 

\section{Experiments}\label{sec:experiments}
To evaluate the performance of our method and compare it with the approaches reviewed in \Cref{sec:related-work}, we test our modified GPFQ on the ImageNet classification task \footnote{Our code for experiments is available: \href{https://github.com/YixuanSeanZhou/Quantized_Neural_Nets.git}{$\mathrm{https://github.com/YixuanSeanZhou/Quantized\_Neural\_Nets.git}$}}. In particular, we focus on ILSVRC-2012 \cite{deng2009imagenet}, a $1000$-category dataset with over $1.2$ million training images and $50$ thousand validation images. All images in ILSVRC-2012 are preprocessed in a standard manner before they are fed into neural networks:  we resize each image to $256 \times 256$ and use the normalized $224 \times 224$ center crop. The evaluation metrics we use are top-$1$ and top-$5$ accuracy of the quantized models on the validation dataset.

\subsection{Experimental Setup}\label{sec:exp-setup}
For reproducibility and fairness of comparison, we use the pretrained $32$-bit floating point neural networks provided by torchvision\footnote{\href{https://pytorch.org/vision/stable/models.html}{https://pytorch.org/vision/stable/models.html}} in PyTorch \cite{paszke2019pytorch}. We test several well-known neural network architectures including: AlexNet \cite{krizhevsky2012imagenet}, VGG-16 \cite{simonyan2014very}, GoogLeNet \cite{szegedy2015going}, ResNet-18, ResNet-50 \cite{he2016deep}, and EfficeintNet-B1 \cite{tan2019efficientnet}. 
In the following experiments, we will focus on quantizing the weights of fully-connected and convolutional layers of the above architectures, as our theory applies specifically to these types of layers\footnote{Batch normalization layers, while not explicitly covered by our methods in the preceeding sections, are  easy to handle. Indeed,
in \Cref{sec:fusion-bn}, we show that our approach can effectively quantize batch normalization layers by merging them with their preceding convolutional layers before quantization, and we demonstrate experimentally that this does not negatively impact performance.}.

Let $b\in\mathbb{N}$ denote the number of bits used for quantization. Here, we fix $b$ for all the layers. In our experiments with GPFQ, we adopt the midtread alphabets $\mathcal{A}_K^\delta$ in \eqref{eq-alphabet-midtread} with
\begin{equation}\label{eq-alphabet-exp}
    K= 2^{b-1}, \quad \delta = \frac{R}{2^{b-1}},
\end{equation}
where $R>0$ is a hyper-paramter. Indeed, according to  \eqref{eq-alphabet-midtread}, $\mathcal{A}_K^\delta$ is symmetric with maximal element  $q_\mathrm{max}=K\delta=R$. Since $b$ is fixed, all that remains is to select $R$ in \eqref{eq-alphabet-exp} based on the distribution of weights. To that end, suppose we are quantizing the $i$-th layer of a neural network with weight matrix $W^{(i)}\in\R^{N_{i-1}\times N_{i}}$. Then,  \cref{thm-main-result-single} and \cref{thm-cluster} require that  $R=q_\mathrm{max}\geq \max_{k,j} |W^{(i)}_{k, j}|$, and yield error bounds that favor a smaller step size $\delta \propto R$. In practice, however, the weights may have outliers with large magnitudes, which would entail unnecessarily using a large $R$. Thus, rather than choosing $R=\max_{k,j}|W_{k,j}^{(i)}| $, we will consider the average infinity norm of weights across all neurons $w$, i.e. columns of $W^{(i)}$. That is
$R \propto \frac{1}{N_i} \sum_{1\leq j \leq N_{i}} \|W^{(i)}_j\|_\infty$. 
Then, by \eqref{eq-alphabet-exp}, the step size used for quantizing the $i$-th layer is given by
\begin{equation}\label{eq:exp-step-size}
\delta^{(i)} := \frac{C}{2^{b-1}N_i}\sum_{1\leq j \leq N_i}\|W_j^{(i)}\|_\infty. 
\end{equation}
Here,  $C\geq 1$ is independent of $i$ and fixed across layers, batch-sizes, and bit widths. To obtain a good choice of $C$, we perform a grid search with cross-validation over the interval $[1, 2]$, albeit on a small batch size $m\leq 128$. So the tuning of $C$ takes very little time compared to the quantization with the full training data. Note that the tuning and quantization scale linearly in the size of the data set and the number of parameters of the network. This means that this entire process's computational complexity is dominated by the original training of the network and there is no problem with its scaling to large networks. Moreover, by choosing the maximal element in our alphabet, i.e. $q_{\max}=2^{b-1}\delta^{(i)}$, to be a constant $C\in [1,2]$ times the average $\ell_\infty$ norm of all the neurons, we are selecting a number that is effectively larger than most of the weights and thereby corresponding perfectly with the theory for most of the neurons. For the remaining neurons, the vast majority of the weights will be below this threshold, and only the outlier weights, in general, will exceed it. In \Cref{sec:results-large-weights}, we present a theoretical analysis of the expected error when a few weights exceed $q_{\max}$. We not only show that the proposed algorithm is still effective in this scenario, but also that in some cases, it may be beneficial to choose $\delta$ small enough such that some weights exceed $q_{\max}$. The analysis in \Cref{sec:results-large-weights} is consistent with, and helps explain the experimental results in this section. Further, we comment that a more thorough search for an optimal $C$ depending on these individual parameters, e.g. $b$, may improve performance. 

\begin{table}[ht]
\caption{Top-1/Top-5 accuracy drop using $b=5$ bits.}
\label{table:model-results}
\centering 
\begin{tabular}{lccc|lccc}
\toprule
Model & $C$ & $m$ & Acc Drop (\%) & Model & $C$ & $m$ & Acc Drop (\%) \\
\midrule 
AlexNet    & 1.1 & 2048 & 0.85/0.33 & GoogLeNet  & 1.41 &  2048 & 0.60/0.46 \\
VGG-16 &  1.0 & 512 & 0.63/0.32 & EfficientNet-B1 & 1.6 &  2048 & 0.45/0.18\\
ResNet-18 & 1.16 & 4096 & 0.49/0.23 & ResNet-50    & 1.81 & 2048 & 0.62/0.11 \\
\bottomrule
\end{tabular}
\vskip -0.1in
\end{table}

\begin{figure*}[ht]
\begin{subfigure}{.33\textwidth}
  \centering
  \includegraphics[width=\linewidth]{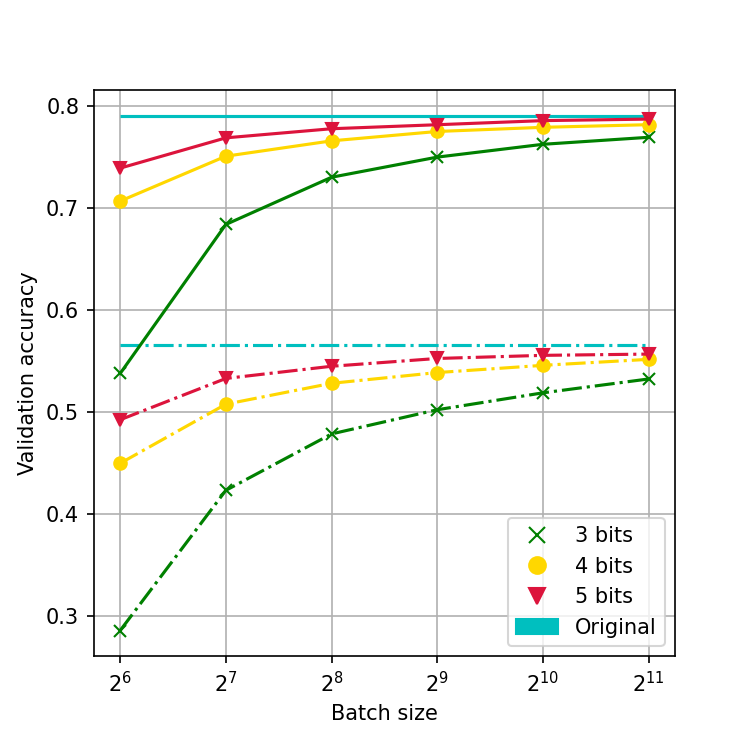}  
\caption{AlexNet}
\label{fig:Quant-AlexNet}
\end{subfigure}%
\begin{subfigure}{.33\textwidth}
  \centering
  \includegraphics[width=\linewidth]{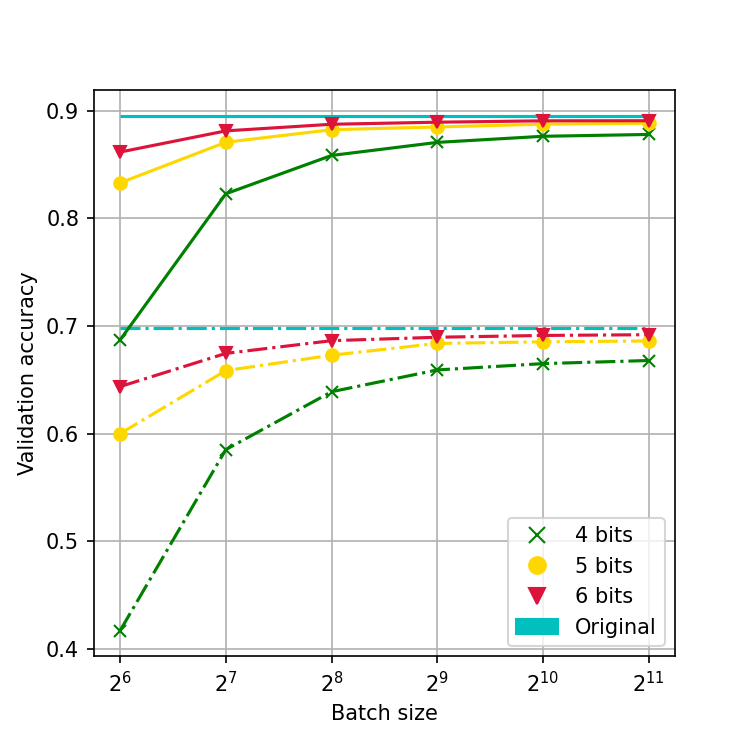}  
\caption{GoogLeNet} 
\label{fig:Quant-GoogLeNet}
\end{subfigure}%
\begin{subfigure}{.33\textwidth}
  \centering
  \includegraphics[width=\linewidth]{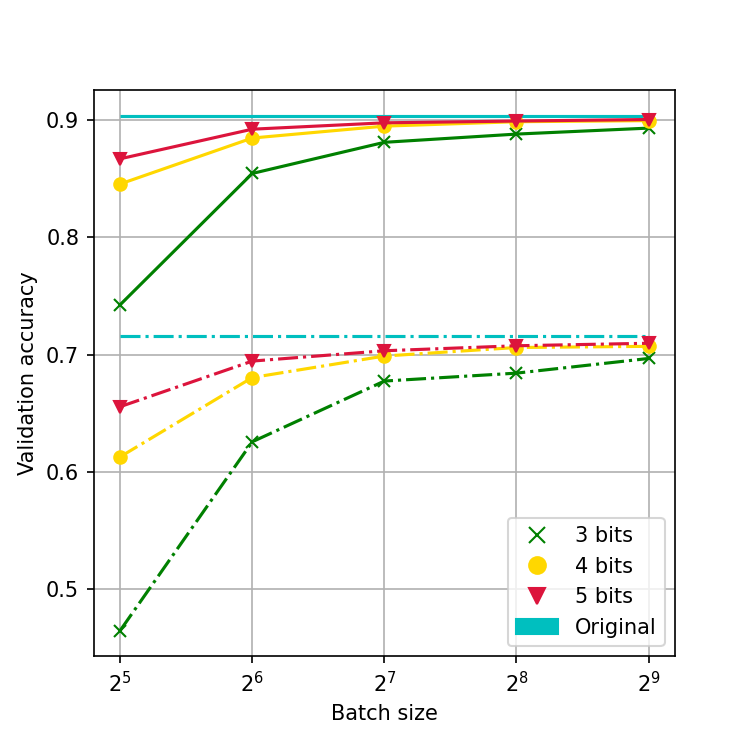}  
\caption{VGG-16}
\label{fig:Quant-VGG16}
\end{subfigure}\\
\begin{subfigure}{.33\textwidth}
  \centering
  \includegraphics[width=\linewidth]{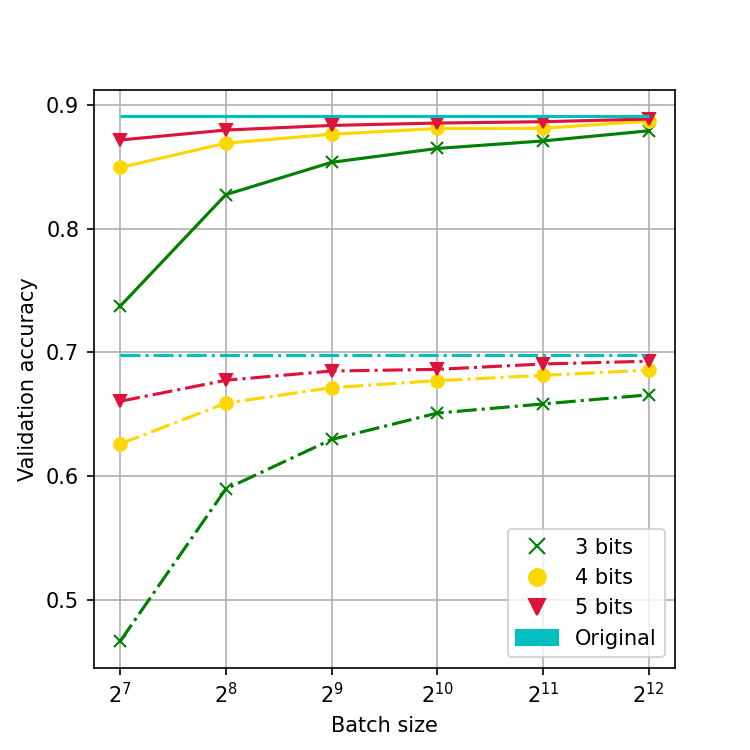}  
\caption{ResNet-18} 
\label{fig:Quant-ResNet18}
\end{subfigure}%
\begin{subfigure}{.33\textwidth}
  \centering
  \includegraphics[width=\linewidth]{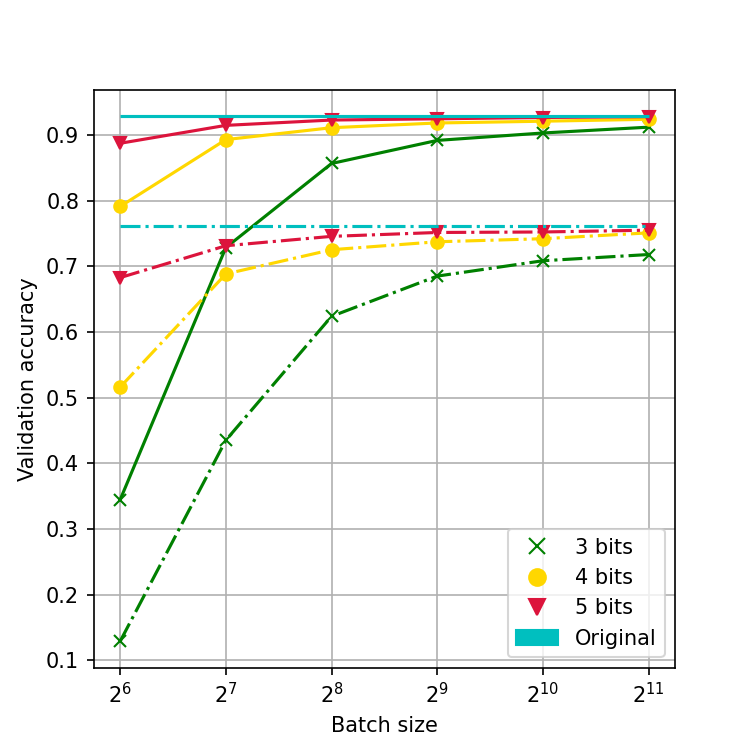}  
\caption{ResNet-50} 
\label{fig:Quant-ResNet50}
\end{subfigure}%
\begin{subfigure}{.33\textwidth}
  \centering
  \includegraphics[width=\linewidth]{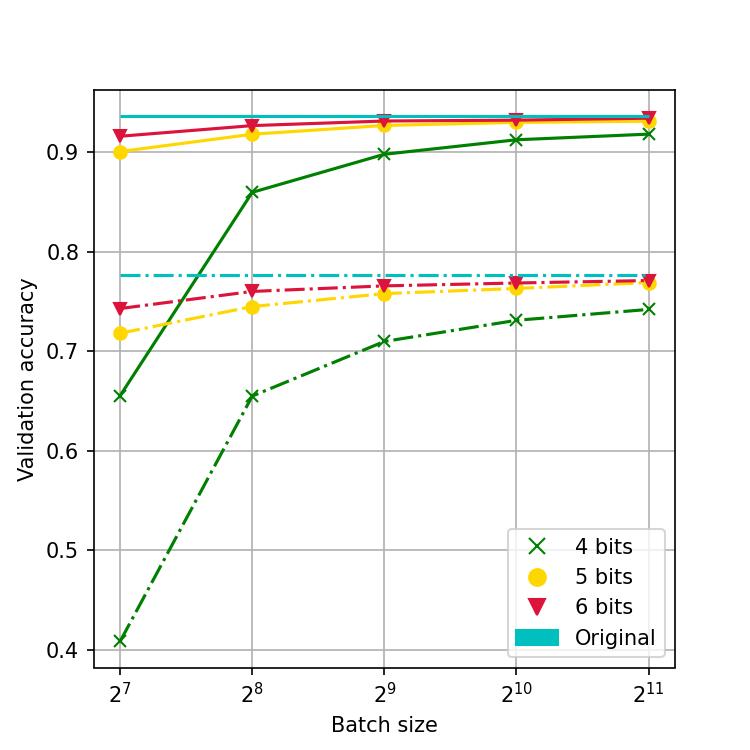}  
\caption{EfficientNet-B1} 
\label{fig:Quant-Efficientnet-B1}
\end{subfigure}%
\caption{Top-1 (dashed lines) and Top-5 (solid lines) accuracy for original and quantized models on ImageNet.}
\label{fig:all_quant_acc}
\end{figure*}

As mentioned in \Cref{sec:CNNs}, we introduce a sampling probability $p\in(0,1]$, associated with  GPFQ  for convolutional layers. This is motivated, in part, by decreasing the computational cost associated with quantizing such layers. Indeed, a batched input tensor of a convolutional layer can be unfolded as a stack of vectorized sliding local blocks, i.e., a matrix. Since, additionally, the kernel can be reshaped into a column vector, matrix-vector multiplication followed by reshaping gives the output of this convolutional layer. On the other hand, due to potentially large overlaps between sliding blocks, the associated matrices have large row size and thus the computational complexity is high. To accelerate our computations, we extract the data used for quantization by setting the stride (which defines the step size of the kernel when sliding through the image) equal to the kernel size and choosing $p=0.25$. This choice gives a good trade-off between accuracy and computational complexity, which both increase with $p$. Recall that the batch size $m\in\mathbb{N}$ denotes the number of samples used for quantizing each layer of a neural network. In all experiments, $b$ is chosen from $\{3,4,5,6\}$.  

\subsection{Results on ImageNet} 
\subsubsection{\textbf{Impact of b and m}}
 The first experiment is designed to explore the effect of the batch size $m$, as well as bit-width $b$, on  the accuracy of the quantized models.  We compute the validation accuracy of quantized networks with respect to different choices of $b$ and $m$. In particular, \cref{table:model-results} shows that, using $b=5$ bits, all quantized models achieve less than $1\%$ loss in top-1 and top-5 accuracy. Moreover, we illustrate the relationship between the quantization accuracy and the batch size $m$ in \cref{fig:all_quant_acc}, where the horizontal lines in cyan, obtained directly from the original validation accuracy of unquantized models, are used for comparison against our quantization method. We observe that (1) all curves with distinct $b$ quickly approach an accuracy ceiling while curves with high $b$ eventually reach a higher ceiling; 
 (2) Quantization with $b\geq 4$ attains near-original model performance with sufficiently large $m$; (3) one can expect to obtain higher quantization accuracy by taking larger $m$ but the extra improvement that results from increasing the batch size rapidly diminishes. 
 
\begin{table*}[ht]
\footnotesize
  \centering 
  \resizebox{0.7\columnwidth}{!}{%
  \begin{threeparttable}
  \caption{ImageNet Top-1 accuracy with weight quantization.}
  \label{table: quant-results-and-comparison}
    \begin{tabular}{c|c|l c c c }
    \specialrule{.2em}{.1em}{.1em} 
    Model & Bits & \multicolumn{1}{c}{Method} & Quant Acc (\%) & Ref Acc (\%) & Acc Drop (\%)\\
    \specialrule{.2em}{.1em}{.1em} 
    \multirow{8}{*}{Alexnet} 
    & \multirow{2}{*}{3} & GPFQ (Ours)& 53.22 & 56.52 & 3.30 \\ 
    & & GPFQ (Ours)\tnote{$\dagger$} & 54.77 & 56.52 &  1.75  \\
    \cline{2-6}
     &\multirow{3}{*}{4} & OMSE\cite{choukroun2019low} & 55.52 & 56.62 & 1.10 \\
     &  & GPFQ (Ours) & 55.15 & 56.52 & 1.37  \\
    & & GPFQ (Ours)\tnote{$\dagger$} & 55.51 & 56.52 &  1.01  \\
     \cline{2-6}
     & \multirow{2}{*}{5} & GPFQ (Ours) & 55.67 & 56.52 &  0.85  \\
     & & GPFQ (Ours)\tnote{$\dagger$} & 55.94 & 56.52 &  0.58 \\
     \cline{2-6}
     & \multirow{1}{*}{8} & DoReFa \cite{zhou2016dorefa} & 53.00 & 55.90 &  2.90  \\
    
    \specialrule{.2em}{.1em}{.1em} 
    \multirow{9}{*}{VGG-16} 
    & \multirow{2}{*}{3} & GPFQ (Ours) & 69.67& 71.59 & 1.92  \\ 
    & & GPFQ (Ours)\tnote{$\dagger$} & 70.24 & 71.59&  1.35  \\
    \cline{2-6}
     & \multirow{4}{*}{4} & MSE \cite{banner2018post} & 70.50 &  71.60 & 1.10  \\
     & & OMSE \cite{choukroun2019low} & 71.48 & 73.48 & 2.00 \\
     & & GPFQ (Ours) & 70.70 & 71.59 & 0.89   \\
    & & GPFQ (Ours)\tnote{$\dagger$} & 70.90 & 71.59 &  0.69  \\
     \cline{2-6}
     & \multirow{2}{*}{5} & GPFQ (Ours) & 70.96 & 71.59 &  0.63  \\
     & & GPFQ (Ours)\tnote{$\dagger$} & 71.05 & 71.59 &  0.54  \\
     \cline{2-6}
     & \multirow{1}{*}{8} & Lee et al. \cite{lee2018quantization} & 68.05 & 68.34 & 0.29  \\
    
    \specialrule{.2em}{.1em}{.1em} 
    \multirow{14}{*}{ResNet-18} 
    & \multirow{2}{*}{3} & GPFQ (Ours) & 66.55 & 69.76 & 3.21  \\ 
    & & GPFQ (Ours)\tnote{$\dagger$} & 67.63 & 69.76 &  2.13   \\
    \cline{2-6}
     & \multirow{7}{*}{4} & MSE \cite{banner2018post} & 67.00 &  69.70 & 2.70   \\
     & & OMSE \cite{choukroun2019low} &
     68.38 & 69.64 & 1.26 \\
     & & S-AdaQuant \cite{hubara2020improving} &
     69.40 & 71.97 & 2.57 \\
     & & AdaRound \cite{nagel2020up} &
     68.71 & 69.68 & 0.97 \\
     & & BRECQ \cite{li2021brecq} &
     70.70 & 71.08 & 0.38  \\
     & & GPFQ (Ours) & 68.55 & 69.76 & 1.21   \\
     & & GPFQ (Ours)\tnote{$\dagger$} & 68.81 & 69.76 &  0.95  \\
     \cline{2-6}
     & \multirow{3}{*}{5}  & RQ \cite{louizos2018relaxed} & 65.10 & 69.54 & 4.44  \\ 
     & & GPFQ (Ours) & 69.27 & 69.76 &  0.49   \\
     & & GPFQ (Ours)\tnote{$\dagger$} & 69.50 & 69.76 &  0.26 \\
     \cline{2-6}
     & \multirow{2}{*}{6} & DFQ \cite{nagel2019data} & 66.30 & 70.50 & 4.20  \\ 
     & & RQ \cite{louizos2018relaxed} & 68.65 & 69.54 & 0.89 \\ 
    
    \specialrule{.2em}{.1em}{.1em} 
    \multirow{14}{*}{ResNet-50} 
    & \multirow{2}{*}{3} & GPFQ (Ours) & 71.80 & 76.13 & 4.33  \\ 
    & & GPFQ (Ours)\tnote{$\dagger$}  & 72.18 & 76.13 & 3.95   \\
    \cline{2-6}
     & \multirow{10}{*}{4}  & MSE \cite{banner2018post} &
     73.80 & 76.10 & 2.30 \\
     & & OMSE \cite{choukroun2019low} &
     73.39 & 76.01 & 2.62 \\
     & & OCS + Clip \cite{zhao2019improving} & 69.30 & 76.10 & 6.80 \\
     & & PWLQ \cite{fang2020post} & 73.70 & 76.10 & 2.40 \\
     & & AdaRound \cite{nagel2020up} & 75.23 & 76.07 & 0.84 \\
     & & S-AdaQuant \cite{hubara2020improving} & 75.10 & 77.20 & 2.10 \\
     & & BRECQ \cite{li2021brecq} & 76.29 &  77.00 & 0.71  \\
     & & GPFQ (Ours) & 75.10 & 76.13 & 1.03   \\
    & & GPFQ (Ours)\tnote{$\dagger$}  & 75.30 & 76.13 & 0.83  \\
     \cline{2-6}
     & \multirow{3}{*}{5} & OCS + Clip \cite{zhao2019improving} & 73.40 & 76.10 & 2.70 \\
    & & GPFQ (Ours) & 75.51 & 76.13 & 0.62  \\
     & & GPFQ (Ours)\tnote{$\dagger$} & 75.66 & 76.13 & 0.47  \\
     \cline{2-6}
     & \multirow{1}{*}{8} & IAOI \cite{jacob2018quantization}  & 74.90 & 76.40 & 1.50 \\
    \specialrule{.2em}{.1em}{.1em} 
    \end{tabular}
    \end{threeparttable}%
    }
\end{table*}
 
\subsubsection{\textbf{Comparisons with Baselines}}
Next, we compare GPFQ against other post-training quantization schemes discussed in \Cref{sec:related-work} on various architectures. We note, however, that for a fixed architecture each post-training quantization method starts with a potentially different set of parameters (weights and biases), and these parameters are not available to us. As such, we simply report other methods' accuracies as they appear in their associated papers. Due to this, a perfect comparison between methods is not possible. Another factor that impacts the comparison is that following DoReFa-Net \cite{zhou2016dorefa}, many baseline quantization schemes \cite{zhao2019improving, hubara2020improving, li2021brecq} leave the first and the last layers of DNNs unquantized to alleviate accuracy degradation. On the other hand, we quantize \emph{all} layers of the model. \cref{table: quant-results-and-comparison} displays the number of bits and the method used to quantize each network. It also contains the accuracy of quantized and full-precision models respectively, as well as their difference, i.e. accuracy drop. We report the results of GPFQ (without the $\dagger$ superscript) for all models with $b=3,4,5$. The important observation here is that our method is competitive across architectures and bit-widths, and shows the best performance on a number of them. 

\subsubsection{\textbf{Further Improvement of GPFQ}}
In this section, we show that the validation accuracy of the proposed approach can be further improved by incorporating the following modifications used by prior work: (1) mixing precision for quantization, such as using different bit-widths to quantize fully-connected and convolutional layers respectively \cite{cai2020zeroq} or leaving the last fully-connected layer unquantized \cite{zhou2016dorefa}; (2) applying bias correction \cite{banner2018post, nagel2019data} to the last layer, that is, subtracting the average quantization error from the layer's bias term. In \cref{table: quant-results-and-comparison}, we examine some of these empirical rules by leaving the last layer intact and performing bias correction to remove the noise due to quantization. This variant of GPFQ is highlighted by a $\dagger$ symbol. By using the enhanced GPFQ, the average increment of accuracy exceeds $0.2\%$ for $b=4,5$ bits, and is greater than $0.7\%$ for $b=3$ bits. This demonstrates, empirically, that GPFQ can be easily adapted to incorporate heuristic modifications that improve performance. 

\begin{figure*}[ht]
\mbox{}\hfill 
\begin{subfigure}{.42\textwidth}
  \centering
  \includegraphics[width=\linewidth]{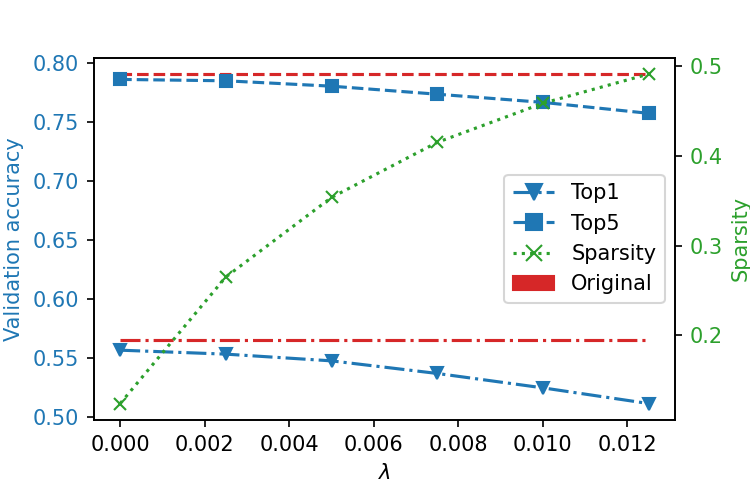}  
\caption{AlexNet with \eqref{eq-update-rule-soft}} 
\label{fig:Quant-AlexNet-sparse-L1}
\end{subfigure}\hfill
\begin{subfigure}{.42\textwidth}
  \centering
  \includegraphics[width=\linewidth]{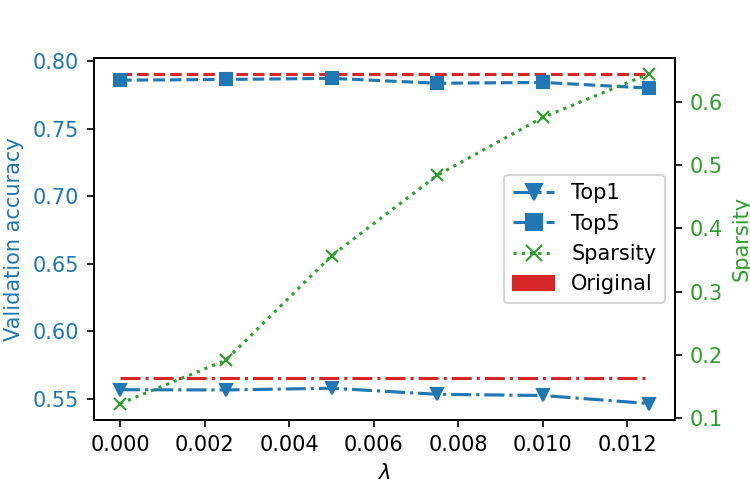}  
\caption{AlexNet with \eqref{eq-update-rule-hard}}
\label{fig:Quant-AlexNet-sparse-L0}
\end{subfigure}\hfill\mbox{}\\
\mbox{}\hfill 
\begin{subfigure}{.42\textwidth}
  \centering
  \includegraphics[width=\linewidth]{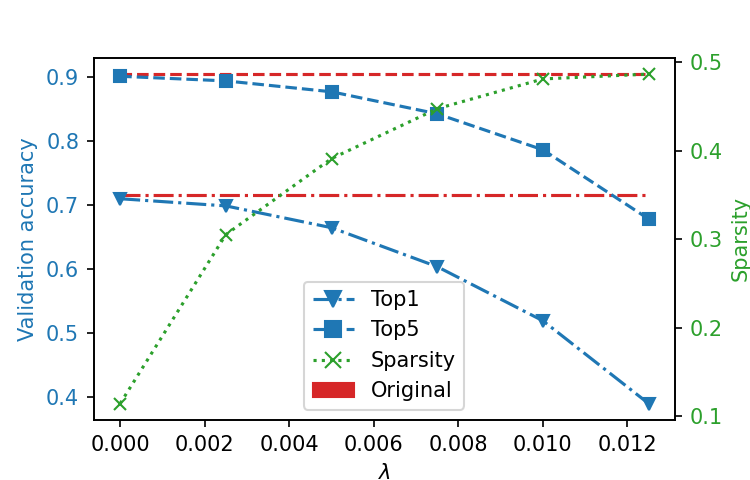}  
\caption{VGG-16 with \eqref{eq-update-rule-soft}} 
\label{fig:Quant-VGG16-sparse-L1}
\end{subfigure}\hfill
\begin{subfigure}{.42\textwidth}
  \centering
  \includegraphics[width=\linewidth]{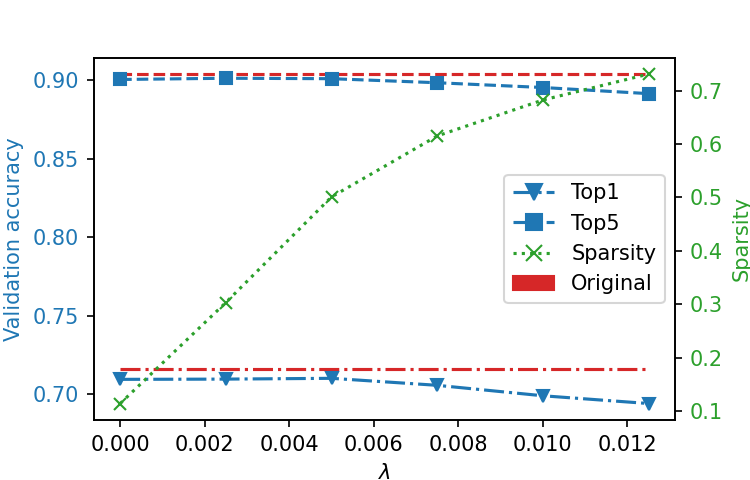}  
\caption{VGG-16 with \eqref{eq-update-rule-hard}} 
\label{fig:Quant-VGG16-sparse-L0}
\end{subfigure}\hfill\mbox{}\\
\mbox{}\hfill 
\begin{subfigure}{.42\textwidth}
  \centering
  \includegraphics[width=\linewidth]{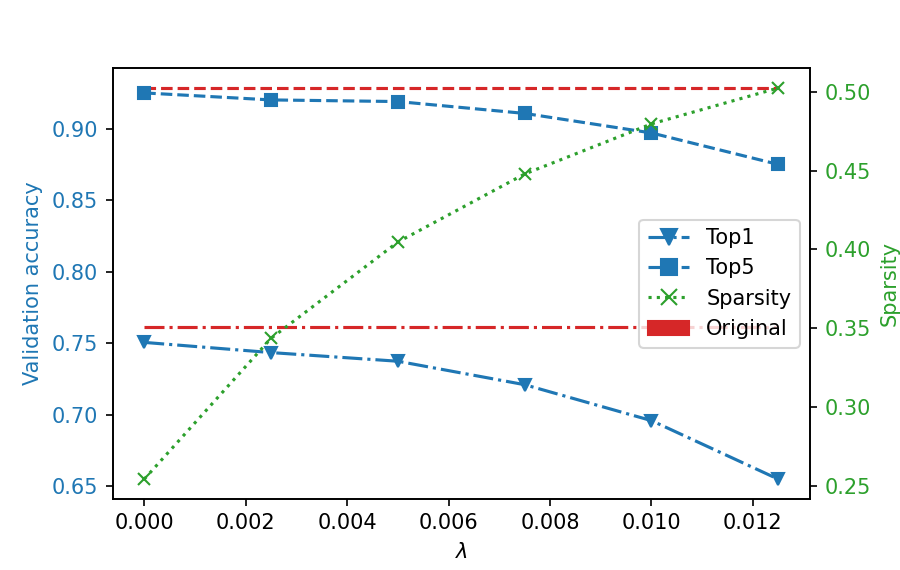}  
\caption{ResNet-50 with \eqref{eq-update-rule-soft}} 
\label{fig:Quant-ResNet50-sparse-L1}
\end{subfigure}\hfill 
\begin{subfigure}{.42\textwidth}
  \centering
  \includegraphics[width=\linewidth]{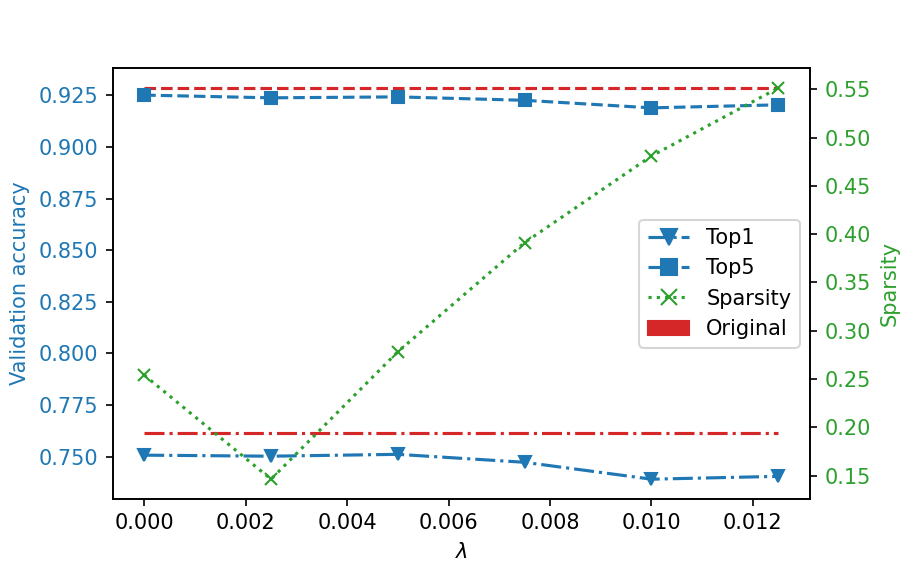}  
\caption{ResNet-50 with \eqref{eq-update-rule-hard}} 
\label{fig:Quant-ResNet50-sparse-L0}
\end{subfigure} 
\hfill\mbox{}
\caption{(1) Left $y$-axis: Top-1 (dashed-dotted lines) and Top-5 (dash lines) accuracy for original (in red) and quantized (in blue) models on ImageNet. (2) Right $y$-axis: The sparsity of quantized models plotted by dotted green lines.}
\label{fig:all_quant_acc_sparse}
\end{figure*}

\subsubsection{\textbf{Sparse Quantization}}\label{sec:exp-sparse}
For our final experiment, we illustrate the effects of sparsity via the sparse quantization introduced in \Cref{sec:sparse-GPFQ}. Recall that the sparse GPFQ with soft thresholding in \eqref{eq-update-rule-soft} uses alphabets $\A_K^\delta$ as in \eqref{eq-alphabet-midtread} while the version of hard thresholding, see \eqref{eq-update-rule-hard}, relies on alphabets $\A_K^{\delta, \lambda}$ as in \cref{eq-alphabet-midtread-variant}. In the setting of our experiment, both $K$ and $\delta$ are still defined and computed as in \Cref{sec:exp-setup}, where the number of bits $b=5$ and the corresponding scalar $C>0$ and batch size $m\in\mathbb{N}$ for each neural network is provided by \cref{table:model-results}. Moreover, the \emph{sparsity} of a given neural network is defined as the proportion of zeros in the weights. According to \cref{eq-update-rule-soft} and \cref{eq-update-rule-hard}, in general, the sparsity of DNNs is boosted as $\lambda$ increases. Hence, we treat $\lambda>0$ as a variable to control sparsity and explore its impact on validation accuracy of different DNNs. As shown in \cref{fig:all_quant_acc_sparse}, we quantize AlexNet, VGG-16, and ResNet-50 using both \eqref{eq-update-rule-soft} and \eqref{eq-update-rule-hard}, with $\lambda\in \{0, 0.0025, 0.005, 0.0075, 0.01, 0.0125 \}$. Curves for validation accuracy and sparsity are plotted against  $\lambda$. We note that, for all tested models, sparse GPFQ with hard thresholding, i.e. \eqref{eq-update-rule-hard}, outperforms soft thresholding, achieving significantly higher sparsity and better accuracy. For example, by quantizing AlexNet and VGG-16 with \eqref{eq-update-rule-hard}, one can maintain near-original model accuracy when half the weights are quantized to zero, which implies a remarkable compression rate $\frac{0.5b}{32}=\frac{2.5}{32}\approx  7.8\%$. Similarly, \cref{fig:Quant-ResNet50-sparse-L0} and \cref{fig:Quant-ResNet50-sparse-L1} show that ResNet-50 can attain $40\%$ sparsity with subtle decrement in accuracy. Additionally, in all cases, one can expect to get higher sparsity by increasing $\lambda$ while the validation accuracy tends to drop gracefully. Moreover, in \cref{fig:Quant-ResNet50-sparse-L1}, we observe that the sparsity of quantized ResNet50 with $\lambda=0.0025$ is even lower than the result when thresholding functions are not used, that is, $\lambda=0$. A possible reason is given as follows. In contrast with $\A_K^\delta$, the alphabet $\A_K^{\delta,\lambda}$ has only one element $0$ between $-\lambda$ and $\lambda$. Thus, to compensate for the lack of small alphabet elements and also reduce the path following error, sparse GPFQ in \eqref{eq-update-rule-hard} converts more weights to nonzero entries of $\A_K^{\delta,\lambda}$, which in turn dampens the upward trend in sparsity.

\section*{Acknowledgements}
This work was supported in part by National Science Foundation Grant DMS-2012546. The authors thank Eric Lybrand for stimulating discussions on the topics of this paper.

\bibliographystyle{abbrvnat}
\bibliography{citations}

\newpage 
\appendix
\section{Useful Lemmata}
\begin{lemma}\label{lemma-update-rule-dynamic}
In the context of \eqref{eq-update-rule-dynamic}, we have $q_t = \mathcal{Q}\biggl(\frac{\langle \widetilde{X}_t^{(i-1)}, u_{t-1}+w_t X_t^{(i-1)} \rangle}{\|\widetilde{X}^{(i-1)}_t\|_2^2} \biggr)$. Here, we suppose $\widetilde{X}_t^{(i-1)}\neq 0$.
\end{lemma}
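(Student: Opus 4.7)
The plan is to recognize that the optimization defining $q_t$ in \eqref{eq-update-rule-dynamic} is a quadratic minimization in the scalar variable $p$ over the discrete alphabet $\mathcal{A}$, and that after completing the square this reduces to finding the alphabet element closest to the unconstrained real minimizer. This, by definition \eqref{eq:MSQ-midtread} of the MSQ, is exactly $\mathcal{Q}$ applied to that real minimizer.

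Concretely, I would set $v := u_{t-1} + w_t X_t^{(i-1)} \in \R^m$ and $y := \widetilde{X}_t^{(i-1)} \in \R^m$ with $y \neq 0$, and expand
\[
\|v - py\|_2^2 = \|v\|_2^2 - 2p \langle v, y\rangle + p^2 \|y\|_2^2.
\]
Since $\|y\|_2^2 > 0$, this is a strictly convex quadratic in $p \in \R$ whose unconstrained minimizer is $p^* := \langle y, v\rangle/\|y\|_2^2$. Completing the square gives
\[
\|v - py\|_2^2 = \|y\|_2^2 (p - p^*)^2 + \bigl(\|v\|_2^2 - \|y\|_2^2 (p^*)^2\bigr),
\]
where the parenthesized term does not depend on $p$. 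Hence minimizing the left-hand side over $p \in \mathcal{A}$ is equivalent to minimizing $(p - p^*)^2$, i.e.\ $|p - p^*|$, over $p \in \mathcal{A}$.

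By the very definition of the MSQ in \eqref{eq:MSQ-midtread}, the minimizer is $\mathcal{Q}(p^*)$, and substituting the expression for $p^*$ yields the claimed closed form. One small caveat I would mention is that the $\argmin$ over $\mathcal{A}$ can be non-unique (when $p^*$ lands exactly on a midpoint between two alphabet elements), but this occurs on a measure-zero set and any consistent tie-breaking rule (matching the one implicit in $\mathcal{Q}$) makes the identity hold verbatim.

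There is no substantive obstacle here; the only thing to be careful about is the nondegeneracy assumption $\widetilde{X}_t^{(i-1)} \neq 0$, which is stated in the lemma and ensures the quadratic is strictly convex so that completing the square is valid and $p^*$ is well-defined.
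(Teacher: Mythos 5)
Your proposal is correct and follows essentially the same route as the paper: expand the square, discard the $p$-independent terms, complete the square to reduce the minimization over $\mathcal{A}$ to $\argmin_{p\in\mathcal{A}}|p-p^*|$ with $p^*=\langle \widetilde{X}_t^{(i-1)}, u_{t-1}+w_t X_t^{(i-1)}\rangle/\|\widetilde{X}_t^{(i-1)}\|_2^2$, and invoke the definition \eqref{eq:MSQ-midtread} of $\mathcal{Q}$. Your added remark on tie-breaking at midpoints is a sensible (if minor) point that the paper leaves implicit.
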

\begin{proof}

According to \eqref{eq-update-rule-dynamic}, $q_t = \arg\min\limits_{p\in\mathcal{A}}\bigl\| u_{t-1} +w_t X_t^{(i-1)}-p\widetilde{X}^{(i-1)}_t\bigr\|_2^2$. Expanding the square and removing the terms irrelevant to $p$, we obtain
\begin{align*}
q_t &= \arg\min\limits_{p\in\mathcal{A}} \Bigl( p^2 \|\widetilde{X}_t^{(i-1)}\|_2^2 - 2p\langle \widetilde{X}_t^{(i-1)}, u_{t-1}+w_t X_t^{(i-1)}\rangle   \Bigr)\\
&=\arg\min\limits_{p\in\mathcal{A}}\biggl( p^2 - 2p\cdot \frac{\langle  \widetilde{X}_t^{(i-1)},u_{t-1}+w_t X_t^{(i-1)}  \rangle}{\|\widetilde{X}_t^{(i-1)}\|_2^2}\biggr) \\
&= \arg\min\limits_{p\in\mathcal{A}}\biggl( p-\frac{\langle  \widetilde{X}_t^{(i-1)},u_{t-1}+w_t X_t^{(i-1)}  \rangle}{\|\widetilde{X}_t^{(i-1)}\|_2^2}\biggr)^2 \\
&= \arg\min\limits_{p\in\mathcal{A}}\biggl| p-\frac{\langle  \widetilde{X}_t^{(i-1)},u_{t-1}+w_t X_t^{(i-1)}  \rangle}{\|\widetilde{X}_t^{(i-1)}\|_2^2}\biggr| \\
&= \mathcal{Q}\biggl(\frac{\langle \widetilde{X}_t^{(i-1)}, u_{t-1}+w_t X_t^{(i-1)} \rangle}{\|\widetilde{X}^{(i-1)}_t\|_2^2} \biggr).
\end{align*}
In the last equality, we used the definition of \eqref{eq:MSQ-midtread}.
\end{proof}

\begin{lemma}\label{lemma:update-rule-optimization}
Suppose $\widetilde{X}_t^{(i-1)}\neq 0$. The closed-form expression of $q_t$ in \eqref{eq:update-rule-optimization} is given by 
$q_t = \mathcal{Q}\circ s_\lambda\biggl( \frac{\langle \widetilde{X}_t^{(i-1)}, u_{t-1}+w_t X_t^{(i-1)} \rangle}{\|\widetilde{X}^{(i-1)}_t\|_2^2}\biggr)$.
Here, $s_\lambda(x):=\sign(x)\max\{|x|-\lambda,0\}$ is the soft thresholding function.
\end{lemma}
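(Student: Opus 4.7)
The plan is to reduce \eqref{eq:update-rule-optimization} to a one-dimensional scalar problem and then exploit the symmetry of $\mathcal{A}$ together with the piecewise-linear structure of $|\cdot|$. First I would expand the squared norm, discard every term independent of $p$, and divide through by the positive scalar $\|\widetilde{X}_t^{(i-1)}\|_2^2$ (which does not affect the $\arg\min$). Setting
\[
z \,:=\, \frac{\langle \widetilde{X}_t^{(i-1)},\, u_{t-1} + w_t X_t^{(i-1)}\rangle}{\|\widetilde{X}_t^{(i-1)}\|_2^2},
\]
the problem collapses to $q_t = \arg\min_{p\in\mathcal{A}} \bigl(\tfrac{1}{2}(p-z)^2 + \lambda|p|\bigr)$, just as in the proof of \cref{lemma-update-rule-dynamic} but with the extra $\lambda|p|$ term.

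Next I would use the symmetry $p\in\mathcal{A}\Leftrightarrow -p\in\mathcal{A}$ to restrict attention to $p$ satisfying $pz\geq 0$. Indeed, if $\sign(p) = -\sign(z)$, then flipping the sign of $p$ keeps $|p|$ unchanged while decreasing $\tfrac{1}{2}(p-z)^2$ by exactly $2pz > 0$, contradicting optimality. On the remaining half-line (together with $p=0$) we have $|p| = p\,\sign(z)$, so completing the square rewrites the objective as
\[
\tfrac{1}{2}\bigl(p - (z - \lambda\,\sign(z))\bigr)^2 \,+\, \text{const},
\]
and therefore $q_t$ is the element of $\mathcal{A}$ lying on the same side of $0$ as $z$ (including $0$ itself) closest to $z - \lambda\,\sign(z)$.

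The argument then splits into two cases according to whether $|z|$ exceeds $\lambda$. If $|z|\geq\lambda$, then $z - \lambda\,\sign(z) = s_\lambda(z)$ has the same sign as $z$, so the side-constrained closest point agrees with the closest point over all of $\mathcal{A}$, which is $\mathcal{Q}(s_\lambda(z))$ by definition of the MSQ. If $|z|<\lambda$, then $z - \lambda\,\sign(z)$ lies on the opposite side of $0$ from $z$, so among admissible $p$ the closest is $p=0$; meanwhile $s_\lambda(z)=0$ and $\mathcal{Q}(0)=0$, again giving agreement.

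The main thing to verify carefully is the sign-preservation property of $\mathcal{Q}$ — namely, that the element of $\mathcal{A}$ closest to a nonnegative (resp.\ nonpositive) number is itself nonnegative (resp.\ nonpositive) — which is needed to identify the side-constrained closest point with $\mathcal{Q}$ applied to the same target. This follows from the same reflection argument used above applied to the definition \eqref{eq:MSQ-midtread}, together with $0\in\mathcal{A}$. No probabilistic ingredients are required: the statement is a purely algebraic identity between two equivalent recipes for rounding a single scalar.
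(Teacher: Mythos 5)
Your proof is correct, but it takes a genuinely different route from the paper's. Both arguments begin identically, by expanding the square and reducing to the scalar problem $\arg\min_{p\in\mathcal{A}}\bigl(\tfrac12 p^2-\alpha_t p+\lambda|p|\bigr)$. From there the paper proceeds by brute force: it fixes the sign of $\alpha_t$, uses convexity of the objective on each half-line to characterize the discrete minimizer over $k\in\{-K,\dots,K\}$ via comparisons with its two neighbors $g_t((k\pm1)\delta)$, handles the edge cases $k=0$ and $k=K$ separately, and reads off the explicit bin decomposition \eqref{eq:update-rule-optimization-eq5}--\eqref{eq:update-rule-optimization-eq6}, which it then recognizes as $\mathcal{Q}\circ s_\lambda$. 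You instead exploit the symmetry $\mathcal{A}=-\mathcal{A}$ to restrict to $p$ with $p\,\sign(z)\ge 0$, absorb the $\ell_1$ penalty into a shift of the target by completing the square, and reduce to a plain nearest-point problem with target $z-\lambda\sign(z)$; the dichotomy $|z|\gtrless\lambda$ then yields $\mathcal{Q}(s_\lambda(z))$ directly. Your argument is shorter, avoids enumerating the grid, and would work verbatim for any symmetric alphabet containing $0$, not just the equispaced midtread alphabet \eqref{eq-alphabet-midtread}; the paper's computation, on the other hand, produces the explicit quantization bins, which makes the boundary and saturation cases fully concrete. Two small points to tidy up: in your reflection step the decrease in $\tfrac12(p-z)^2$ when replacing $p$ by $-p$ is $-2pz=2|pz|>0$ (you wrote ``$2pz>0$'' while assuming $pz<0$); and, like the paper, you silently ignore ties when the shifted target falls exactly on a bin boundary, which is harmless since the minimizer is then non-unique and $\mathcal{Q}$ selects one of the minimizers.
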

\begin{proof}
Expanding the square and removing the terms irrelevant to $p$, we obtain
\begin{align}\label{eq:update-rule-optimization-eq1}
q_t &= \arg\min\limits_{p\in\A} \Bigl( \frac{p^2}{2} \|\widetilde{X}_t^{(i-1)}\|_2^2 - p\langle \widetilde{X}_t^{(i-1)}, u_{t-1}+w_t X_t^{(i-1)}\rangle+\lambda |p|\|\widetilde{X}_t^{(i-1)}\|_2^2  \Bigr) \nonumber \\
&=\arg\min\limits_{p\in\A}\biggl( \frac{p^2}{2} - p\cdot \frac{\langle  \widetilde{X}_t^{(i-1)},u_{t-1}+w_t X_t^{(i-1)}  \rangle}{\|\widetilde{X}_t^{(i-1)}\|_2^2} + \lambda |p|\biggr) \nonumber \\
&=\arg\min\limits_{p\in\A}\biggl( \frac{p^2}{2}-\alpha_t p + \lambda |p|\biggr) 
\end{align}
where 
$\alpha_t:= \frac{\langle \widetilde{X}_t^{(i-1)}, u_{t-1}+w_t X_t^{(i-1)} \rangle}{\|\widetilde{X}^{(i-1)}_t\|_2^2}$. Define $g_t(p):=\frac{1}{2}p^2-\alpha_t p+\lambda |p|$ for $p\in\R$. By \eqref{eq-alphabet-midtread}, we have $q_t=\arg\min_{p\in\A} g_t(p)=\arg\min_{\substack{ |k|\leq K \\ k\in\mathbb{Z}}} g_t(k\delta)$.
Now we analyze two cases $\alpha_t\geq 0$ and $\alpha_t<0$. The idea is to investigate the behaviour of $g_t(k\delta)$ over $k\in \{-K,...,K\}$.

\noindent (I) Assume $\alpha_t\geq 0$. Since $g_t(k\delta)>g_t(0)=0$ for all $-K\leq k \leq -1$, then $g_t(k\delta)$ is minimized at some $k\geq 0$. Note that $g_t(p)$ is a convex function passing through the origin. So, for $1\leq k\leq K-1$, $g_t(k\delta)$ is the minimum if and only if 
$g_t(k\delta) \leq \min\{g_t((k+1)\delta), g_t((k-1)\delta)\}$.

It is easy to verify that the condition above is equivalent to 
\begin{equation}\label{eq:update-rule-optimization-eq2}
\Bigl(k-\frac{1}{2} \Bigr)\delta+\lambda  \leq \alpha_t\leq \Bigl(k+\frac{1}{2} \Bigr)\delta +\lambda.
\end{equation}
It only remains to check $k=0$ and $k=K$. For $k=0$, note that when $\alpha_t\in [0, \delta/2+\lambda]$, we have 
\begin{equation}\label{eq:update-rule-optimization-eq3}
    g_t(\delta)\geq g_t(0)=0,
\end{equation}
and if $\alpha_t\geq (K-\frac{1}{2})\delta+\lambda$, then 
\begin{equation}\label{eq:update-rule-optimization-eq4}
    g_t(K\delta)\leq g_t((K-1)\delta).
\end{equation}
Combining \eqref{eq:update-rule-optimization-eq2}, \eqref{eq:update-rule-optimization-eq3}, and \eqref{eq:update-rule-optimization-eq4}, we conclude that 
\begin{equation}\label{eq:update-rule-optimization-eq5}
q_t=\arg\min_{\substack{ |k|\leq K \\ k\in\mathbb{Z}}} g_t(k\delta)=
\begin{cases}
0 & \text{if}\ 0\leq \alpha_t< \frac{\delta}{2}+\lambda, \\
k\delta &\text{if} \ |\alpha_t - \lambda - k\delta|\leq \frac{\delta}{2}\ \text{and} \ 1\leq k\leq K-1, \\
K\delta &\text{if} \ \alpha_t\geq \lambda+\frac{\delta}{2}+(K-1)\delta.
\end{cases}
\end{equation}
\noindent (II) In the opposite case where $\alpha_t<0$, it suffices to minimize $g_t(k\delta)$ with $k\leq 0$ because $g_t(k\delta)>0$ for all $k\geq 1$. Again, notice that $g_t(p)$ is a convex function on $[-\infty, 0]$ satisfying $g_t(0)=0$. Applying a similar argument as in the case $\alpha_t\geq 0$, one can get 
\begin{equation}\label{eq:update-rule-optimization-eq6}
q_t=\arg\min_{\substack{ |k|\leq K \\ k\in\mathbb{Z}}} g_t(k\delta)=
\begin{cases}
0 & \text{if}\  -\frac{\delta}{2}-\lambda< \alpha_t<0, \\
k\delta &\text{if} \ |\alpha_t + \lambda - k\delta|\leq \frac{\delta}{2}\ \text{and} \ -(K-1)\leq k\leq -1,\\
-K\delta &\text{if} \ \alpha_t \leq  -\lambda -\frac{\delta}{2}-(K-1)\delta.
\end{cases}
\end{equation}
It follows from \eqref{eq:update-rule-optimization-eq5} and \eqref{eq:update-rule-optimization-eq6} that $ q_t = \mathcal{Q}(s_\lambda(\alpha_t))=\mathcal{Q}\circ s_\lambda\biggl( \frac{\langle \widetilde{X}_t^{(i-1)}, u_{t-1}+w_t X_t^{(i-1)} \rangle}{\|\widetilde{X}^{(i-1)}_t\|_2^2}\biggr) $
where $s_\lambda(x):=\sign(x)\max\{|x|-\lambda,0\}$ is the soft thresholding function.
\end{proof}

\begin{lemma}\label{lemma-uniform-square}
Let $\mathrm{Unif}(B_r)$ denote the uniform distribution on the closed ball $B_r\subset\R^m$ with center at the origin and radius $r>0$. Suppose that the random vector $X\in \R^m$ is drawn from $\mathrm{Unif}(B_r)$. Then we have
$\E\|X\|_2^2=\frac{mr^2}{m+2}$.
\end{lemma}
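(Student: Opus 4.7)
The plan is to compute the expectation directly as an integral over $B_r$ using the rotational symmetry of the uniform distribution, then reduce to a one-dimensional radial integral via the coarea formula (spherical shells).

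First I would write
\[
\E\|X\|_2^2 = \frac{1}{\mathrm{Vol}(B_r)} \int_{B_r} \|x\|_2^2\, dx,
\]
and exploit the fact that the integrand depends only on $\|x\|_2$. Using the layer-cake/spherical decomposition $dx = \rho^{m-1}\, d\rho\, d\sigma(\theta)$, where $\sigma$ is the surface measure on $\mathbb{S}^{m-1}$ with total mass $\omega_{m-1}$, the numerator becomes
\[
\int_{B_r} \|x\|_2^2\, dx = \omega_{m-1}\int_0^r \rho^2 \cdot \rho^{m-1}\, d\rho = \omega_{m-1}\cdot \frac{r^{m+2}}{m+2}.
\]
Similarly, $\mathrm{Vol}(B_r) = \omega_{m-1}\int_0^r \rho^{m-1}\, d\rho = \omega_{m-1}\cdot \frac{r^m}{m}$.

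Taking the ratio cancels $\omega_{m-1}$ entirely, which is convenient since it avoids invoking the explicit formula for the surface area of the unit sphere. This yields
\[
\E\|X\|_2^2 = \frac{\omega_{m-1} r^{m+2}/(m+2)}{\omega_{m-1} r^m/m} = \frac{m\, r^2}{m+2},
\]
as claimed. There is no real obstacle: the only thing to be careful about is matching the normalization constants for $\mathrm{Vol}(B_r)$ and the spherical integral so that $\omega_{m-1}$ cancels cleanly. An equally short alternative would be to use the scaling $X = r U V^{1/m}$ with $U$ uniform on $\mathbb{S}^{m-1}$ and $V$ uniform on $[0,1]$ independent, giving $\E\|X\|_2^2 = r^2 \E V^{2/m} = r^2 \cdot \frac{m}{m+2}$.
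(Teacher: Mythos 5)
Your proposal is correct and follows essentially the same route as the paper: both reduce $\E\|X\|_2^2$ to a radial integral via spherical coordinates and take the ratio of $\int_0^r \rho^{m+1}\,d\rho$ to the volume. The only cosmetic difference is that you cancel the surface-area constant $\omega_{m-1}$ by writing $\mathrm{Vol}(B_r)=\omega_{m-1}r^m/m$, whereas the paper substitutes the explicit Gamma-function formulas and simplifies; both yield $\frac{mr^2}{m+2}$.
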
 
\begin{proof} 
Note that the density function of $\mathrm{Unif}(B_r)$ is given by $f(x)=\frac{1}{\mathrm{vol}(B_r)}\mathbbm{1}_{B_r}(x)$ where $\mathrm{vol}(B_r)=r^m\pi^{\frac{m}{2}}/\Gamma(\frac{m}{2}+1)$ is the volume of $B_r$.  Moreover, by integration in spherical coordinates, one can get
\begin{align*}
    &\E \|X\|_2^2 = \int_{\R^m} \|x\|_2^2 f(x)\, dx = \int_0^\infty \int_{\mathbb{S}^{m-1}} z^{m-1} \|zx\|_2^2  f(zx)\, d\sigma(x)\, dz  \\
    &= \int_0^r \int_{\mathbb{S}^{m-1}} \frac{z^{m+1}}{\mathrm{vol}(B_r)} \, d\sigma(x)\, dz  = \frac{\sigma(\mathbb{S}^{m-1})}{\mathrm{vol}(B_r)} \int_0^r z^{m+1} \, dz 
    = \frac{mr^2}{m+2}.
\end{align*}
Here, $\sigma(\mathbb{S}^{m-1})=2\pi^{\frac{m}{2}}/\Gamma(\frac{m}{2})$ is the spherical measure (area) of the unit sphere $\mathbb{S}^{m-1}\subset\R^m$. 
\end{proof}

\textbf{Orthogonal Projections.} Given a closed subspace $S \subseteq \mathbb{R}^m$, we denote the orthogonal projection onto $S$ by $P_S$. In particular, if $z\in\mathbb{R}^m$ is a vector, then we use $P_z$ and $P_{z^\perp}$ to represent orthogonal projections onto $\mathrm{span}(z)$ and $\mathrm{span}(z)^\perp$ respectively. Hence, for any $x\in\mathbb{R}^m$, we have 
\begin{equation}\label{eq-orth-proj}
P_z(x)=\frac{\langle z, x\rangle z}{\|z\|_2^2}, \quad x = P_z(x) + P_{z^\perp}(x), \quad\text{and}\quad \|x\|_2^2 = \|P_z(x)\|_2^2 + \|P_{z^\perp}(x)\|_2^2.
\end{equation}      

\begin{lemma}\label{lemma-diff}
Let $\mathcal{A}$ be as in \eqref{eq-alphabet-midtread} with step size $\delta>0$, and largest element $q_{\text{max}}$. Suppose that $w\in\R^{N_0}$ satisfies $\|w\|_\infty\leq q_{\mathrm{max}}$, and consider the quantization scheme given by \eqref{eq-update-rule-single}. Let $\theta_t:=\angle(X_t, u_{t-1})$ be the angle between $X_t$ and $u_{t-1}$. Then, for $t=1,2,\ldots,N_0$, we have 
\begin{equation}\label{lemma-diff-eq}
\|u_t\|_2^2-\|u_{t-1}\|_2^2\leq 
\begin{cases}
\frac{\delta^2}{4}\|X_t\|_2^2-\|u_{t-1}\|_2^2\cos^2\theta_t &\text{if}\; \Bigl| w_t+\frac{\|u_{t-1}\|_2}{\|X_t\|_2}\cos\theta_t \Bigr|\leq q_{\text{max}}, \\
0 &\text{otherwise}.
\end{cases}  
\end{equation}
\end{lemma}
\begin{proof}
By applying \eqref{eq-orth-proj} and \eqref{eq-update-rule-single}, we get
\begin{align}\label{lemma-diff-eq1}
\|P_{X_t}(u_t)\|_2^2 &= \frac{(X_t^\top u_t)^2}{\|X_t\|_2^2} =\frac{(X_t^\top u_{t-1}+(w_t-q_t)\|X_t\|_2^2)^2}{\|X_t\|_2^4}\|X_t\|_2^2 \nonumber\\
&= \Bigl( w_t + \frac{X_t^\top u_{t-1}}{\|X_t\|_2^2}-q_t\Bigr)^2\|X_t\|_2^2 
= \Bigl( w_t+\frac{\|u_{t-1}\|_2}{\|X_t\|_2}\cos\theta_t -q_t\Bigr)^2\|X_t\|_2^2.
\end{align}
The last equation holds because $X_t^\top u_{t-1}=\|X_t\|_2\|u_{t-1}\|_2\cos\theta_t$. Note that 
\[
\Bigl( w_t+\frac{\|u_{t-1}\|_2}{\|X_t\|_2}\cos\theta_t -q_t\Bigr)^2-\Bigl(\frac{\|u_{t-1}\|_2}{\|X_t\|_2}\cos\theta_t \Bigr)^2 =
\Bigl(\underbrace{ w_t+\frac{2\|u_{t-1}\|_2}{\|X_t\|_2}\cos\theta_t -q_t}_{\text{(I)}}\Bigr)(\underbrace{w_t-q_t}_{\text{(II)}}),
\]
$|w_t|\leq q_{\text{max}}$, and $q_t=\mathcal{Q}\Bigl(w_t+\frac{\|u_{t-1}\|_2}{\|X_t\|_2}\cos\theta_t \Bigr)$. If $\Bigl( w_t+\frac{\|u_{t-1}\|_2}{\|X_t\|_2}\cos\theta_t \Bigr)> q_{\text{max}}$, then $q_t=q_{\text{max}}$ and thus $0\leq q_t-w_t\leq \frac{\|u_{t-1}\|_2}{\|X_t\|_2}\cos\theta_t$. So 
$\text{(I)}\geq w_t+2(q_t-w_t)-q_t=q_t-w_t\geq 0$
and $\text{(II)}\leq 0$. Moreover, if $\Bigl( w_t+\frac{\|u_{t-1}\|_2}{\|X_t\|_2}\cos\theta_t \Bigr)<- q_{\text{max}}$, then $q_t=-q_{\text{max}}$ and $\frac{\|u_{t-1}\|_2}{\|X_t\|_2}\cos\theta_t \leq q_t-w_t\leq 0$. Hence, $\text{(I)}\leq w_t+2(q_t-w_t)-q_t=q_t-w_t\leq 0$ and $\text{(II)}\geq 0$. It follows that 
\begin{equation}\label{lemma-diff-eq3}
\Bigl( w_t+\frac{\|u_{t-1}\|_2}{\|X_t\|_2}\cos\theta_t  -q_t\Bigr)^2\leq\Bigl(\frac{\|u_{t-1}\|_2}{\|X_t\|_2}\cos\theta_t \Bigr)^2 
\end{equation}
when $\Bigl| w_t+\frac{\|u_{t-1}\|_2}{\|X_t\|_2}\cos\theta_t \Bigr|> q_{\text{max}}$. Now, assume that $\Bigl| w_t+\frac{\|u_{t-1}\|_2}{\|X_t\|_2}\cos\theta_t \Bigr|\leq q_{\text{max}}$. In this case, since the argument of $\mathcal{Q}$ lies in the active range of $\mathcal{A}$, we obtain 
\begin{equation}\label{lemma-diff-eq4}
\Bigl( w_t+\frac{\|u_{t-1}\|_2}{\|X_t\|_2}\cos\theta_t -q_t\Bigr)^2\leq \frac{\delta^2}{4}.
\end{equation}
Applying \eqref{lemma-diff-eq3} and \eqref{lemma-diff-eq4} to \eqref{lemma-diff-eq1}, one can get 
\begin{equation}\label{lemma-diff-eq5}
    \|P_{X_t}(u_t)\|_2^2\leq 
\begin{cases}
\frac{\delta^2}{4}\|X_t\|_2^2 &\text{if}\; \Bigl| w_t+\frac{\|u_{t-1}\|_2}{\|X_t\|_2}\cos\theta_t \Bigr|\leq q_{\text{max}}, \\
 \|u_{t-1}\|_2^2\cos^2\theta_t  &\text{otherwise}.
\end{cases}
\end{equation}
Further, we have
\begin{equation}\label{lemma-diff-eq2}
P_{X_t^\perp}(u_t) = P_{X_t^\perp}(u_{t-1}+w_tX_t-q_tX_t)=P_{X_t^\perp}(u_{t-1}).
\end{equation}
It follows that 
\begin{align*}
\|u_t\|_2^2 -\|u_{t-1}\|_2^2 &= \|P_{X_t}(u_t)\|_2^2+\|P_{X_t^\perp}(u_t)\|_2^2-\|u_{t-1}\|_2^2 \\
&=\|P_{X_t}(u_t)\|_2^2 + \|P_{X_t^\perp}(u_{t-1})\|_2^2-\|u_{t-1}\|_2^2  &\text{(by \eqref{lemma-diff-eq2})} \\
&= \|P_{X_t}(u_t)\|_2^2-\|P_{X_t}(u_{t-1})\|_2^2 &\text{(using \eqref{eq-orth-proj})}\\
&= \|P_{X_t}(u_t)\|_2^2-\|u_{t-1}\|_2^2\cos^2\theta_t.
\end{align*}
Substituting $\|P_{X_t}(u_t)\|_2^2$ with its upper bounds in \eqref{lemma-diff-eq5}, we obtain \eqref{lemma-diff-eq}.
\end{proof}

\begin{lemma}\label{lemma-bound}
Let $\mathcal{A}$ be as in \eqref{eq-alphabet-midtread} with step size $\delta>0$, and  largest element $q_{\max}$. Suppose that $w\in\R^{N_0}$  satisfies $\|w\|_\infty\leq q_{\mathrm{max}}$, and consider the quantization scheme  given by \eqref{eq-update-rule-single}. Additionally, denote the information of the first $t-1$ quantization steps by a $\sigma$-algebra $\mathcal{F}_{t-1}$, and let $\beta, \eta>0$, $s^2\in (0,1)$. Then the following results hold for $t=1,2,\ldots,N_0$.
\begin{enumerate} 
    \item 
    $\E e^{\eta\|u_{t}\|_2^2} \leq \max\Bigl\{
    \E(e^{\frac{\eta\delta^2}{4}\|X_t\|_2^2}e^{\eta\|u_{t-1}\|_2^2(1-\cos^2\theta_t)}),
    \E e^{\eta\|u_{t-1}\|_2^2} \Bigr\} $.
    \item $
    \E(e^{\eta\beta\|u_{t-1}\|_2^2(1-\cos^2\theta_t)}\mid \mathcal{F}_{t-1}) \leq 
    -\E(\cos^2\theta_t\mid\mathcal{F}_{t-1})(e^{\eta\beta\|u_{t-1}\|_2^2}-1)+e^{\eta\beta\|u_{t-1}\|_2^2} $.
\end{enumerate}
Here, $\theta_t$ is the angle between $X_t$ and $u_{t-1}$.
\end{lemma}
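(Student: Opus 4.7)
The plan is to handle the two items by different mechanisms: item (1) via a case-split coming from the deterministic bound in \cref{lemma-diff}, and item (2) via convexity of the exponential on a bounded interval. For item (1), my starting point is \cref{lemma-diff}. Let $E_t$ denote the event $\{|w_t + (\|u_{t-1}\|_2/\|X_t\|_2)\cos\theta_t| \leq q_{\max}\}$. On $E_t$, the lemma yields $\|u_t\|_2^2 \leq \|u_{t-1}\|_2^2(1-\cos^2\theta_t) + \tfrac{\delta^2}{4}\|X_t\|_2^2$, while on $E_t^c$ it yields $\|u_t\|_2^2 \leq \|u_{t-1}\|_2^2$. Exponentiating each of these and partitioning by indicators gives the pointwise inequality $e^{\eta\|u_t\|_2^2} \leq \mathbbm{1}_{E_t}\, e^{\eta\|u_{t-1}\|_2^2(1-\cos^2\theta_t)}\,e^{\eta\delta^2\|X_t\|_2^2/4} + \mathbbm{1}_{E_t^c}\, e^{\eta\|u_{t-1}\|_2^2}$. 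Taking expectations on both sides and using that on each realization exactly one of the two indicators is active, so that the bound coincides with one of the two candidate expressions, will yield the claimed maximum-of-expectations upper bound.

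For item (2), I would invoke the convexity of $y \mapsto e^{ay}$ on the interval $[0,1]$, applied with $y := 1 - \cos^2\theta_t$ (which lies in $[0,1]$) and $a := \eta\beta\|u_{t-1}\|_2^2$ (which is $\mathcal{F}_{t-1}$-measurable and nonnegative). The chord inequality $e^{ay} \leq (1-y)\cdot 1 + y\cdot e^a$ for $y \in [0,1]$ then gives the pointwise bound $e^{ay} \leq \cos^2\theta_t + (1-\cos^2\theta_t)\,e^a$. Taking the conditional expectation given $\mathcal{F}_{t-1}$, using linearity and the $\mathcal{F}_{t-1}$-measurability of $a$, and rewriting $\cos^2\theta_t + (1-\cos^2\theta_t)\,e^a = e^a - \cos^2\theta_t(e^a - 1)$, will produce the stated inequality verbatim.

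The hard part will be item (1): promoting the indicator-weighted pointwise bound to the stated maximum-of-expectations form, rather than to a sum-of-expectations, requires careful book-keeping based on the disjointness of the two branches. The key observation is that on each sample point only one branch contributes, so the expectation decomposes into two pieces, each of which is dominated by one of the two candidate expressions, and the larger of the two therefore serves as an upper bound. Item (2), by contrast, is a clean application of the chord bound and carries no real surprises.
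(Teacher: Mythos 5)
Your proposal follows the paper's proof essentially step for step: part (2) is exactly the paper's argument (convexity of $x\mapsto e^{\eta\beta x\|u_{t-1}\|_2^2}$ on $[0,1]$ evaluated at $1-\cos^2\theta_t$, then the conditional expectation using $\mathcal{F}_{t-1}$-measurability of $\|u_{t-1}\|_2$), and part (1) rests, as in the paper, on the two-case bound of \cref{lemma-diff} followed by exponentiation.

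The one step that your write-up does not actually close is the one you flag as ``the hard part.'' From the pointwise bound $e^{\eta\|u_t\|_2^2}\le \mathbbm{1}_{E_t}A+\mathbbm{1}_{E_t^c}B$, with $A:=e^{\eta\delta^2\|X_t\|_2^2/4}e^{\eta\|u_{t-1}\|_2^2(1-\cos^2\theta_t)}$ and $B:=e^{\eta\|u_{t-1}\|_2^2}$, taking expectations gives $\E[\mathbbm{1}_{E_t}A]+\E[\mathbbm{1}_{E_t^c}B]$. The observation that each summand is dominated by the corresponding full expectation only yields the \emph{sum} $\E A+\E B$; and the pointwise identity ``only one branch contributes'' yields $\E\max\{A,B\}$, which dominates $\max\{\E A,\E B\}$ rather than being dominated by it. So ``the larger of the two therefore serves as an upper bound'' does not follow from disjointness of the branches; it would require something extra, e.g.\ that one of $\E[\mathbbm{1}_{E_t}(A-B)]\le 0$ or $\E[\mathbbm{1}_{E_t^c}(A-B)]\ge 0$ holds. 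For what it is worth, the paper's own proof passes over this same point: it treats the two cases of \cref{lemma-diff} as if the condition were deterministic, derives $\E e^{\eta\|u_t\|_2^2}\le \E A$ in one case and $\le \E B$ in the other, and declares that the maximum bound follows. Your indicator formulation is more honest about the randomness of the event $E_t$ and thereby makes visible exactly where the argument is delicate; but as justified in your sketch, the final passage to $\max\{\E A,\E B\}$ is a non sequitur, and to match the paper you would simply argue the two cases separately as it does.
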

\begin{proof}
\noindent (1) In the $t$-th step, by Lemma~\ref{lemma-diff}, we have
\[
\|u_t\|_2^2-\|u_{t-1}\|_2^2\leq 
\begin{cases}
\frac{\delta^2}{4}\|X_t\|_2^2-\|u_{t-1}\|_2^2\cos^2\theta_t &\text{if}\; \Bigl| w_t+\frac{\|u_{t-1}\|_2}{\|X_t\|_2}\cos\theta_t \Bigr|\leq q_{\text{max}}, \\
        0 &\text{otherwise},
\end{cases}  
\]
where $\theta_t=\angle (X_t, u_{t-1})$ is the angle between $X_t$ and $u_{t-1}$. On the one hand, if $\Bigl| w_t+\frac{\|u_{t-1}\|_2}{\|X_t\|_2}\cos\theta_t \Bigr|\leq q_{\text{max}}$, we obtain
\begin{equation}\label{lemma-bound-eq2}
\E e^{\eta\|u_t\|_2^2} = \E(e^{\eta(\|u_{t}\|_2^2-\|u_{t-1}\|_2^2)}e^{\eta\|u_{t-1}\|_2^2}) 
\leq \E(e^{\frac{\eta\delta^2}{4}\|X_t\|_2^2}e^{\eta\|u_{t-1}\|_2^2(1-\cos^2\theta_t)} )
\end{equation}
On the other hand, if $\Bigl| w_t+\frac{\|u_{t-1}\|_2}{\|X_t\|_2}\cos\theta_t \Bigr|> q_{\text{max}}$, we get
\begin{equation}\label{lemma-bound-eq3} 
\E e^{\eta\|u_t\|_2^2}=\E(e^{\eta(\|u_t\|_2^2-\|u_{t-1}\|_2^2)}e^{\eta\|u_{t-1}\|_2^2})\leq \E e^{\eta\|u_{t-1}\|_2^2}.
\end{equation}
Combining \eqref{lemma-bound-eq2} and \eqref{lemma-bound-eq3}, we conclude that
\[
\E e^{\eta\|u_{t}\|_2^2} \leq \max\Bigl\{
    \E(e^{\frac{\eta\delta^2}{4}\|X_t\|_2^2}e^{\eta\|u_{t-1}\|_2^2(1-\cos^2\theta_t)}),
    \E e^{\eta\|u_{t-1}\|_2^2} \Bigr\}.
\]

\noindent (2) Conditioning on $\mathcal{F}_{t-1}$, the function $f(x)=e^{\eta\beta x\|u_{t-1}\|_2^2}$ is convex. It follows that
\begin{align*}
\E(e^{\eta\beta\|u_{t-1}\|_2^2(1-\cos^2\theta_t)}\mid \mathcal{F}_{t-1}) &=\E(f(\cos^2\theta_t\cdot 0+(1-\cos^2\theta_t)\cdot 1)\mid \mathcal{F}_{t-1})  \\ 
&\leq \E( \cos^2\theta_t+(1-\cos^2\theta_t)e^{\eta\beta\|u_{t-1}\|_2^2} \mid \mathcal{F}_{t-1}) \\
&\leq \E(\cos^2\theta_t\mid \mathcal{F}_{t-1})+(1-\E(\cos^2\theta_t\mid\mathcal{F}_{t-1}))e^{\eta\beta\|u_{t-1}\|_2^2} \\
&= -\E(\cos^2\theta_t\mid\mathcal{F}_{t-1})(e^{\eta\beta\|u_{t-1}\|_2^2}-1)+e^{\eta\beta\|u_{t-1}\|_2^2}.
\end{align*}
\end{proof}

\section{Fusing Convolution and Batch Normalization Layers}
\label{sec:fusion-bn}
For many  neural networks, e.g. MobileNets and ResNets, a convolutional layer is usually followed by a batch normalization (BN) layer to normalize the output. Here, we show how our quantization approach admits a simple modification that takes into account such BN layers. Specifically, denote the convolution operator by * and suppose that a convolutional layer 
\begin{equation}\label{eq:conv2d}
    f_\mathrm{conv}(x):= w_\mathrm{conv} * x + b_\mathrm{conv}
\end{equation}
is followed by a BN layer given by 
\begin{equation}\label{eq:batchnorm2d}
f_\mathrm{bn} (x) := \frac{x-\hat{\mu}}{\sqrt{\hat{\sigma}^2 + \epsilon}} \cdot w_\mathrm{bn} + b_\mathrm{bn}.
\end{equation}
Here, $w_\mathrm{conv}$, $w_\mathrm{bn}$, $b_\mathrm{conv}$, and $b_\mathrm{bn}$ are learned parameters and $\hat{\mu}$, $\hat{\sigma}$ are the running mean and standard-deviation respectively while $\epsilon>0$ is to keep the denominator bounded away from 0. Note that the parameters in both \cref{eq:conv2d} and \cref{eq:batchnorm2d} are calculated per-channel over the mini-batches during training, but fixed thereafter. 

\begin{table}[ht]
\caption{Top-1 accuracy drop for ResNet-18 and ResNet-50.}
\label{table:fusion-resnet}
\centering 
\begin{tabular}{lcc|cc|cc}
\toprule
\multirow{2}{*}{Model} & \multirow{2}{*}{$b$} & \multirow{2}{*}{$m$} & \multicolumn{2}{c|}{Unfused} & \multicolumn{2}{c}{Fused}\\ \cline{4-7} 
& & & C & Acc Drop (\%) & C & Acc Drop (\%) \\
\midrule 
\multirow{4}{*}{ResNet-18} &4 & 2048 & \multirow{4}{*}{1.16} & 1.63 & \multirow{4}{*}{1.29} & 1.72 \\
&4 & 4096 & &  1.21 & & 1.18 \\
&5 & 2048 & &  0.71 & & 0.72 \\
&5 & 4096 & &  0.49 & & 0.51 \\
\midrule 
\multirow{3}{*}{ResNet-50}
& 5 & 512 & \multirow{3}{*}{1.81} & 0.97 & \multirow{3}{*}{1.82} & 1.03 \\
& 5 & 1024 & & 0.90 &  & 0.81\\
& 5 & 2048 & & 0.62 & & 0.64 \\
\bottomrule
\end{tabular}
\vskip -0.1in
\end{table}

Thus, to quantize the convolutional and subsequent BN layers simultaneously, we first observe that  we can write
\begin{equation}
\label{eq:fusion}
f_\mathrm{bn}\circ f_\mathrm{conv}(x) = w_\mathrm{new} * x + b_\mathrm{new}
\end{equation}
with 
\[
w_\mathrm{new} := \frac{w_\mathrm{conv} w_\mathrm{bn}}{\sqrt{\hat{\sigma}^2+\epsilon}}, \quad b_\mathrm{new}:= \frac{(b_\mathrm{conv} -\hat{\mu})w_\mathrm{bn}}{\sqrt{\hat{\sigma}^2+\epsilon}} + b_\mathrm{bn}. 
\]

As a result, to quantize the convolutional and subsequent BN layer simulatenously, we can simply quantize the parameters $w_\mathrm{new}, b_\mathrm{new}$ in \eqref{eq:fusion} using our methods. 
Although BN layers are not quantized in our experiments in \Cref{sec:experiments}, we will show here that the proposed algorithm GPFQ is robust to neural network fusion as described above. In \cref{table:fusion-resnet}, we compare the Top-1 quantization accuracy between fused ResNets and unfused ResNets when quantized using our methods with different bits and batch sizes. Note that the scalar $C$ for unfused networks remains the same as in \cref{table:model-results} while $C$ for fused networks is selected using the procedure after \cref{eq:exp-step-size}. We observe that the performance of GPFQ for fused ResNet-18 and ResNet-50 is quite similar to that for unfused networks.

\section{Quantizing Large Weights}
\label{sec:results-large-weights}

In this section, we demonstrate that the proposed quantization algorithm \eqref{eq-update-rule-single} is still effective for weights with magnitudes that exceed the largest element, $q_{\max}=K\delta$, in the alphabet set $\mathcal{A}$. 

Specifically, we prove \cref{thm:large-weights}, bounding the expected error when $n:=n(\delta)$ entries of $w$ are greater than $K\delta$. In turn, \cref{thm:large-weights} suggests that in some cases, choosing $\delta$ such that $n(\delta)>0$ may be advantageous, a finding that is consistent with our experiments in \Cref{sec:experiments}. We begin with the following lemma needed to prove \cref{thm:large-weights}.


\begin{lemma}\label{lemma:bound-large-weights}
Let $\mathcal{A}$ be as in \eqref{eq-alphabet-midtread} with step size $\delta>0$, and largest element $q_{\text{max}}$. Suppose that $w\in\R^{N_0}$ satisfies $\|w\|_\infty\leq k q_{\mathrm{max}}$ for some $k>1$, and consider the quantization scheme given by \eqref{eq-update-rule-single}. Let $\theta_t:=\angle(X_t, u_{t-1})$ be the angle between $X_t$ and $u_{t-1}$. Then
\begin{equation}\label{eq:bound-large-weights-eq1}
\|u_t\|_2^2 \leq 
\begin{cases}
\frac{\delta^2}{4}\|X_t\|_2^2+\|u_{t-1}\|_2^2(1-\cos^2\theta_t) &\text{if}\; \bigl| w_t+\frac{\|u_{t-1}\|_2}{\|X_t\|_2}\cos\theta_t \bigr|\leq q_{\max}, \\
\|u_{t-1}\|_2^2 &\text{if}\; \bigl| w_t+\frac{\|u_{t-1}\|_2}{\|X_t\|_2}\cos\theta_t \bigr|> q_{\max} \; \text{and} \; |w_t|\leq q_\mathrm{max}, \\
(\|u_{t-1}\|_2 + (k-1)q_\mathrm{max}\|X_t\|_2)^2 &\text{if} \; \bigl| w_t+\frac{\|u_{t-1}\|_2}{\|X_t\|_2}\cos\theta_t \bigr|> q_{\max} \; \text{and} \; |w_t|> q_\mathrm{max}
\end{cases}  
\end{equation}
holds for $t=1,2,\ldots,N_0$.
\end{lemma}

\begin{proof}
The first two cases in \eqref{eq:bound-large-weights-eq1} are covered by \cref{lemma-diff}. So it remains to consider the case where $\bigl| w_t+\frac{\|u_{t-1}\|_2}{\|X_t\|_2}\cos\theta_t \bigr|> q_{\max}$ and $|w_t|> q_\mathrm{max}$. As in the proof of \cref{lemma-diff}, we have 
\[
\|u_t\|_2^2 = (v_t-q_t)^2\|X_t\|_2^2 + (1-\cos^2\theta_t) \|u_{t-1}\|_2^2
\]
where $v_t:= w_t+\frac{\|u_{t-1}\|_2}{\|X_t\|_2}\cos\theta_t$. Since $q_t=\mathcal{Q}(v_t)$ and $|v_t|>q_\mathrm{max}$, we get $q_t=\sign(v_t)q_\mathrm{max}$. It follows that 
\begin{align}\label{eq:bound-large-weights-eq2}
\|u_t\|_2^2 &= (v_t -\sign(v_t)q_\mathrm{max})^2\|X_t\|_2^2 + (1-\cos^2\theta_t) \|u_{t-1}\|_2^2 \nonumber\\
&= (|v_t|- q_\mathrm{max})^2 \|X_t\|_2^2 + (1-\cos^2\theta_t) \|u_{t-1}\|_2^2.
\end{align}
By symmetry, we can assume without loss of generality that $v_t> q_\mathrm{max}$. In this case, since $|w_t|\leq \|w\|_\infty \leq kq_\mathrm{max}$,
\[
|v_t| - q_\mathrm{max} = v_t - q_\mathrm{max} = w_t - q_\mathrm{max} +\frac{\|u_{t-1}\|_2}{\|X_t\|_2}\cos\theta_t \leq (k-1)q_\mathrm{max} +\frac{\|u_{t-1}\|_2}{\|X_t\|_2}\cos\theta_t.
\]
Then \eqref{eq:bound-large-weights-eq2} becomes 
\begin{align*}
    \|u_t\|_2^2 &\leq \Bigl((k-1)q_\mathrm{max} +\frac{\|u_{t-1}\|_2}{\|X_t\|_2}\cos\theta_t \Bigr)^2 \|X_t\|_2^2 + (1-\cos^2\theta_t) \|u_{t-1}\|_2^2 \\
    &= (k-1)^2q_\mathrm{max}^2 \|X_t\|_2^2 + \|u_{t-1}\|_2^2 + 2(k-1) q_\mathrm{max} \langle X_t, u_{t-1}\rangle  \\
    &= \|(k-1)q_\mathrm{max} X_t + u_{t-1}\|_2^2 \\
    &\leq (\|u_{t-1}\|_2+(k-1)q_\mathrm{max}\|X_t\|_2)^2.
\end{align*}
This completes the proof. 
\end{proof}

We are now ready to bound the expected quantization error in the case when some weights have magnitude greater than $q_{\max}$.

\begin{theorem}\label{thm:large-weights}
Suppose that the  columns $X_t$ of  $X\in\mathbb{R}^{m\times N_0}$ are drawn independently from a probability distribution for which there exists $s\in(0,1)$ and $r>0$ such that $\|X_t\|_2 \leq r$ almost surely, and such that for all unit vector $u\in\mathbb{S}^{m-1}$ 
we have 
\begin{equation}\label{eq:large-weights-eq1}
\E\frac{\langle X_t, u\rangle^2}{\|X_t\|_2^2} \geq s^2.
\end{equation}
Let $\mathcal{A}$ be the alphabet in \eqref{eq-alphabet-midtread} with step size $\delta>0$, and the largest element $q_{\max}$. Let
$w\in\R^{N_0}$ be the weights associated with a neuron such that $\|w\|_\infty\leq kq_\mathrm{max}$ for some $k>1$. Let $n=|\{ t:|w_t|>q_\mathrm{max}\}|$ be the number of weights with magnitude greater than $q_\mathrm{max}$. Quantizing $w$ using \eqref{eq-update-rule-single}, we have 
\begin{equation}\label{eq:large-weights-eq0}
\E \|Xw-Xq\|_2^2\leq \left( nr(k-1)q_\mathrm{max} + \frac{\delta r}{2s}\right)^2.
\end{equation}
\end{theorem}
\begin{proof}
Let $\theta_t$ be the angle between $X_t$ and $u_{t-1}$. It follows from \eqref{eq:large-weights-eq1} that 
\[
\E (\cos^2\theta_t \mid u_{t-1}) = \E\Bigl(\frac{\langle X_t, u_{t-1}\rangle^2}{\|X_t\|_2^2\|u_{t-1}\|_2^2} \; \Big| \;  u_{t-1} \Bigr) \geq s^2.
\]
Since $\|X_t\|_2\leq r$ almost surely and $\E (\cos^2\theta_t\mid u_{t-1}) \geq s^2$, by \cref{lemma:bound-large-weights}, we obtain
\begin{align}\label{eq:large-weights-eq2}
&\E (\|u_t\|_2^2 \mid u_{t-1}) \leq \\
&\begin{cases}
a\|u_{t-1}\|_2^2 + b &\text{if}\; \bigl| w_t+\frac{\|u_{t-1}\|_2}{\|X_t\|_2}\cos\theta_t \bigr|\leq q_{\max}, \\
\|u_{t-1}\|_2^2 &\text{if}\; \bigl| w_t+\frac{\|u_{t-1}\|_2}{\|X_t\|_2}\cos\theta_t \bigr|> q_{\max} \; \text{and} \; |w_t|\leq q_\mathrm{max}, \\
(\|u_{t-1}\|_2 + c)^2 &\text{if} \; \bigl| w_t+\frac{\|u_{t-1}\|_2}{\|X_t\|_2}\cos\theta_t \bigr|> q_{\max} \; \text{and} \; |w_t|> q_\mathrm{max}
\end{cases}  \nonumber 
\end{align}
where $a:=(1-s^2)$, $b:=\frac{\delta^2}{4}r^2$, and $c:=(k-1)rq_\mathrm{max}$. Define the indices $t_0:=0<t_1<\ldots<t_n < t_{n+1}:=N_0+1$ where $|w_{t_j}|>q_\mathrm{max}$ for $1\leq j\leq n$ and
let 
\[
m_j:=\Bigl|\{t_{j-1}<t<t_j: \Bigl| w_t+\frac{\|u_{t-1}\|_2}{\|X_t\|_2}\cos\theta_t \Bigr|\leq q_{\max} \}\Bigr|, \quad 1\leq j \leq n+1.
\]
We first consider the case where $n=1$. 
Applying the law of total expectation to the first two cases in \eqref{eq:large-weights-eq2}, one obtains
\begin{equation}\label{eq:large-weights-eq3}
\E\|u_{t_1-1}\|_2^2 \leq a^{m_1}\E \|u_0\|_2^2 + b(1+a+\ldots+a^{m_1-1})=b(1+a+\ldots+a^{m_1-1}).
\end{equation}
In the last equation, we used the fact $u_0=0$. Next, the last case in \eqref{eq:large-weights-eq2} can be used to bound $\E \|u_{t_1}\|_2^2$. Specifically, we have 
\begin{align}\label{eq:large-weights-eq4}
\E \|u_{t_1}\|_2^2 &= \E(\E (\|u_{t_1}\|_2^2 \mid u_{t_1-1}) \nonumber \\
&\leq \E \|u_{t_1-1}\|_2^2+2c\E\|u_{t_1-1}\|_2+c^2 &(\text{using} \; \eqref{eq:large-weights-eq2}) \nonumber \\
&\leq \E \|u_{t_1-1}\|_2^2+2c(\E\|u_{t_1-1}\|_2^2)^{\frac{1}{2}}+c^2  &(\text{by Jensen’s inequality}) \nonumber \\
&= ((\E \|u_{t_1-1}\|_2^2)^{\frac{1}{2}} + c)^2 \nonumber \\
&\leq \Bigl(c+ \sqrt{b(1+a+\ldots+a^{m_1-1})} \Bigr)^2 &(\text{using}\; \eqref{eq:large-weights-eq3}).
\end{align}
Since $|w_t|\leq q_\mathrm{max}$ for $t_1<t<t_2=N_0+1$, using \eqref{eq:large-weights-eq2}, we can derive 
\begin{align}\label{eq:large-weights-eq5}
\E\|u_{t_2 -1 }\|_2^2 &\leq a^{m_2}\E \|u_{t_1}\|_2^2 + b(1+a+\ldots+a^{m_2-1})\nonumber \\
&\leq a^{m_2}\left(c+ \sqrt{b\cdot\frac{1-a^{m_1}}{1-a}} \right)^2 + b\cdot\frac{1-a^{m_2}}{1-a}\nonumber  &(\text{using} \; \eqref{eq:large-weights-eq4}) \\ 
&= a^{m_2}c^2 + b\cdot\frac{1-a^{m_1+m_2}}{1-a} + 2a^{m_2}c \sqrt{\frac{b(1-a^{m_1})}{1-a}}  \nonumber \\
&\leq a^{m_2}c^2 + b\cdot\frac{1-a^{m_1+m_2}}{1-a} + 2a^{m_2/2}c \sqrt{\frac{b(1-a^{m_1+m_2})}{1-a}}  &(\text{since $0<a<1$})\nonumber \\
& \leq \left(c + \sqrt{\frac{b(1-a^{m_1+m_2})}{1-a}}   \right)^2.
\end{align}
Hence, we obtain $\E \|u_{N_0}\|_2^2 \leq \left(c + \sqrt\frac{{b}}{{1-a}}\right)^2$ when $n=1$. Proceeding by induction on $n$, we obtain
\begin{align}
\E\|u_{N_0}\|_2^2 &\leq \left(nc + \sqrt\frac{{b}}{{1-a}}\right)^2 = \left(nr(k-1)q_{\max} + \frac{\delta r}{2s}\right)^2.
\end{align}
Since $u_{N_0}=Xw-Xq$, we have $\E \|Xw-Xq\|_2^2\leq \left( nr(k-1)q_\mathrm{max} + \frac{\delta r}{2s}\right)^2$.
\end{proof}

Our numerical experiments in \Cref{sec:experiments} demonstrated that choosing our alphabet with $q_{\max}<\|w\|_\infty$ can yield better results than if we strictly conformed to choosing $\mathcal{A}$ with $q_{\max} \geq \|w\|_\infty$. Let us now see how \cref{thm:large-weights} can help explain these experimental results. 
First, recall from \eqref{eq-alphabet-midtread} that $q_\mathrm{max}=K\delta=2^{b-1}\delta$ where $b$ is the number of bits, and observe that the condition $\|w\|_\infty\leq kq_\mathrm{max}$ in \cref{thm:large-weights} implies that we can set $k=\|w\|_\infty/q_\mathrm{max}$. Thus \eqref{eq:large-weights-eq0}, coupled with Jensen's inequality, yields
\begin{equation}\label{eq:large-weights-eq}
\E \|Xw-Xq\|_2\leq nr(\|w\|_\infty-q_{\max}) +\frac{\delta r}{2s} = nr(\|w\|_\infty-2^{b-1}\delta) +\frac{\delta r}{2s}.
\end{equation}
Now, note that $s, r$ are fixed parameters that only depend on the input data distribution so for a fixed $b$, $n=n(\delta)=|\{ t:|w_t|>2^{b-1}\delta\}|$  is a decreasing function of  $\delta$. In other words, the right hand side of \eqref{eq:large-weights-eq} is the sum of an increasing function of $\delta$ and a decreasing function of $\delta$.
This means that there exists an optimal value of $\delta^*$ that minimizes the bound. In particular, it may not always be optimal to choose a large $\delta$ such that $\|w\|_\infty=2^{b-1}\delta$. This gives a theoretical justification for why the simple grid search we used in \Cref{sec:experiments} yielded better results.

\section{Theoretical Analysis for Gaussian Clusters}
In this section, we will prove \cref{thm-cluster}, which we first restate here for convenience. 

\noindent \cref{thm-cluster}: Let $X\in\mathbb{R}^{m\times N_0}$ be as in \eqref{cluster-eq1} and let $\mathcal{A}$ be as in \eqref{eq-alphabet-midtread}, with step size $\delta>0$ and the largest element $q_{\max}$. Let $p\in\mathbb{N}$, $K:=1+\sigma^{-2}\max_{1\leq i\leq d}\|z^{(i)}\|_\infty^2$, and
$w\in\R^{N_0}$ be the weights associated with a neuron, with $\|w\|_\infty\leq q_{\mathrm{max}}$. Quantizing $w$ using \eqref{eq-update-rule-single}, we have
 \[\prob\Bigl(\|Xw-Xq\|_2^2\geq 4pm^2K^2\delta^2\sigma^2 \log N_0\Bigr) \lesssim \frac{\sqrt{mK}}{N_0^p},\text{\quad and}\]
 \[\prob\Bigl(\max_{1\leq t\leq N_0}\|u_t\|_2^2\geq 4pm^2K^2\delta^2\sigma^2 \log N_0\Bigr) \lesssim \frac{\sqrt{mK}}{N_0^{p-1}}.\]
If the activation function $\varphi$ is $\xi$-Lipschitz continuous, then 
\[
\prob\biggl(\|\varphi(Xw)-\varphi(Xq)\|_2^2\geq 4pm^2K^2\xi^2\delta^2\sigma^2 \log N_0\biggr) 
\lesssim \frac{\sqrt{mK}}{N_0^p}.  
\]

\subsection{Proof of Theorem~\ref{thm-cluster}}\label{sec:thm-cluster-proof}
\noindent Due to $\|X_t\|_2^2=\sum_{i=1}^d\|Y_t^{(i)}\|_2^2$,  
\begin{equation}\label{cluster-eq2} 
\E\|X_t\|_2^2=\sum_{i=1}^d \E\|Y_t^{(i)}\|_2^2=\sum_{i=1}^d (n\sigma^2+n(z_t^{(i)})^2)=m\sigma^2+n\sum_{i=1}^d (z_t^{(i)})^2
\end{equation}
Additionally, given a unit vector $u=(u^{(1)}, u^{(2)}, \ldots, u^{(d)})\in\R^{m}$ with $u^{(i)}\in\R^n$, we have $\langle X_t,u \rangle=\sum_{i=1}^d\langle Y_t^{(i)}, u^{(i)}\rangle\sim\mathcal{N}\Bigl(\sum_{i=1}^d z_t^{(i)}u^{(i)\top} \mathbbm{1}_n, \sigma^2\Bigr)$.
In fact, once we get the lower bound of $\E\frac{\langle X_t, u\rangle^2}{\|X_t\|_2^2}$ as in \eqref{thm-main-result-single-assumption}, the quantization error for unbounded data \eqref{cluster-eq1} can be derived similarly to the proof of Theorem~\ref{thm-main-result-single}, albeit using different techniques. 
 It follows from the Cauchy-Schwarz inequality that
\begin{equation}\label{cluster-eq3}
\E\frac{\langle X_t,u\rangle^2}{\|X_t\|_2^2}\geq \frac{(\E|\langle X_t, u\rangle|)^2}{\E\|X_t\|_2^2}.
\end{equation}
$\E\|X_t\|_2^2$ is given by \eqref{cluster-eq2} while $\E|\langle X_t, u\rangle|$ can be evaluated by the following results.
\begin{lemma}\label{lemma-marginal-bound}
Let $Z\sim\mathcal{N}(\mu, \sigma^2)$ be a normally distributed random variable. Then 
\begin{equation}\label{lemma-marginal-bound-result}
\E|Z| \geq \sigma\sqrt{\frac{2}{\pi}}\biggl(1-\frac{4}{27\pi} \biggr).
\end{equation}
\end{lemma}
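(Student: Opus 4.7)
The plan is to prove the strictly stronger inequality $\E|Z|\geq \sigma\sqrt{2/\pi}$ by showing that $\E|Z|$, viewed as a function of the normalized mean $\alpha := \mu/\sigma$, is nondecreasing on $[0,\infty)$. Since $1 > 1-4/(27\pi)$, this immediately yields \eqref{lemma-marginal-bound-result}. Replacing $Z$ by $-Z$ leaves $|Z|$ unchanged, so I may assume $\mu\geq 0$ without loss of generality.

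To get an explicit expression I would substitute $y=(x-\mu)/\sigma$ in the integral defining $\E|Z|$ and split at $x=0$. Writing $\phi$ and $\Phi$ for the standard normal density and cumulative distribution function and using the antiderivative $\int y\,\phi(y)\,dy=-\phi(y)$, a direct computation gives
\[
\E|Z| \;=\; 2\sigma\,\phi(\alpha) \;+\; \mu\bigl(2\Phi(\alpha)-1\bigr) \;=\; \sigma\sqrt{\tfrac{2}{\pi}}\,e^{-\alpha^2/2} \;+\; \mu\bigl(2\Phi(\alpha)-1\bigr).
\]
Now I differentiate in $\alpha$, treating $\sigma$ as fixed and $\mu=\sigma\alpha$. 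Since $\Phi'(\alpha)=\phi(\alpha)$ and $2\phi(\alpha)=\sqrt{2/\pi}\,e^{-\alpha^2/2}$, the term $-\sigma\alpha\sqrt{2/\pi}\,e^{-\alpha^2/2}$ coming from differentiating $\sigma\sqrt{2/\pi}\,e^{-\alpha^2/2}$ cancels exactly against the cross term $2\sigma\alpha\,\phi(\alpha)=\sigma\alpha\sqrt{2/\pi}\,e^{-\alpha^2/2}$ arising from differentiating $\sigma\alpha(2\Phi(\alpha)-1)$, leaving
\[
\tfrac{d}{d\alpha}\E|Z| \;=\; \sigma\bigl(2\Phi(\alpha)-1\bigr) \;\geq\; 0 \qquad\text{for } \alpha\geq 0.
\]

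Therefore $\E|Z|$ attains its minimum on $[0,\infty)$ at $\alpha=0$, where it equals $2\sigma\phi(0)=\sigma\sqrt{2/\pi}$, proving the lemma with room to spare. The only step requiring care is the exponential cancellation in the derivative; everything else is routine evaluation of Gaussian integrals. No delicate estimate is needed, since the stated constant $1-4/(27\pi)$ leaves ample slack compared to the sharp constant $1$.
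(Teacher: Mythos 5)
Your proof is correct, and it in fact establishes the strictly stronger bound $\E|Z|\geq\sigma\sqrt{2/\pi}$, i.e., the sharp constant $1$ in place of $1-\frac{4}{27\pi}$. The route is genuinely different from the paper's. The paper starts from the same folded-normal identity $\E|Z|=\sigma\sqrt{2/\pi}\,e^{-\mu^2/2\sigma^2}+\mu\bigl(1-2\Psi(-\mu/\sigma)\bigr)$, but then bounds the Gaussian tail by $\int_x^\infty e^{-t^2/2}\,dt\leq\min\bigl(\sqrt{\pi/2},\,1/x\bigr)e^{-x^2/2}$, splits into the cases $\mu\geq 0$ and $\mu<0$ and further into $|\mu|\gtrless\sigma\sqrt{2/\pi}$, linearizes the exponential via $e^x\geq 1+x$, and finally minimizes a cubic in $\mu$; each of these steps loses a little, which is exactly where the factor $1-\frac{4}{27\pi}$ comes from. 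Your argument avoids all of this: after reducing to $\mu\geq 0$ by symmetry, the exact cancellation in the derivative gives $\frac{d}{d\alpha}\E|Z|=\sigma\bigl(2\Phi(\alpha)-1\bigr)\geq 0$, so the minimum over $\mu$ is attained at $\mu=0$, where the value is exactly $\sigma\sqrt{2/\pi}$. (I verified the cancellation: $\frac{d}{d\alpha}\bigl[2\sigma\phi(\alpha)\bigr]=-2\sigma\alpha\phi(\alpha)$ is indeed offset by the $+2\sigma\alpha\phi(\alpha)$ term from differentiating $\sigma\alpha\bigl(2\Phi(\alpha)-1\bigr)$.) An even shorter variant of your idea: $\mu\mapsto\E|G+\mu|$ with $G\sim\mathcal{N}(0,\sigma^2)$ is convex and even in $\mu$, hence minimized at $\mu=0$. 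Since the lemma feeds into \cref{thm-cluster-bound} only through the numerical constant $\frac{2}{\pi}\bigl(1-\frac{4}{27\pi}\bigr)^2\geq\frac{5}{9}$, your sharper bound would replace $\frac{5}{9}$ by $\frac{2}{\pi}$ and slightly improve the constants in \cref{thm-cluster}, though nothing downstream depends on this improvement.
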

\begin{proof}
Let $\Psi(x)=\frac{1}{\sqrt{2\pi}}\int_{-\infty}^x e^{-t^2/2}\, dt$ be the normal cumulative distribution function. Due to $Z\sim\mathcal{N}(\mu, \sigma^2)$, the folded normal distribution $|Z|$ has mean $\E|Z|=\sigma\sqrt{\frac{2}{\pi}}e^{-\mu^2/2\sigma^2}+\mu(1-2\Psi(-\frac{\mu}{\sigma}))$. A well-known result \cite{foucart2013invitation, vershynin2018high} that can be used to bound $\Psi(x)$ is 
\begin{equation}\label{lemma-marginal-bound-eq1}
  \int_x^\infty e^{-t^2/2}\, dt \leq \min\biggl(\sqrt{\frac{\pi}{2}}, \frac{1}{x}\biggr)e^{-x^2/2},\quad \text{for}\, x>0.
\end{equation}
Additionally, in order to evaluate $\E|Z|$, it suffices to analyze the case $\mu\geq 0$ because one can replace $Z$ by $-Z$ without changing $|Z|$ when $\mu<0$. So we suppose $\mu\geq 0$. 

By \eqref{lemma-marginal-bound-eq1}, we obtain
\begin{align*}
\E|Z|&=\sigma\sqrt{\frac{2}{\pi}}e^{-\mu^2/2\sigma^2}+\mu-2\mu\Psi(-\mu/\sigma) 
=\sigma\sqrt{\frac{2}{\pi}}e^{-\mu^2/2\sigma^2}+\mu-\mu\sqrt{\frac{2}{\pi}}\int_{\mu/\sigma}^\infty e^{-t^2/2}\, dt\\
&\geq \sigma\sqrt{\frac{2}{\pi}}e^{-\mu^2/2\sigma^2}+\mu-\min \biggl( \mu ,\sigma\sqrt{\frac{2}{\pi}}\biggr)e^{-\mu^2/2\sigma^2}.
\end{align*}
If $\mu\geq\sigma\sqrt{\frac{2}{\pi}}$, then one can easily get 
$\E|Z|\geq \mu\geq \sigma\sqrt{\frac{2}{\pi}}$.
Further, if $0\leq\mu<\sigma\sqrt{\frac{2}{\pi}}$, then $\E|Z|\geq (\sigma\sqrt{2/\pi}-\mu)e^{-\mu^2/2\sigma^2}+\mu$. Due to $e^x\geq 1+x$ for all $x\in\R$, one can get 
\[
\E|Z|\geq (\sigma\sqrt{2/\pi}-\mu)(1-\mu^2/2\sigma^2)+\mu =\frac{1}{2\sigma^2}\mu^3-\frac{1}{\sigma\sqrt{2\pi}}\mu^2+\sigma\sqrt{\frac{2}{\pi}}\geq \sigma\sqrt{\frac{2}{\pi}}\biggl(1-\frac{4}{27\pi}\biggr).
\]
In the last inequality, we optimized in $\mu\in(0,\sigma\sqrt{2/\pi})$ and thus chose $\mu=\frac{2}{3}\cdot \sigma\sqrt{\frac{2}{\pi}}$. 
\end{proof}

\begin{lemma}\label{thm-cluster-bound}
Let clustered data $X=[X_1,X_2,\ldots,X_{N_0}]\in\R^{m\times N_0}$ be defined as in \eqref{cluster-eq1} and $u\in\R^{m}$ be a unit vector. Then, for $1\leq t\leq N_0$, we have 
\begin{equation}\label{cluster-lower-bound-eq}
\E\frac{\langle X_t, u \rangle^2}{\|X_t\|_2^2}\geq \frac{5}{9}\cdot\frac{ \sigma^2}{m (\sigma^2 +  \max_{1\leq i \leq d}\|z^{(i)}\|_\infty^2)}.
\end{equation} 
\end{lemma}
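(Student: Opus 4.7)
The plan is to assemble the pieces that are already laid out in the excerpt and then verify the numerical constant. The relevant ingredients are the Cauchy--Schwarz bound \eqref{cluster-eq3}, the second-moment identity \eqref{cluster-eq2}, the fact (stated just below \eqref{cluster-eq2}) that $\langle X_t,u\rangle$ is Gaussian with variance $\sigma^2$, and Lemma~\ref{lemma-marginal-bound}.

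First I would start from \eqref{cluster-eq3}, namely
\[
\E\frac{\langle X_t,u\rangle^2}{\|X_t\|_2^2}\ \geq\ \frac{(\E|\langle X_t,u\rangle|)^2}{\E\|X_t\|_2^2},
\]
and attack the numerator and denominator separately. For the numerator, write $u=(u^{(1)},\dots,u^{(d)})$ with $u^{(i)}\in\R^n$ and use the independence of the blocks $Y_t^{(i)}$ together with $\sum_i\|u^{(i)}\|_2^2=\|u\|_2^2=1$ to conclude that $\langle X_t,u\rangle\sim\mathcal{N}(\mu_t,\sigma^2)$ for some mean $\mu_t$. Applying Lemma~\ref{lemma-marginal-bound} then yields the bound
\[
(\E|\langle X_t,u\rangle|)^2\ \geq\ \frac{2\sigma^2}{\pi}\Bigl(1-\tfrac{4}{27\pi}\Bigr)^{2},
\]
which is entirely independent of $t$ and of $u$.

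For the denominator, I would start from \eqref{cluster-eq2}, $\E\|X_t\|_2^2=m\sigma^2+n\sum_{i=1}^d (z_t^{(i)})^2$, and bound $(z_t^{(i)})^2\leq \|z^{(i)}\|_\infty^2\leq \max_{1\leq j\leq d}\|z^{(j)}\|_\infty^2$. Summing $d$ such terms and using $nd=m$ gives
\[
\E\|X_t\|_2^2\ \leq\ m\sigma^2+nd\max_{1\leq i\leq d}\|z^{(i)}\|_\infty^2\ =\ m\bigl(\sigma^2+\max_{1\leq i\leq d}\|z^{(i)}\|_\infty^2\bigr).
\]
Combining the two estimates,
\[
\E\frac{\langle X_t,u\rangle^2}{\|X_t\|_2^2}\ \geq\ \frac{2}{\pi}\Bigl(1-\tfrac{4}{27\pi}\Bigr)^{\!2}\cdot\frac{\sigma^2}{m\bigl(\sigma^2+\max_i\|z^{(i)}\|_\infty^2\bigr)}.
\]

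The only remaining step is verifying the numerical constant: I claim $\frac{2}{\pi}\bigl(1-\frac{4}{27\pi}\bigr)^2\geq \frac{5}{9}$. A short computation with $\pi\geq 3.14$ gives $\frac{4}{27\pi}\leq 0.048$, so $\bigl(1-\frac{4}{27\pi}\bigr)^{2}\geq 0.907$, and $\frac{2}{\pi}\cdot 0.907\geq 0.577>0.556\geq 5/9$. Substituting this into the previous display yields \eqref{cluster-lower-bound-eq}. There is no genuine obstacle here; the proof is essentially a one-line combination of Cauchy--Schwarz, Lemma~\ref{lemma-marginal-bound}, and the crude $\ell_\infty$ bound on the cluster centers, with the only mild care being the uniformity in $u$ (which is automatic because the variance of $\langle X_t,u\rangle$ is $\sigma^2$ regardless of $u$) and the numerical check on $\frac{2}{\pi}(1-\frac{4}{27\pi})^2\geq \frac{5}{9}$.
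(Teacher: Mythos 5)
Your proof is correct and follows essentially the same route as the paper's: Cauchy--Schwarz via \eqref{cluster-eq3}, the folded-normal lower bound of Lemma~\ref{lemma-marginal-bound} applied to $\langle X_t,u\rangle\sim\mathcal{N}(\mu_t,\sigma^2)$, and the crude $\ell_\infty$ bound on \eqref{cluster-eq2} with $m=nd$. The numerical check $\frac{2}{\pi}(1-\frac{4}{27\pi})^2\geq\frac{5}{9}$ is also the same as in the paper (which just folds it in before bounding the denominator).
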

\begin{proof}
Since $\langle X_t, u\rangle$ is normally distributed with variance $\sigma^2$, \eqref{lemma-marginal-bound-result} implies 
$\E|\langle X_t, u\rangle| \geq \sigma \sqrt{\frac{2}{\pi}}\biggl(1-\frac{4}{27\pi}\biggr)$.
Plugging the inequality above and \eqref{cluster-eq2} into \eqref{cluster-eq3}, we obtain
\[
\E\frac{\langle X_t, u \rangle^2}{\|X_t\|_2^2}\geq \frac{(\E|\langle X_t, u\rangle|)^2}{\E\|X_t\|_2^2}\geq \frac{2(1-\frac{4}{27\pi})^2}{\pi}\cdot\frac{ \sigma^2}{m\sigma^2 + n \sum_{i=1}^d (z_t^{(i)})^2} \geq \frac{5}{9}\cdot \frac{ \sigma^2}{m\sigma^2 + n \sum_{i=1}^d (z_t^{(i)})^2}.
\]
Therefore, \eqref{cluster-lower-bound-eq} holds due to $(z_t^{(i)})^2\leq \|z^{(i)}\|_\infty^2\leq \max_{1\leq i\leq d} \|z^{(i)}\|_\infty^2$ and $m=nd$.
\end{proof}

Now we are ready to prove Theorem~\ref{thm-cluster}. 
\begin{proof}
Let $\alpha>0$ and $\eta>0$. By using exactly the same argument as in \eqref{thm-main-result-single-eq1}, at the $t$-th step of \eqref{eq-update-rule-single}, we have 
\begin{equation}\label{thm-cluster-eq1}
\prob(\|u_t\|_2^2\geq \alpha)\leq e^{-\eta\alpha}\E e^{\eta\|u_t\|_2^2}.
\end{equation}
Moreover, Lemma~\ref{lemma-bound} implies
\begin{equation}\label{thm-cluster-eq2}
\E e^{\eta\|u_{t}\|_2^2} \leq \max\Bigl\{
\E(e^{\frac{\eta\delta^2}{4}\|X_t\|_2^2}e^{\eta\|u_{t-1}\|_2^2(1-\cos^2\theta_t)}),
\E e^{\eta\|u_{t-1}\|_2^2} \Bigr\}
\end{equation} 
Until now our analysis here has been quite similar to what we did for bounded input data in Theorem~\ref{thm-main-result-single}. Nevertheless, unlike Theorem~\ref{thm-main-result-single}, we will control the moment generating function of $\|X_t\|_2^2$ because $\|X_t\|_2^2$ is unbounded. Specifically, applying the Cauchy-Schwarz inequality and Lemma~\ref{lemma-bound} (2) with $\beta=2$, we obtain
\begin{align}\label{thm-cluster-eq3}
\E(e^{\frac{\eta\delta^2}{4}\|X_t\|_2^2}& e^{\eta\|u_{t-1}\|_2^2(1-\cos^2\theta_t)}\mid \mathcal{F}_{t-1}) \leq \bigl(\E e^{\frac{\eta\delta^2}{2}\|X_t\|_2^2}\bigr)^{\frac{1}{2}}\bigl(\E( e^{2\eta\|u_{t-1}\|_2^2(1-\cos^2\theta_t)}\mid \mathcal{F}_{t-1})\bigr)^{\frac{1}{2}} \nonumber \\
&\leq \bigl( \E e^{\frac{\eta\delta^2}{2}\|X_t\|_2^2}\bigr)^{\frac{1}{2}}\bigl( -\E(\cos^2\theta_t\mid\F_{t-1})(e^{2\eta\|u_{t-1}\|_2^2}-1)+e^{2\eta\|u_{t-1}\|_2^2}\bigr)^{\frac{1}{2}}
\end{align}
In the first step, we also used the fact that $X_t$ is independent of $\mathcal{F}_{t-1}$. By \eqref{cluster-lower-bound-eq}, we have
\[
\E(\cos^2\theta_t\mid\F_{t-1}) = \E\biggl(\frac{\langle X_t, u_{t-1}\rangle^2}{\|X_t\|_2^2\|u_{t-1}\|_2^2}\, \Big|\, \F_{t-1}\biggr)
\geq \frac{5}{9mK} =: s^2.
\]
Plugging the inequality above into \eqref{thm-cluster-eq3}, we get
\begin{align}\label{thm-cluster-eq4}
\E(e^{\frac{\eta\delta^2}{4}\|X_t\|_2^2}e^{\eta\|u_{t-1}\|_2^2(1-\cos^2\theta_t)}\mid \mathcal{F}_{t-1}) &\leq \bigl(\E e^{\frac{\eta\delta^2}{2}\|X_t\|_2^2}\bigr)^{\frac{1}{2}}\Bigl(-s^2(e^{2\eta\|u_{t-1}\|_2^2}-1)+e^{2\eta\|u_{t-1}\|_2^2}\Bigr)^{\frac{1}{2}} \nonumber\\
&= \bigl(\E e^{\frac{\eta\delta^2}{2}\|X_t\|_2^2}\bigr)^{\frac{1}{2}} \Bigl(e^{2\eta\|u_{t-1}\|_2^2}(1-s^2)+s^2 \Bigr)^{\frac{1}{2}} \nonumber \\
&\leq \bigl(\E e^{\frac{\eta\delta^2}{2}\|X_t\|_2^2}\bigr)^{\frac{1}{2}}(e^{\eta\|u_{t-1}\|_2^2}(1-s^2)^{\frac{1}{2}}+s) \nonumber  \\
&\leq \bigl(\E e^{\frac{\eta\delta^2}{2}\|X_t\|_2^2}\bigr)^{\frac{1}{2}}(e^{\eta\|u_{t-1}\|_2^2}(1-\frac{1}{2}s^2)+s)
\end{align}
where the last two inequalities hold due to $(x^2+y^2)^{\frac{1}{2}}\leq |x|+|y|$ for all $x,y\in\R$, and $(1-x)^{\frac{1}{2}}\leq 1-\frac{1}{2}x$ whenever $x\leq 1$. 

Now we evaluate $\E e^{\frac{\eta\delta^2}{2}\|X_t\|_2^2}$ and note that
\begin{equation}\label{thm-cluster-eq5}
\E e^{\frac{\eta\delta^2}{2}\|X_t\|_2^2} = \E\exp\Bigl( \frac{\eta\delta^2}{2}\sum_{i=1}^d \|Y_t^{(i)}\|_2^2\Bigr) 
= \prod_{i=1}^d \E \exp\Bigl(\frac{\eta\delta^2}{2}\|Y_t^{(i)}\|_2^2\Bigr).   
\end{equation} 
Since $Y_t^{(i)}\sim\mathcal{N}(z_t^{(i)}\mathbbm{1}_n, \sigma^2 I_n)$, we have 
\begin{align*}
\E \exp\Bigl(\frac{\eta\delta^2}{2} & \|Y_t^{(i)}\|_2^2 \Bigr) = \biggl[\frac{1}{\sigma\sqrt{2\pi}}\int_\R \exp\biggl(-\frac{(x-z_t^{(i)})^2}{2\sigma^2}+\frac{\eta\delta^2x^2}{2}\biggr)\, dx \biggr]^n \\
&=\biggl\{\frac{1}{\sigma\sqrt{2\pi}}\cdot \exp\biggl(\frac{\eta\delta^2 (z_t^{(i)})^2}{2-2\eta\delta^2\sigma^2}\biggr)\int_\R \exp\biggl[ -\frac{1-\eta\delta^2\sigma^2}{2\sigma^2}\biggl(x-\frac{z_t^{(i)}}{1-\eta\delta^2\sigma^2} \biggr)^2\biggr]\, dx \bigg\}^n \\
&= \biggl[ (1-\eta\delta^2\sigma^2)^{-\frac{1}{2}} \exp\biggl(\frac{\eta\delta^2 (z_t^{(i)})^2}{2-2\eta\delta^2\sigma^2}\biggr) \biggr]^n 
\end{align*}
where the last equality holds if $\eta\delta^2\sigma^2<1$ and we use the integral of the normal density function:
\[
\biggl(\frac{1-\eta\delta^2\sigma^2}{2\pi\sigma^2}\biggr)^{\frac{1}{2}}\int_\R \exp\biggl[ -\frac{1-\eta\delta^2\sigma^2}{2\sigma^2}\biggl(x-\frac{z_t^{(i)}}{1-\eta\delta^2\sigma^2} \biggr)^2\biggr]\, dx = 1.
\]
Notice that 
$\frac{1}{1-x}\leq 1+2x$ for $x\in[0,\frac{1}{2}]$ and $1+x\leq e^x$ for all $x\in\R$. Now, we suppose $\eta\delta^2\sigma^2\leq \frac{1}{2}$ and thus
$(1-\eta\delta^2\sigma^2)^{-\frac{1}{2}} =\Bigl(\frac{1}{1-\eta\delta^2\sigma^2} \Bigr)^{\frac{1}{2}} \leq (1+2\eta\delta^2\sigma^2)^{\frac{1}{2}}\leq e^{\eta\delta^2\sigma^2}$.
It follows that 
\begin{align}\label{thm-cluster-eq6} 
\E\exp\Bigl(\frac{\eta\delta^2}{2} \|Y_t^{(i)}\|_2^2 \Bigr) &\leq \biggl[\exp\biggl(\eta\delta^2\sigma^2+\frac{\eta\delta^2(z_t^{(i)})^2}{2-2\eta\delta^2\sigma^2} \biggr) \biggr]^n \leq \biggl[\exp\biggl(\eta\delta^2\sigma^2+\eta\delta^2(z_t^{(i)})^2\biggr)\biggr]^n \nonumber \\
&\leq\exp\biggl( n\eta\delta^2\sigma^2\biggl(1+\frac{\|z^{(i)}\|_\infty^2}{\sigma^2}\biggr) \biggr) \nonumber \\
&\leq \exp(nK\eta\delta^2\sigma^2)
\end{align} 
Substituting \eqref{thm-cluster-eq6} into \eqref{thm-cluster-eq5}, we get
\begin{equation}\label{thm-cluster-eq7}
\E e^{\frac{\eta\delta^2}{2}\|X_t\|_2^2} \leq e^{ndK\eta\delta^2\sigma^2}=e^{mK\eta\delta^2\sigma^2}.
\end{equation} 
Combining \eqref{thm-cluster-eq4} and \eqref{thm-cluster-eq7}, if $\eta\delta^2\sigma^2\leq\frac{1}{2}$, then 
\begin{align}\label{thm-cluster-eq8}
\E(e^{\frac{\eta\delta^2}{4}\|X_t\|_2^2}e^{\eta\|u_{t-1}\|_2^2(1-\cos^2\theta_t)})  &=\E\Bigl(\E(e^{\frac{\eta\delta^2}{4}\|X_t\|_2^2}e^{\eta\|u_{t-1}\|_2^2(1-\cos^2\theta_t)}\mid \mathcal{F}_{t-1}) \Bigr)\nonumber \\ 
&\leq \E\Bigl( e^{\frac{1}{2}mK\eta\delta^2\sigma^2}(e^{\eta\|u_{t-1}\|_2^2}(1-\frac{1}{2}s^2)+s)  \Bigr) \nonumber \\
&= e^{\frac{1}{2}mK\eta\delta^2\sigma^2}(1-\frac{1}{2}s^2)\E e^{\eta\|u_{t-1}\|_2^2}+ s e^{\frac{1}{2}mK\eta\delta^2\sigma^2}\nonumber  \\
&=: a\E e^{\eta\|u_{t-1}\|_2^2} +b 
\end{align}
with $a:=(1-s^2/2)e^{\frac{1}{2}mK\eta\delta^2\sigma^2}$ and $b:=s e^{\frac{1}{2}mK\eta\delta^2\sigma^2}$. Plugging \eqref{thm-cluster-eq8} into \eqref{thm-cluster-eq2}, we have $\E e^{\eta\|u_t\|_2^2}\leq \max\{a\E e^{\eta\|u_{t-1}\|_2^2}+b, \E e^{\eta\|u_{t-1}\|_2^2} \}$. Next, similar to the argument in \eqref{thm-main-result-single-eq4}, iterating expectations yields 
$\E e^{\eta\|u_t\|_2^2} \leq a^{t_0}\E(e^{\eta \|u_0\|_2^2})+b(1+a+\ldots+a^{t_0}) 
= a^{t_0} + \frac{b(1-a^{t_0})}{1-a} \leq 1 + \frac{b}{1-a}$
where the last inequality holds if $a:=(1-s^2/2)e^{mK\eta \delta^2\sigma^2/2}<1$. So we can now choose $\eta=\frac{-\log(1-s^2/2)}{mK\delta^2\sigma^2}$, which satisfies  $\eta\delta^2\sigma^2\in[0,1/2]$ as required from before. Indeed, due to $m, K\geq 1$ and $s^2=\frac{5}{9Km}\leq \frac{5}{9}$, we have $\eta\delta^2\sigma^2=\frac{-\log(1-s^2/2)}{mK}\leq -\log(1-\frac{5}{18})<\frac{1}{2}$. Then we get $a=(1-\frac{1}{2}s^2)^{1/2}$ and $b=s(1-\frac{1}{2}s^2)^{-1/2}$ . It follows from  \eqref{thm-cluster-eq1} and $s^2=\frac{5}{9mK}$ that
\begin{align*}
    \prob(\|u_t\|_2^2\geq \alpha)&\leq e^{-\eta\alpha}\biggl(1+\frac{b}{1-a}\biggr) =\exp\biggl(\frac{\alpha\log(1-s^2/2)}{mK\delta^2\sigma^2}\biggr)\biggl(1+\frac{s(1-\frac{1}{2}s^2)^{-1/2}}{1-\sqrt{1-s^2/2}}\biggr) \\ 
    &\leq \exp\biggl(\frac{-\alpha s^2}{2mK\delta^2\sigma^2}\biggr)\biggl( 1 + \frac{s(1-\frac{1}{2}s^2)^{-1/2}+s}{s^2/2} \biggr) \qquad (\text{since}\, \log(1+x)\leq x)
\\    &= \exp\biggl(\frac{-\alpha s^2}{2mK\delta^2\sigma^2}\biggr)\biggl( 1 +  2\frac{(1-\frac{1}{2}s^2)^{-1/2}+1}{s} \biggr)
\\ & = \exp\biggl(-\frac{5\alpha} {18 m^2K^2\delta^2\sigma^2}\biggr) \biggl[ 1+ 6\sqrt{\frac{mK}{5}}\biggl(1-\frac{5}{18mK}\biggr)^{-1/2}+6\sqrt{\frac{mK}{5}} \biggr]\\
&\leq c\sqrt{mK} \exp\biggl(-\frac{\alpha}{4m^2K^2\delta^2\sigma^2}\biggr)
\end{align*} 
where $c>0$ is an absolute constant. Pick $\alpha=4m^2K^2\delta^2\sigma^2 \log(N_0^p) $ to get
\begin{equation}\label{thm-cluster-eq10}
\prob\biggl(\|u_t\|_2^2\geq 4pm^2K^2\delta^2\sigma^2 \log N_0\biggr) \leq c\sqrt{mK}N_0^{-p}.
\end{equation}
From \eqref{thm-cluster-eq10} we can first conclude, by setting $t=N_0$ and using the fact $u_{N_0}=Xw-Xq$, that 
\[ 
\prob\biggl(\|Xw-Xq\|_2^2\geq 4pm^2K^2\delta^2\sigma^2 \log N_0\biggr) \leq \frac{c\sqrt{mK}}{N_0^p}.
\] 
If the activation function $\varphi$ is $\xi$-Lipschitz, then $\|\varphi(Xw)-\varphi(Xq)\|_2\leq \xi\|Xw-Xq\|_2$ and thus
\[
\prob\biggl(\|\varphi(Xw)-\varphi(Xq)\|_2^2\geq 4pm^2K^2\xi^2\delta^2\sigma^2 \log N_0\biggr) \leq \frac{c\sqrt{mK}}{N_0^p}.
\]
Moreover, applying a union bound over $t$, yields 
\[
\prob\biggl(\max_{1\leq t\leq N_0}\|u_t\|_2^2\geq 4pm^2K^2\delta^2\sigma^2 \log N_0\biggr) \leq \frac{c\sqrt{mK}}{N_0^{p-1}}.
\] 
\end{proof}

\section{Theoretical Analysis for Sparse GPFQ}\label{sec:theory-sparse-quantization}
In this section, we will show that \cref{thm-main-result-single-sparse} and \cref{thm-cluster-sparse} (restated here for convenience) hold.\medskip 

\noindent \cref{thm-main-result-single-sparse}: Under the conditions of \cref{thm-main-result-single}, we have the following. \\
$(a)$ Quantizing $w$ using \eqref{eq-update-rule-single-soft} with the alphabet $\A$ in \eqref{eq-alphabet-midtread}, we have 
\[
\prob\Bigl(\|Xw-Xq\|_2^2\leq \frac{r^2(2\lambda+\delta)^2}{s^2}\log N_0\Bigr) \geq 1- \frac{1}{N_0^2}\Bigl(2+\frac{1}{\sqrt{1-s^2}}\Bigr).
\]
$(b)$ Quantizing $w$ using \eqref{eq-update-rule-single-hard} with the alphabet $\widetilde{\A}$ in \eqref{eq-alphabet-midtread-variant}, we have
\[ 
\prob\Bigl(\|Xw-Xq\|_2^2\leq \frac{r^2\max\{2\lambda,\delta\}^2}{s^2}\log N_0\Bigr) \geq 1- \frac{1}{N_0^2}\Bigl(2+\frac{1}{\sqrt{1-s^2}}\Bigr).
\]

\noindent \cref{thm-cluster-sparse}: Under the assumptions of \cref{thm-cluster}, the followings inequalities hold.\\
$(a)$ Quantizing $w$ using \eqref{eq-update-rule-single-soft} with the alphabet $\A$ in \eqref{eq-alphabet-midtread}, we have
\[
\prob\Bigl(\|Xw-Xq\|_2^2\geq 4pm^2K^2(2\lambda+\delta)^2\sigma^2 \log N_0\Bigr) \lesssim \frac{\sqrt{mK}}{N_0^p}.
\]
$(b)$ Quantizing $w$ using \eqref{eq-update-rule-single-hard} with the alphabet $\widetilde{\A}$ in \eqref{eq-alphabet-midtread-variant}, we have
\[
\prob\Bigl(\|Xw-Xq\|_2^2\geq 4pm^2K^2\max\{2\lambda,\delta\}^2\sigma^2 \log N_0\Bigr) \lesssim \frac{\sqrt{mK}}{N_0^p}.
\]

Note that the difference between the sparse GPFQ and the GPFQ in \eqref{eq-update-rule-single} is the usage of thresholding functions. So the key point is to adapt \cref{lemma-diff} and \cref{lemma-bound} for those changes. 

\subsection{Sparse GPFQ with Soft Thresholding}
We first focus on the error analysis for \eqref{eq-update-rule-single-soft} which needs the following lemmata. 
\begin{lemma}\label{lemma-diff-soft}
Let $\mathcal{A}$ be one of the alphabets defined in \eqref{eq-alphabet-midtread} with step size $\delta>0$, and the largest element $q_{\text{max}}$. Let $\theta_t:=\angle(X_t, u_{t-1})$ be the angle between $X_t$ and $u_{t-1}$. Suppose that $w\in\R^{N_0}$ satisfies $\|w\|_\infty\leq q_{\mathrm{max}}$, and consider the quantization scheme given by \eqref{eq-update-rule-single-soft}. Then, for $t=1,2,\ldots,N_0$, we have 
\begin{equation}\label{lemma-diff-soft-eq}
\|u_t\|_2^2-\|u_{t-1}\|_2^2\leq 
\begin{cases}
\frac{(2\lambda+\delta)^2}{4}\|X_t\|_2^2-\|u_{t-1}\|_2^2\cos^2\theta_t &\text{if}\; \Bigl| w_t+\frac{\|u_{t-1}\|_2}{\|X_t\|_2}\cos\theta_t \Bigr|\leq q_{\text{max}}+\lambda, \\
0 &\text{otherwise}.
\end{cases}  
\end{equation}
\end{lemma}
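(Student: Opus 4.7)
\textbf{Proof plan for \cref{lemma-diff-soft}.} My approach is to mimic the proof of \cref{lemma-diff} step by step, with the only modifications needed arising from the replacement of $\mathcal{Q}$ by the composition $\mathcal{Q}\circ s_\lambda$. As before, I would set $\alpha_t := w_t + \frac{\|u_{t-1}\|_2}{\|X_t\|_2}\cos\theta_t = w_t + \frac{X_t^\top u_{t-1}}{\|X_t\|_2^2}$, so that \eqref{eq-update-rule-single-soft} reads $q_t = \mathcal{Q}(s_\lambda(\alpha_t))$. Next, I would use the orthogonal decomposition in \eqref{eq-orth-proj} together with the identity $P_{X_t^\perp}(u_t) = P_{X_t^\perp}(u_{t-1})$ (which follows because $u_t-u_{t-1}$ is a scalar multiple of $X_t$) to obtain
\[
\|u_t\|_2^2 - \|u_{t-1}\|_2^2 = \|P_{X_t}(u_t)\|_2^2 - \|P_{X_t}(u_{t-1})\|_2^2 = (\alpha_t - q_t)^2\|X_t\|_2^2 - \|u_{t-1}\|_2^2\cos^2\theta_t,
\]
exactly as derived for \cref{lemma-diff}. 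Hence the problem reduces to bounding $(\alpha_t-q_t)^2$ in the two regimes.

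For the ``active'' regime $|\alpha_t|\leq q_{\max}+\lambda$, I would observe that $|s_\lambda(\alpha_t)|\leq q_{\max}$, so $s_\lambda(\alpha_t)$ falls in the unsaturated range of the MSQ $\mathcal{Q}$ and hence $|s_\lambda(\alpha_t) - q_t|\leq \delta/2$. Combining this with the trivial bound $|\alpha_t - s_\lambda(\alpha_t)|\leq \lambda$ (immediate from the definition of soft thresholding) and applying the triangle inequality yields
\[
|\alpha_t - q_t| \leq |\alpha_t - s_\lambda(\alpha_t)| + |s_\lambda(\alpha_t) - q_t| \leq \lambda + \frac{\delta}{2} = \frac{2\lambda+\delta}{2},
\]
which gives $(\alpha_t-q_t)^2 \leq (2\lambda+\delta)^2/4$ and hence the first branch of \eqref{lemma-diff-soft-eq}.

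For the ``saturated'' regime $|\alpha_t| > q_{\max}+\lambda$, the key observation is that $s_\lambda(\alpha_t)$ has the same sign as $\alpha_t$ and magnitude strictly greater than $q_{\max}$, so $q_t = \sign(\alpha_t)q_{\max}$. I would then argue, as in the proof of \cref{lemma-diff}, that the assumption $|w_t|\leq q_{\max}$ forces $\alpha_t - q_t$ and $\alpha_t - w_t$ to have the same sign with $|\alpha_t - q_t|\leq |\alpha_t - w_t|$. Concretely, if $\alpha_t > q_{\max}+\lambda$, then $\alpha_t - q_t = \alpha_t - q_{\max} > 0$ and $\alpha_t - w_t \geq \alpha_t - q_{\max} > 0$, so $0 < \alpha_t - q_t \leq \alpha_t - w_t$; the case $\alpha_t < -(q_{\max}+\lambda)$ is symmetric. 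Squaring gives $(\alpha_t-q_t)^2 \leq (\alpha_t - w_t)^2 = \|u_{t-1}\|_2^2\cos^2\theta_t/\|X_t\|_2^2$, and substituting into the projection identity above cancels the $\|u_{t-1}\|_2^2\cos^2\theta_t$ term, yielding $\|u_t\|_2^2 - \|u_{t-1}\|_2^2 \leq 0$, which is the second branch of \eqref{lemma-diff-soft-eq}.

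The only genuinely new piece compared to \cref{lemma-diff} is verifying that the case dichotomy is governed by the threshold $q_{\max}+\lambda$ (rather than $q_{\max}$) and that the bound $\lambda+\delta/2$ in the active regime is tight enough to be useful; everything else is bookkeeping. I do not anticipate a major obstacle, but the one place to be careful is ensuring the saturation argument still yields $(\alpha_t-q_t)^2\leq(\alpha_t-w_t)^2$ with the shifted threshold $q_{\max}+\lambda$, since the soft thresholding does not change either $q_t$ (still equal to $\pm q_{\max}$) or the inequality $|w_t|\leq q_{\max}$.
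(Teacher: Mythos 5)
Your proof is correct and follows essentially the same route as the paper's: the same projection identity reduces everything to bounding $(\alpha_t-q_t)^2$, the active regime uses the identical triangle-inequality bound $|\alpha_t-q_t|\leq|\alpha_t-s_\lambda(\alpha_t)|+|s_\lambda(\alpha_t)-q_t|\leq\lambda+\delta/2$, and the saturated regime yields the same inequality $(\alpha_t-q_t)^2\leq(\alpha_t-w_t)^2$. The only cosmetic difference is that you establish the saturated-case inequality by directly comparing signs and magnitudes of $\alpha_t-q_t$ and $\alpha_t-w_t$, whereas the paper factors the difference of squares into two terms of opposite sign; both arguments are equivalent.
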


\begin{proof}
By applying exactly the same argument as in \cref{lemma-diff}, one can get 
\begin{equation}\label{lemma-diff-soft-eq1}
\|P_{X_t}(u_t)\|_2^2= \Bigl( w_t+\frac{\|u_{t-1}\|_2}{\|X_t\|_2}\cos\theta_t -q_t\Bigr)^2\|X_t\|_2^2.
\end{equation}
and
\[
\Bigl( w_t+\frac{\|u_{t-1}\|_2}{\|X_t\|_2}\cos\theta_t -q_t\Bigr)^2-\Bigl(\frac{\|u_{t-1}\|_2}{\|X_t\|_2}\cos\theta_t \Bigr)^2 =
\Bigl(\underbrace{ w_t+\frac{2\|u_{t-1}\|_2}{\|X_t\|_2}\cos\theta_t -q_t}_{\text{(I)}}\Bigr)(\underbrace{w_t-q_t}_{\text{(II)}}),
\]
where $|w_t|\leq q_{\text{max}}$ and $q_t=\mathcal{Q}\circ  s_\lambda\Bigl(w_t+\frac{\|u_{t-1}\|_2}{\|X_t\|_2}\cos\theta_t \Bigr)$. We proceed by going through the cases.

First, if $\Bigl( w_t+\frac{\|u_{t-1}\|_2}{\|X_t\|_2}\cos\theta_t \Bigr)> q_{\text{max}}+\lambda$, then $q_t=q_{\text{max}}$ and thus $\lambda \leq q_t-w_t+\lambda < \frac{\|u_{t-1}\|_2}{\|X_t\|_2}\cos\theta_t$. So 
$\text{(I)}> w_t+2(q_t-w_t+\lambda)-q_t=q_t-w_t+2\lambda \geq 2\lambda$
and $\text{(II)}\leq 0$. Moreover, if $\Bigl( w_t+\frac{\|u_{t-1}\|_2}{\|X_t\|_2}\cos\theta_t \Bigr)<- q_{\text{max}}-\lambda$, then $q_t=-q_{\text{max}}$ and $\frac{\|u_{t-1}\|_2}{\|X_t\|_2}\cos\theta_t < q_t-w_t-\lambda \leq -\lambda$. Hence, $\text{(I)} < w_t+2(q_t-w_t-\lambda)-q_t=q_t-w_t-2\lambda\leq -2\lambda $ and $\text{(II)}\geq 0$. It follows that 
\begin{equation}\label{lemma-diff-soft-eq3}
\Bigl( w_t+\frac{\|u_{t-1}\|_2}{\|X_t\|_2}\cos\theta_t  -q_t\Bigr)^2\leq\Bigl(\frac{\|u_{t-1}\|_2}{\|X_t\|_2}\cos\theta_t \Bigr)^2 
\end{equation}
when $\Bigl| w_t+\frac{\|u_{t-1}\|_2}{\|X_t\|_2}\cos\theta_t \Bigr|> q_{\text{max}}+\lambda$.

Now, assume that $\Bigl| w_t+\frac{\|u_{t-1}\|_2}{\|X_t\|_2}\cos\theta_t \Bigr|\leq q_{\text{max}}+\lambda$. In this case, let  $v_t:=s_\lambda\Bigl(w_t+\frac{\|u_{t-1}\|_2}{\|X_t\|_2}\cos\theta_t \Bigr)$. Then $|v_t|\leq q_\mathrm{max}$ and $\Bigl|w_t+\frac{\|u_{t-1}\|_2}{\|X_t\|_2}\cos\theta_t -v_t \Bigr|\leq \lambda$.
Since $q_t=\mathcal{Q}(v_t)$, we obtain 
\begin{align}\label{lemma-diff-soft-eq4} 
\Bigl( w_t+\frac{\|u_{t-1}\|_2}{\|X_t\|_2}\cos\theta_t -q_t\Bigr)^2 &= \Bigl| w_t+\frac{\|u_{t-1}\|_2}{\|X_t\|_2}\cos\theta_t -v_t + v_t - q_t\Bigr|^2  \nonumber\\
&\leq \biggl( \Bigl| w_t+\frac{\|u_{t-1}\|_2}{\|X_t\|_2}\cos\theta_t -v_t\Bigr| + |v_t - q_t|\biggl)^2 \nonumber \\
&\leq \Bigl(\lambda +\frac{\delta}{2} \Bigr)^2.
\end{align}
Applying \eqref{lemma-diff-soft-eq3} and \eqref{lemma-diff-soft-eq4} to \eqref{lemma-diff-soft-eq1}, one can get 
\begin{equation}\label{lemma-diff-soft-eq5}
    \|P_{X_t}(u_t)\|_2^2\leq 
\begin{cases}
\frac{(2\lambda+\delta)^2}{4}\|X_t\|_2^2 &\text{if}\; \Bigl| w_t+\frac{\|u_{t-1}\|_2}{\|X_t\|_2}\cos\theta_t \Bigr|\leq q_{\text{max}}+\lambda , \\
 \|u_{t-1}\|_2^2\cos^2\theta_t  &\text{otherwise}.
\end{cases}
\end{equation}
Again, by the same discussion after \eqref{lemma-diff-eq2} in \cref{lemma-diff}, we have 
$\|u_t\|_2^2 -\|u_{t-1}\|_2^2= \|P_{X_t}(u_t)\|_2^2-\|u_{t-1}\|_2^2\cos^2\theta_t$.
Replacing $\|P_{X_t}(u_t)\|_2^2$ with its upper bounds in \eqref{lemma-diff-soft-eq5}, we obtain \eqref{lemma-diff-soft-eq}.
\end{proof}

\begin{lemma}\label{lemma-bound-soft}
Let $\mathcal{A}$ be one of the alphabets defined in \eqref{eq-alphabet-midtread} with step size $\delta>0$, and the largest element $q_{\max}$. Suppose that $w\in\R^{N_0}$  satisfies $\|w\|_\infty\leq q_{\mathrm{max}}$, and consider the quantization scheme  given by \eqref{eq-update-rule-single-soft}. Additionally, denote the information of the first $t-1$ quantization steps by a $\sigma$-algebra $\mathcal{F}_{t-1}$, and let $\beta, \eta>0$, $s^2\in (0,1)$. Then the following results hold for $t=1,2,\ldots,N_0$.
\begin{enumerate} 
    \item 
    $\E e^{\eta\|u_{t}\|_2^2} \leq \max\Bigl\{
    \E(e^{\frac{\eta(2\lambda+\delta)^2}{4}\|X_t\|_2^2}e^{\eta\|u_{t-1}\|_2^2(1-\cos^2\theta_t)}),
    \E e^{\eta\|u_{t-1}\|_2^2} \Bigr\} $. 
    \item
    $\E(e^{\eta\beta\|u_{t-1}\|_2^2(1-\cos^2\theta_t)}\mid \mathcal{F}_{t-1}) \leq 
    -\E(\cos^2\theta_t\mid\mathcal{F}_{t-1})(e^{\eta\beta\|u_{t-1}\|_2^2}-1)+e^{\eta\beta\|u_{t-1}\|_2^2}$.
\end{enumerate}
Here, $\theta_t$ is the angle between $X_t$ and $u_{t-1}$.
\end{lemma}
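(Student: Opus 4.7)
The plan is to parallel the proof of Lemma~\ref{lemma-bound}, substituting the per-step increment bound from Lemma~\ref{lemma-diff-soft} in place of the one from Lemma~\ref{lemma-diff}. This substitution is natural because Lemma~\ref{lemma-diff-soft} has the same structural form as Lemma~\ref{lemma-diff}, namely a case distinction according to whether the pre-quantization argument saturates the enlarged active range $[-q_{\max}-\lambda, q_{\max}+\lambda]$; the only quantitative change is that $\delta^2/4$ is replaced by $(2\lambda+\delta)^2/4$ in the non-saturating case. Since the statement of Lemma~\ref{lemma-bound-soft} differs from Lemma~\ref{lemma-bound} only in precisely this same way in part (1), and not at all in part (2), we expect the adaptation to be essentially mechanical.

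For part (1), I would start from the case distinction in Lemma~\ref{lemma-diff-soft}: multiply both sides by $\eta>0$, exponentiate, and then multiply by $e^{\eta\|u_{t-1}\|_2^2}$. In the first (active-range) case this yields
$$e^{\eta\|u_t\|_2^2}\leq e^{\frac{\eta(2\lambda+\delta)^2}{4}\|X_t\|_2^2}\,e^{\eta\|u_{t-1}\|_2^2(1-\cos^2\theta_t)},$$
while in the second (saturating) case it yields $e^{\eta\|u_t\|_2^2}\leq e^{\eta\|u_{t-1}\|_2^2}$. Taking expectations and bounding by the maximum of the two right-hand sides, exactly as in the derivation of \eqref{lemma-bound-eq2} and \eqref{lemma-bound-eq3}, gives (1).

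For part (2), I would observe that the inequality involves only the random variable $\cos^2\theta_t\in[0,1]$ together with the $\mathcal{F}_{t-1}$-measurable quantity $\|u_{t-1}\|_2^2$, and makes no reference to the quantizer $\mathcal{Q}$ or the thresholding function. Hence the convexity argument in Lemma~\ref{lemma-bound} (2) transfers verbatim: conditional on $\mathcal{F}_{t-1}$ the function $f(x)=e^{\eta\beta x\|u_{t-1}\|_2^2}$ is convex, and writing $1-\cos^2\theta_t = \cos^2\theta_t\cdot 0+(1-\cos^2\theta_t)\cdot 1$ gives $f(1-\cos^2\theta_t)\leq \cos^2\theta_t+(1-\cos^2\theta_t)e^{\eta\beta\|u_{t-1}\|_2^2}$; taking conditional expectations and using linearity completes the bound.

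The genuine difficulty has already been discharged in Lemma~\ref{lemma-diff-soft}: one must verify that when the argument of $\mathcal{Q}\circ s_\lambda$ leaves the enlarged active range, the sign analysis forces the product $(\mathrm{I})(\mathrm{II})\leq 0$, so that $\|P_{X_t}(u_t)\|_2^2\leq \|u_{t-1}\|_2^2\cos^2\theta_t$ still holds despite the extra $\lambda$-shift introduced by soft thresholding. Once this saturation analysis is in hand, Lemma~\ref{lemma-bound-soft} is a routine repetition of the Lemma~\ref{lemma-bound} argument, with the sole arithmetic change being the replacement $\delta\mapsto 2\lambda+\delta$ in the squared factor multiplying $\|X_t\|_2^2$.
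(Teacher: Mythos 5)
Your proposal matches the paper's own proof, which likewise obtains part (1) by feeding the increment bound of \cref{lemma-diff-soft} (with $\delta^2/4$ replaced by $(2\lambda+\delta)^2/4$ and the active range enlarged to $q_{\max}+\lambda$) into the exponentiation-and-maximum argument of \cref{lemma-bound}, and notes that part (2) is identical since it never references the quantizer or thresholding. You also correctly identify that the substantive work lives in \cref{lemma-diff-soft}; nothing further is needed here.
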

\begin{proof}
Similar to \cref{lemma-bound}, the inequality (1) follows immediately from \eqref{lemma-diff-soft-eq}. The proof of part (2) is identical with the one in \cref{lemma-bound}.
\end{proof}

Now we are ready to prove \cref{thm-main-result-single-sparse} as follows.
\begin{proof}
The only difference between \cref{lemma-bound} and its analogue \cref{lemma-bound-soft} is that $\delta^2$ in \cref{lemma-bound} is replaced by $(2\lambda+\delta)^2$. Note that \cref{lemma-bound} was used in the proof of both \cref{thm-main-result-single} and \cref{thm-cluster} in which $\delta^2$ serves as a coefficient. Hence, by substituting $\delta^2$ with $(2\lambda+\delta)^2$, every step in the proof still works and thus \cref{thm-main-result-single-sparse} holds. 
\end{proof}

\subsection{Sparse GPFQ with Hard Thresholding}
Now we navigate to the error analysis for \eqref{eq-update-rule-single-hard}. Again, \cref{lemma-diff} and \cref{lemma-bound} are altered as follows. 
\begin{lemma}\label{lemma-diff-hard}
Let $\widetilde{\A}$ be one of the alphabets defined in \eqref{eq-alphabet-midtread-variant} with step size $\delta>0$, the largest element $q_{\text{max}}$, and threshold $\lambda\in(0,q_\mathrm{max})$. Let $\theta_t:=\angle(X_t, u_{t-1})$ be the angle between $X_t$ and $u_{t-1}$. Suppose that $w\in\R^{N_0}$ satisfies $\|w\|_\infty\leq q_{\mathrm{max}}$, and consider the quantization scheme given by \eqref{eq-update-rule-single-hard}. Then, for $t=1,2,\ldots,N_0$, we have 
\begin{equation}\label{lemma-diff-hard-eq}
\|u_t\|_2^2-\|u_{t-1}\|_2^2\leq 
\begin{cases}
\frac{\max\{2\lambda,\delta\}^2}{4}\|X_t\|_2^2-\|u_{t-1}\|_2^2\cos^2\theta_t &\text{if}\; \Bigl| w_t+\frac{\|u_{t-1}\|_2}{\|X_t\|_2}\cos\theta_t \Bigr|\leq q_{\text{max}}, \\
0 &\text{otherwise}.
\end{cases}  
\end{equation}
\end{lemma}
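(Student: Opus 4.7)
The plan is to follow the template of Lemmas~\ref{lemma-diff} and \ref{lemma-diff-soft} almost verbatim, isolating the single new ingredient: a case analysis for the quantization error of $\widetilde{\mathcal{Q}} \circ h_\lambda$. Writing $\alpha_t := w_t + \tfrac{\|u_{t-1}\|_2}{\|X_t\|_2}\cos\theta_t$ so that $q_t = \widetilde{\mathcal{Q}}\circ h_\lambda(\alpha_t)$, I would first recycle the derivations from the earlier lemmata to obtain the identity
\[
\|P_{X_t}(u_t)\|_2^2 = (\alpha_t - q_t)^2 \|X_t\|_2^2,
\]
together with the projection decomposition $\|u_t\|_2^2 - \|u_{t-1}\|_2^2 = \|P_{X_t}(u_t)\|_2^2 - \|u_{t-1}\|_2^2 \cos^2\theta_t$. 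It then remains only to bound $(\alpha_t - q_t)^2$ by the expressions claimed in \eqref{lemma-diff-hard-eq}.

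The case analysis splits into three parts. (i) If $|\alpha_t|\leq \lambda$, then $h_\lambda(\alpha_t)=0$, so $q_t=0$ and $(\alpha_t - q_t)^2 = \alpha_t^2 \leq \lambda^2$. (ii) If $\lambda < |\alpha_t|\leq q_\mathrm{max}$, then $h_\lambda(\alpha_t)=\alpha_t$ falls in the non-saturated range of $\widetilde{\mathcal{Q}}$; since $\widetilde{\A}\setminus\{0\} = \{\pm(\lambda+k\delta):0\leq k\leq K\}$ forms a $\delta$-spaced grid starting at $\pm\lambda$, the definition \eqref{eq:MSQ-midtread-variant} forces $|\alpha_t - q_t|\leq \delta/2$. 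Combining (i) and (ii) yields $(\alpha_t - q_t)^2 \leq \max\{2\lambda,\delta\}^2/4$ on $\{|\alpha_t|\leq q_\mathrm{max}\}$. (iii) If $|\alpha_t|> q_\mathrm{max}$, the quantizer saturates at $\sign(\alpha_t) q_\mathrm{max}$, and the sign argument from Lemma~\ref{lemma-diff} applied to the algebraic identity
\[
(\alpha_t - q_t)^2 - \frac{\|u_{t-1}\|_2^2}{\|X_t\|_2^2}\cos^2\theta_t = (w_t - q_t)\Bigl(w_t + \tfrac{2\|u_{t-1}\|_2}{\|X_t\|_2}\cos\theta_t - q_t\Bigr)
\]
goes through unchanged: it relies only on $|w_t|\leq q_\mathrm{max}$ and on the saturation of $q_t$, so the right-hand side is nonpositive, giving $(\alpha_t - q_t)^2 \leq (\|u_{t-1}\|_2^2/\|X_t\|_2^2)\cos^2\theta_t$ and hence $\|u_t\|_2^2 - \|u_{t-1}\|_2^2 \leq 0$. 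Substituting the case bounds into the projection decomposition yields \eqref{lemma-diff-hard-eq}.

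The main obstacle, which is modest, lies in case (ii): one must verify that the isolated point $0\in\widetilde{\A}$ does not disturb the standard $\delta/2$ bound for $\alpha_t$ just above $\lambda$, and confirm that the union with case (i) produces exactly the constant $\max\{2\lambda,\delta\}/2$ rather than something worse such as $\lambda + \delta/2$. Once Lemma~\ref{lemma-diff-hard} is established, an analogue of Lemma~\ref{lemma-bound-soft} for the hard-thresholding scheme is immediate by inserting \eqref{lemma-diff-hard-eq} into the moment-generating-function calculation, and part~(b) of both \cref{thm-main-result-single-sparse} and \cref{thm-cluster-sparse} then follow by substituting $\max\{2\lambda,\delta\}^2$ for $\delta^2$ throughout the existing proofs of \cref{thm-main-result-single} and \cref{thm-cluster}.
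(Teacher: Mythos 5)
Your proposal is correct and follows essentially the same route as the paper: the projection identity and decomposition are recycled from Lemma~\ref{lemma-diff}, the saturated case $|\alpha_t|>q_\mathrm{max}$ is reduced to the sign argument there (valid since $h_\lambda$ is the identity beyond $q_\mathrm{max}>\lambda$), and the non-saturated case yields $|\alpha_t-q_t|\leq\max\{\lambda,\delta/2\}$. Your explicit split into the subcases $|\alpha_t|\leq\lambda$ and $\lambda<|\alpha_t|\leq q_\mathrm{max}$ just spells out what the paper summarizes as the argument lying in the active range of the quantizer, and your worry about the isolated point $0$ resolves exactly as you suspect.
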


\begin{proof}
By applying exactly the same argument as in \cref{lemma-diff}, we obtain  
\begin{equation}\label{lemma-diff-hard-eq1}
\|P_{X_t}(u_t)\|_2^2= \Bigl( w_t+\frac{\|u_{t-1}\|_2}{\|X_t\|_2}\cos\theta_t -q_t\Bigr)^2\|X_t\|_2^2.
\end{equation}
where $|w_t|\leq q_{\text{max}}$ and $q_t=\mathcal{Q}\circ  h_\lambda\Bigl(w_t+\frac{\|u_{t-1}\|_2}{\|X_t\|_2}\cos\theta_t \Bigr)$. Due to $\lambda\in(0, q_\mathrm{max})$, we have  $\mathcal{Q}\circ h_\lambda(z) = \mathcal{Q}(z)$ for $|z|>q_\mathrm{max}$. Thus, it follows from the discussion in \cref{lemma-diff} that 
\begin{equation}\label{lemma-diff-hard-eq3}
\Bigl( w_t+\frac{\|u_{t-1}\|_2}{\|X_t\|_2}\cos\theta_t  -q_t\Bigr)^2\leq\Bigl(\frac{\|u_{t-1}\|_2}{\|X_t\|_2}\cos\theta_t \Bigr)^2 
\end{equation}
when $\Bigl| w_t+\frac{\|u_{t-1}\|_2}{\|X_t\|_2}\cos\theta_t \Bigr|> q_{\text{max}}$. 

Now, assume that $\Bigl| w_t+\frac{\|u_{t-1}\|_2}{\|X_t\|_2}\cos\theta_t \Bigr|\leq q_{\text{max}}$. In this case, because the argument of $\mathcal{Q}$ lies in the active range of $\mathcal{A}$, we obtain 
\begin{equation}\label{lemma-diff-hard-eq4} 
\Bigl( w_t+\frac{\|u_{t-1}\|_2}{\|X_t\|_2}\cos\theta_t -q_t\Bigr)^2 
\leq \max\Bigl\{\lambda, \frac{\delta}{2}\Bigr\}^2. 
\end{equation}
Applying \eqref{lemma-diff-hard-eq3} and \eqref{lemma-diff-hard-eq4} to \eqref{lemma-diff-hard-eq1}, one can get 
\begin{equation}\label{lemma-diff-hard-eq5}
    \|P_{X_t}(u_t)\|_2^2\leq 
\begin{cases}
\frac{\max\{2\lambda, \delta\}^2}{4}\|X_t\|_2^2 &\text{if}\; \Bigl| w_t+\frac{\|u_{t-1}\|_2}{\|X_t\|_2}\cos\theta_t \Bigr|\leq q_{\text{max}}, \\
 \|u_{t-1}\|_2^2\cos^2\theta_t  &\text{otherwise}.
\end{cases}
\end{equation}
Again, by the same discussion after \eqref{lemma-diff-eq2} in \cref{lemma-diff}, we have 
$\|u_t\|_2^2 -\|u_{t-1}\|_2^2= \|P_{X_t}(u_t)\|_2^2-\|u_{t-1}\|_2^2\cos^2\theta_t$.
Replacing $\|P_{X_t}(u_t)\|_2^2$ with its upper bounds in \eqref{lemma-diff-hard-eq5}, we obtain \eqref{lemma-diff-hard-eq}.
\end{proof}

\begin{lemma}\label{lemma-bound-hard}
Let $\widetilde{\mathcal{A}}$ be one of the alphabets defined in \eqref{eq-alphabet-midtread-variant} with step size $\delta>0$, the largest element $q_{\max}$ and $\lambda\in(0,q_\mathrm{max})$. Suppose that $w\in\R^{N_0}$  satisfies $\|w\|_\infty\leq q_{\mathrm{max}}$, and consider the quantization scheme  given by \eqref{eq-update-rule-single-hard}. Additionally, denote the information of the first $t-1$ quantization steps by a $\sigma$-algebra $\mathcal{F}_{t-1}$, and let $\beta, \eta>0$, $s^2\in (0,1)$. Then the following results hold for $t=1,2,\ldots,N_0$.
\begin{enumerate} 
    \item 
    $\E e^{\eta\|u_{t}\|_2^2} \leq \max\Bigl\{
    \E(e^{\frac{\eta\max\{2\lambda,\delta\}^2}{4}\|X_t\|_2^2}e^{\eta\|u_{t-1}\|_2^2(1-\cos^2\theta_t)}),
    \E e^{\eta\|u_{t-1}\|_2^2} \Bigr\} $. 
    \item
    $\E(e^{\eta\beta\|u_{t-1}\|_2^2(1-\cos^2\theta_t)}\mid \mathcal{F}_{t-1}) \leq 
    -\E(\cos^2\theta_t\mid\mathcal{F}_{t-1})(e^{\eta\beta\|u_{t-1}\|_2^2}-1)+e^{\eta\beta\|u_{t-1}\|_2^2}$.
\end{enumerate}
Here, $\theta_t$ is the angle between $X_t$ and $u_{t-1}$.
\end{lemma}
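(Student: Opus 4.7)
The plan is to mirror the proof of Lemma~\ref{lemma-bound} (and its soft-thresholding analog Lemma~\ref{lemma-bound-soft}) almost verbatim, with the key ingredient being the step-size upper bound from Lemma~\ref{lemma-diff-hard} in place of Lemma~\ref{lemma-diff}. Since Lemma~\ref{lemma-diff-hard} already replaces the $\frac{\delta^2}{4}\|X_t\|_2^2$ term by $\frac{\max\{2\lambda,\delta\}^2}{4}\|X_t\|_2^2$ (keeping the ``otherwise'' branch zero), the rest of the proof is formally identical.

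For part (1), I would start from the deterministic bound in Lemma~\ref{lemma-diff-hard}, and split on the event
\[
E_t := \Bigl\{ \Bigl|w_t + \tfrac{\|u_{t-1}\|_2}{\|X_t\|_2}\cos\theta_t\Bigr| \leq q_{\max}\Bigr\}.
\]
On $E_t$ one gets $\|u_t\|_2^2 - \|u_{t-1}\|_2^2 \leq \tfrac{\max\{2\lambda,\delta\}^2}{4}\|X_t\|_2^2 - \|u_{t-1}\|_2^2\cos^2\theta_t$, so
\[
\E e^{\eta \|u_t\|_2^2} \mathbbm{1}_{E_t} \leq \E\Bigl(e^{\frac{\eta\max\{2\lambda,\delta\}^2}{4}\|X_t\|_2^2}\, e^{\eta\|u_{t-1}\|_2^2(1-\cos^2\theta_t)}\Bigr),
\]
while on $E_t^c$ one has $\|u_t\|_2^2 \leq \|u_{t-1}\|_2^2$ and hence $\E e^{\eta\|u_t\|_2^2}\mathbbm{1}_{E_t^c} \leq \E e^{\eta\|u_{t-1}\|_2^2}$. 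Upper bounding $\E e^{\eta\|u_t\|_2^2}$ by the maximum of the two quantities (as done in the analogous Lemma~\ref{lemma-bound}) yields the desired inequality.

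Part (2) requires no new work: the bound is entirely independent of the quantizer, since it only uses convexity of $x\mapsto e^{\eta\beta x\|u_{t-1}\|_2^2}$ together with the fact that $1-\cos^2\theta_t \in [0,1]$. One writes $1-\cos^2\theta_t = \cos^2\theta_t\cdot 0 + (1-\cos^2\theta_t)\cdot 1$, applies Jensen's inequality pointwise, and conditions on $\mathcal{F}_{t-1}$ (so that $\|u_{t-1}\|_2$ is fixed) to conclude exactly as in Lemma~\ref{lemma-bound}(2).

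There is essentially no obstacle here, since Lemma~\ref{lemma-diff-hard} has already absorbed the only place where the hard-thresholding scheme differs from the original GPFQ: the error $|w_t + \tfrac{\|u_{t-1}\|_2}{\|X_t\|_2}\cos\theta_t - q_t|$ is now bounded by $\max\{\lambda,\delta/2\}$ rather than $\delta/2$, reflecting the ``dead zone'' of width $2\lambda$ around $0$ in the alphabet $\widetilde{\A}=\A_K^{\delta,\lambda}$. Beyond invoking that lemma, the argument is a transcription of the earlier proofs, and the resulting bound feeds into the analogs of Theorems~\ref{thm-main-result-single-sparse}(b) and \ref{thm-cluster-sparse}(b) simply by replacing $\delta^2$ (respectively $(2\lambda+\delta)^2$) with $\max\{2\lambda,\delta\}^2$ throughout.
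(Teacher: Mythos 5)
Your proof is correct and follows essentially the same route as the paper, which simply observes that part (1) follows from \cref{lemma-diff-hard} and that part (2) is identical to \cref{lemma-bound}(2); your write-up is in fact more explicit than the paper's. The only caveat is that your case split on the random event $E_t$ inherits the same informality as the paper's own proof of \cref{lemma-bound} (bounding $\E e^{\eta\|u_t\|_2^2}$ by the maximum of the two expectations rather than by $\E$ of the pointwise maximum), so you match the paper exactly.
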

\begin{proof}
Similar to \cref{lemma-bound}, the inequality (1) follows immediately from \eqref{lemma-diff-hard-eq}. The proof of part (2) is identical with the one in \cref{lemma-bound}.
\end{proof}
The proof of \cref{thm-cluster-sparse} is given as follows.
\begin{proof}
The only difference between \cref{lemma-bound} and its analogue \cref{lemma-bound-hard} is that $\delta^2$ in \cref{lemma-bound} is replaced by $\max\{2\lambda+\delta\}^2$. Note that \cref{lemma-bound} was used in the proof of both \cref{thm-main-result-single} and \cref{thm-cluster} in which $\delta^2$ serves as a coefficient. Hence, by substituting $\delta^2$ with $\max\{2\lambda+\delta\}^2$, it is not hard to verify that \cref{thm-cluster-sparse} holds.
\end{proof}

\end{document}